\documentclass[11pt,a4paper]{article}
\usepackage[utf8]{inputenc}
\usepackage[T1]{fontenc}
\usepackage{lmodern}
\usepackage[english]{babel}
\usepackage[margin=2.5cm]{geometry} 
\usepackage{graphicx} 
\usepackage{booktabs} 
\usepackage{natbib} 
\usepackage{setspace} 
\usepackage{caption} 
\usepackage[colorlinks=true, citecolor=blue, linkcolor=blue, urlcolor=blue]{hyperref} 
\usepackage{subcaption} 
\usepackage{tabularx}
\usepackage{multirow}

\usepackage{kotex}
\usepackage{amsmath,amssymb,amsthm}
\usepackage{enumitem} 
\usepackage{algorithm}
\usepackage{algorithmic}
\usepackage{iftex}
\usepackage[table]{xcolor}
\usepackage[most]{tcolorbox}
\usepackage{tikz}
\usepackage{float}

\newtheorem{theorem}{Theorem}
\newtheorem{example}[theorem]{Example}

\newtheorem{lemma}[theorem]{Lemma}
\newtheorem{proposition}[theorem]{Proposition}
\newtheorem{remark}[theorem]{Remark}

\newtheorem{definition}[theorem]{Definition}
\newtheorem{assumption}{Assumption}

\newcommand{\argmax}{\mathop{\mathrm{argmax}}}

\newcommand{\E}{\mathbb{E}}

\newcommand{\KL}{\mathrm{KL}}
\newcommand{\TV}{d_{\mathrm{TV}}}
\newcommand{\Ham}{d_{\mathrm{Ham}}}

\newcommand{\Gap}{\Delta_\eta}

\DeclareMathOperator{\Var}{Var}

\setcitestyle{round}

\let\oldfootnote\footnote
\renewcommand{\footnote}{\fontsize{9}{11}\selectfont\oldfootnote}

\title{Privacy-Preserving Reinforcement Learning from Human Feedback via Decoupled Reward Modeling}
\author{
    Young Hyun Cho \\
    Purdue University \\
    \and
    Will Wei Sun\thanks{Corresponding author: sun244@purdue.edu} \\
    Purdue University
}
\date{} 

\begin{document}

\maketitle

\begin{abstract}
Preference-based fine-tuning has become an important component in training large language models, and the data used at this stage may contain sensitive user information. A central question is how to design a differentially private pipeline that is well suited to the distinct structure of reinforcement learning from human feedback. We propose a privacy-preserving framework that imposes differential privacy only on reward learning and derives the final policy from the resulting private reward model. Theoretically, we study the suboptimality gap and show that privacy contributes an additional additive term beyond the usual non-private statistical error. We also establish a minimax lower bound and show that the dominant term changes with sample size and privacy level, which in turn characterizes regimes in which the upper bound is rate-optimal up to logarithmic factors. Empirically, synthetic experiments confirm the scaling predicted by the theory, and experiments on the Anthropic HH-RLHF dataset using the Gemma-2B-IT model show stronger private alignment performance than existing differentially private baseline methods across privacy budgets.
\end{abstract}


\textbf{Keywords:} Large Language Models, Preference Fine-tuning, Differential Privacy, Reward Modeling, Sample Complexity.


\section{Introduction}\label{sec:intro}

Large language models (LLMs) are increasingly used as general-purpose tools across a growing range of downstream and domain-specific applications, including medicine, finance, law, and science \citep{nie2024survey,qin2024exploring,zheng2025large,zhou2026demystifying}. In practice, adapting a pretrained model to such settings typically relies on post-training or fine-tuning, which has become a standard part of modern LLM deployment \citep{ouyang2022training,bai2022training,chia2025post}. Preference-based LLM fine-tuning is one widely used form of this adaptation. Such fine-tuning is commonly carried out through reinforcement learning from human feedback (RLHF) \citep{christiano2017deep,stiennon2020learning,ouyang2022training}, where comparative feedback is used to improve a policy when reward feedback is not directly observed.

In particular, a typical LLM alignment pipeline begins with a pretrained model, obtains a reference policy through supervised fine-tuning (SFT) on expert-written or carefully curated instruction-response pairs, and then refines that policy using pairwise preferences over candidate responses, typically through RLHF. Because SFT data are costly to scale and may not fully capture the nuanced judgments needed for alignment, preference-based fine-tuning has become a common extension beyond SFT \citep{chia2025post}. Figure~\ref{fig:llm_pipeline} illustrates this canonical RLHF pipeline. Modern preference-based alignment also includes direct formulations such as direct preference optimization (DPO), which bypass explicit reward-model training and instead optimize the policy directly from preference data \citep{rafailov2024direct}.

\begin{figure}[t]
    \centering
    \includegraphics[width=0.82\linewidth]{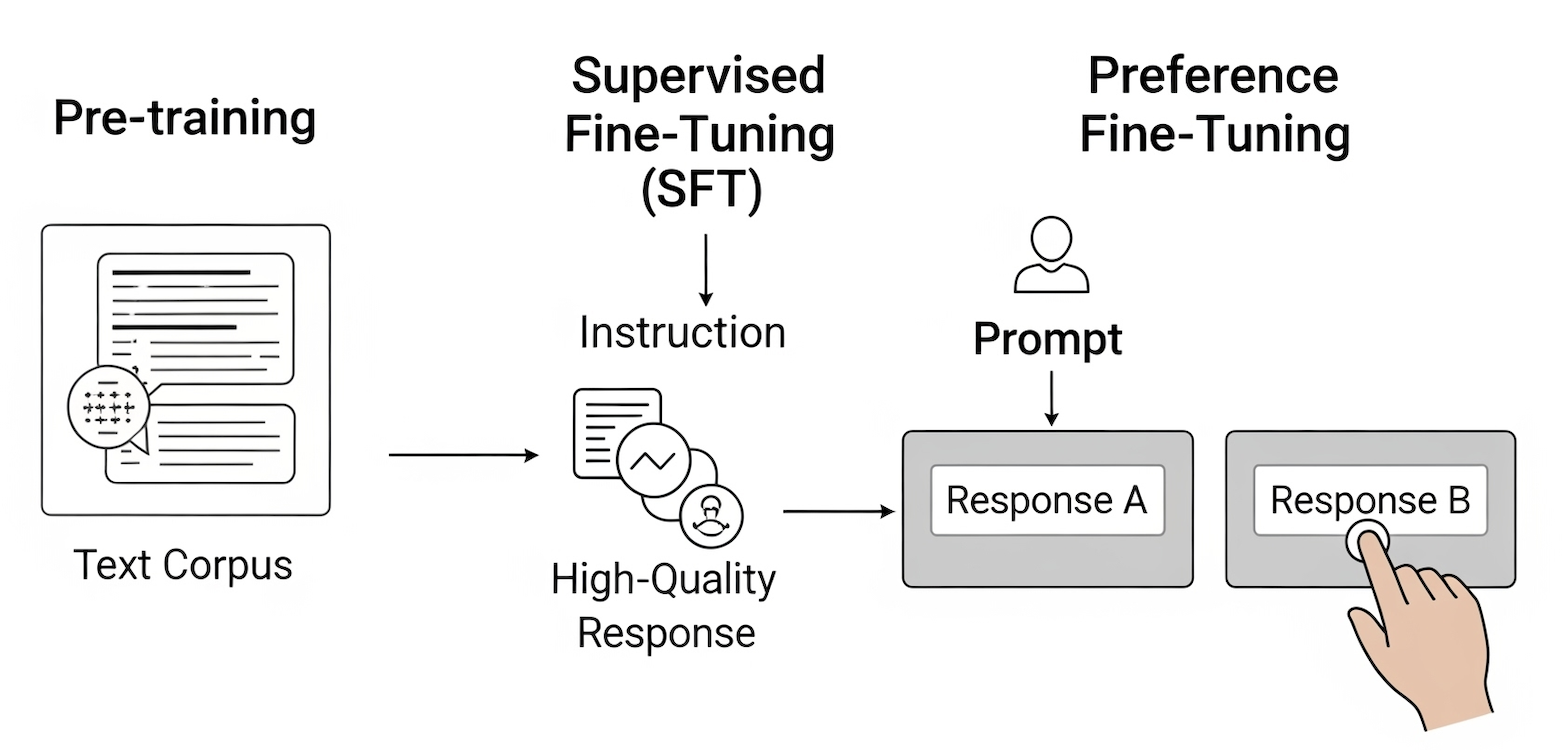}
    \caption{A typical large language model adaptation pipeline. We focus on privacy during the preference fine-tuning
    stage, where sensitive user interactions can be directly reflected in training records.}
    \label{fig:llm_pipeline}
\end{figure}

On the other hand, this preference-based fine-tuning pipeline can raise privacy concerns. Modern models are susceptible to training-data extraction attacks \citep{carlini2021extracting,nasr2023scalable} and membership inference attacks \citep{shokri2017membership,wu2025membership}, which can reveal whether a user’s data was used for training and, in some cases, expose training examples. In preference-based LLM fine-tuning beyond SFT, the training signal is not merely a scalar label, but a full interaction tuple \((x_i,a_i^1,a_i^2,y_i)\), where \(x_i\) is the user prompt, \((a_i^1,a_i^2)\) are candidate responses, and \(y_i\) is the associated preference label. Even when explicit identifiers are absent, the prompt \(x_i\) may contain sensitive or potentially identifying context, and the derived responses and label may also reveal private user information. Figure~\ref{fig:sensitive_dialogue} illustrates such an example. These concerns call for principled protection against leakage at the tuple level.

\begin{figure}[t]
    \centering
    \includegraphics[width=\linewidth]{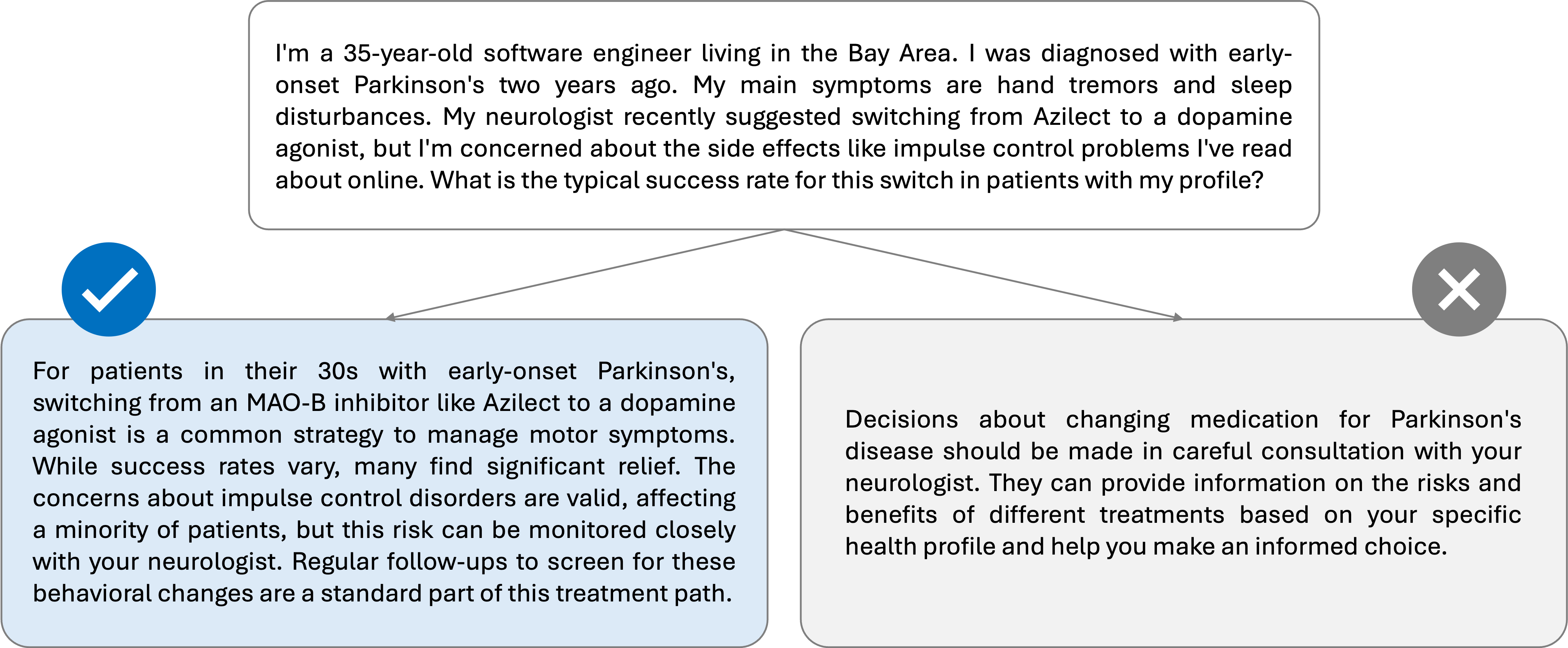}
    \caption{A sensitive interaction record. Even without direct identifiers, prompts can contain quasi-identifiers
    whose combination may re-identify an individual, motivating tuple-level protection of $(x_i,a_i^1,a_i^2,y_i)$.}
    \label{fig:sensitive_dialogue}
\end{figure}

Differential privacy (DP) \citep{dwork2006calibrating} provides a widely used formal framework for privacy-preserving data analysis and machine learning. A central question is what object should be protected by the privacy definition. Much of the existing literature adopts label-DP \citep{chowdhury2024differentially,zhang2025towards,teku2025props,wu2025offline}, which protects only the preference label and is therefore most naturally aligned with protecting the privacy of the preference-label annotator. Our focus is different. In the settings we study, the relevant target is the end-user, so privacy should apply to the full interaction tuple rather than to \(y_i\) alone. This motivates tuple-level privacy for \((x_i,a_i^1,a_i^2,y_i)\).

A key distinction from standard RL is that RLHF does not receive rewards directly from the environment. Instead, it must infer a latent reward signal from preference data, which introduces an additional reward-learning layer into the pipeline. Under DP, this distinction is consequential. Privacy can now enter at reward estimation, at downstream policy optimization, or at both stages. If privacy is imposed naively on such a multi-stage pipeline, noise can accumulate across stages, or the data or privacy budget must be divided across them. In this sense, the reward-learning layer in RLHF is not only a modeling feature but also a central channel through which privacy constraints affect utility.

This raises a second central question beyond what should be protected. One must also decide where privacy should enter the alignment pipeline so that downstream utility is preserved as much as possible. Existing directions already illustrate several possibilities, including introducing DP to DPO \citep{chen2025improved} and introducing DP to each stage of RLHF \citep{wu2024privately}. In this sense, existing methods largely arise by applying DP to existing non-private alignment pipelines. The remaining question is whether RLHF admits a privacy-aware design that uses its own structure more directly.

In this work, we take a different route. Rather than adding DP to each step of an existing alignment pipeline, we use the two-stage structure of RLHF itself to guide where privacy should enter. We develop a tuple-level private framework that places privacy on reward learning and treats downstream policy improvement as post-processing. We then study the resulting privacy--utility tradeoff through theory and numerical experiments. 

\subsection{Our Contributions}\label{subsec:contrib}
Our contributions are summarized as follows.
\begin{itemize}
    \item \textbf{Methodology.}
We propose a private RLHF framework that exploits a structural feature specific to RLHF, rather than simply adding DP to an existing alignment pipeline. Unlike standard RL, where the environment directly provides reward, RLHF first uses preference data to train a reward model and then updates a reference policy using that learned reward. Our framework places DP only on the reward-learning stage and treats downstream policy as post-processing of the resulting private reward model. Placing privacy on reward learning creates a buffer between DP noise and the downstream policy. In LLM fine-tuning, this avoids privatizing the policy itself, which can yield responses worse than the reference policy, and instead uses the private reward model only to re-rank reference-generated responses, without additional privacy cost.

\item \textbf{Theory.} We study the suboptimality gap of the policy induced by our framework and show that privacy enters as an additional additive term on top of the usual non-private statistical error. We also establish, to the best of our knowledge, the first minimax lower bound for this private RLHF problem. A key difficulty is that the dominant lower-bound term changes with the balance between sample size and privacy budget. We therefore characterize the regimes in which our method is rate-optimal up to logarithmic factors, including the case where the sample size is sufficiently large relative to the privacy scale for the optimal privacy-dominated rate to emerge.

\item \textbf{Empirical validation.} We validate the framework on synthetic preference-learning experiments and on an LLM alignment task based on the Anthropic HH-RLHF dataset \citep{bai2022training} using the Gemma-2B-IT model \citep{team2024gemma}. The synthetic results confirm the theoretically predicted scaling with sample size, privacy budget, and dimension, and show that our method substantially reduces the incidence of policies that underperform the reference policy relative to existing DP baseline methods. In the LLM experiment, our method demonstrates stronger private alignment performance than existing DP baseline methods across all privacy budgets considered.
\end{itemize}

\subsection{Related Work}\label{subsec:related_work}
We briefly review prior work on preference-based policy learning and its theoretical foundations, as well as DP for alignment.

\noindent\textbf{Preference-based policy learning and KL-regularized preference optimization.} One of the most widely used applications of RLHF is preference fine-tuning for LLMs \citep{christiano2017deep, stiennon2020learning, ouyang2022training, bai2022training,ye2025robust}. More recently, direct alignment methods have gained prominence by optimizing a closed-form objective that avoids explicit reward-model training and RL in the loop \citep{rafailov2024direct, garg2025ipo}. From a theoretical viewpoint, a growing line of work develops finite-sample guarantees of KL-regularized preference optimization and iterative procedures, clarifying when logged preference data suffices for reliable policy improvement \citep{xiong2024iterative, ye2024online, xiong2024iterative, song2024importance, zhao2024sharp}. Several recent works also revisit the role of the KL regularizer itself---including its interpretation, limitations, and variants \citep{huang2024correcting, aminian2025theoretical, liu2025rethinking, li2026towards}. Our work complements this theory by quantifying how DP affects the KL-regularized preference optimization.

\noindent\textbf{DP and privacy-preserving alignment.} DP is a \textit{de facto} standard in privacy-preserving framework in machine learning \citep{dwork2006calibrating}. A large literature studies optimal accuracy guarantees under DP for empirical risk minimization and stochastic convex optimization \citep{dwork2014algorithmic,chaudhuri2011differentially,kifer2012private,bassily2014private,wang2017differentially,bassily2019private}. Among many, the most widely used optimization method is noisy gradient-based training, popularized by DP-SGD and closely related mechanisms \citep{dwork2006calibrating,abadi2016deep,bu2020deep}. This line of work has also motivated refined privacy accounting frameworks and composition results that tighten end-to-end privacy loss in iterative training \citep{kairouz2015composition,mironov2017renyi,bun2016concentrated,dong2022gaussian}.

Building on these foundations, recent work has begun to study DP for preference-based alignment and RLHF. A first key distinction concerns the unit of protection. Much of the recent literature studies label-DP, which protects only the preference label while treating the prompt and candidate responses as public or non-sensitive. This formulation is naturally aligned with protecting annotator feedback, but it does not address leakage from the text content itself. This perspective underlies several recent works on private DPO and related offline alignment objectives \citep{zhang2025towards,zhou2025unified,zhou2025square,teku2025props}. In contrast, our focus is tuple-level privacy, where the prompt, candidate responses, and preference outcome may all carry sensitive information. 

A second distinction concerns how privacy is incorporated into the alignment pipeline. Most existing approaches begin with a non-private alignment framework and then introduce a DP mechanism on top of it. When the target is label-DP, a standard way to privatize the labels is randomized response \citep{warner1965randomized}, and several recent works follow this route \citep{zhang2025towards,zhou2025unified,zhou2025square,teku2025props}. When the target is tuple-level DP, privacy is typically enforced through noisy gradient-based training such as DP-SGD, including private adaptations of DPO and related RLHF procedures \citep{chen2025improved,wu2024privately,wu2025offline,chowdhury2024differentially,korkmaz2024learning}. Our contribution differs from these directions both methodologically and theoretically. We study a framework tailored to the distinct structure of RLHF, with privacy imposed only on reward learning, and we analyze the resulting suboptimality gap together with minimax lower bounds that characterize when the corresponding rates are optimal up to logarithmic factors.

\subsection{Paper organization and notation}\label{subsec:org-notation} 
The remainder of the paper is organized as follows. Section~\ref{sec: preliminary} introduces the preliminaries for our work, including DP and RLHF. Section~\ref{sec: proposed method} presents the proposed decoupled framework. In section~\ref{sec: theory}, we develop the theoretical analysis, deriving both upper and lower bounds on the suboptimality gap of the induced policy and identifying regimes in which these bounds match up to logarithmic factors. Section~\ref{sec: numerical} reports numerical studies on both synthetic examples and LLM fine-tuning experiments. Section~\ref{sec:discussion_conclusion} concludes with a discussion of the main implications, limitations, and directions for future work. Most proofs are deferred to the Supplementary Material Section~\ref{supp:proofs}, while straightforward arguments are given in the main text.

For notation, we use standard asymptotic order notation. For two positive sequences \(a_n\) and \(b_n\), \(a_n = O(b_n)\) means that \(a_n / b_n\) is uniformly bounded, and \(a_n = \Omega(b_n)\) means that \(a_n / b_n\) is bounded away from zero. We write \(a_n = \widetilde{O}(b_n)\) and \(a_n = \widetilde{\Omega}(b_n)\) when the corresponding relations hold up to polylogarithmic factors.

\section{Preliminaries}
\label{sec: preliminary}

In this section, we establish the background for our work. We begin by formally outlining the standard pipelines for preference fine-tuning, then introduce the background on DP.

\subsection{Reinforcement Learning from Human Feedback}
A widely used template for learning from preferences is KL-regularized reinforcement learning from human feedback (RLHF).
The key object is a reference policy $\pi_0$, and the goal is to improve decision making while preventing the updated policy from drifting too far from $\pi_0$ or overfitting to a limited preference dataset.
This viewpoint is especially prominent in large language model alignment, where preference fine-tuning is a standard stage after a strong base policy is obtained.

We consider an offline preference dataset originally recorded as \(\mathcal D=\{(x_i,a_i^1,a_i^2,y_i)\}_{i=1}^n\), where \(x_i\in\mathcal X\) denotes a context, such as a prompt in LLM applications, \(a_i^1,a_i^2\in\mathcal A\) are two candidate actions, and \(y_i\) indicates which candidate is preferred. For the theoretical development, it is more convenient to rewrite each record in ordered form as \((x_i,a_i^w,a_i^l)\), where \(a_i^w\in\mathcal A\) and \(a_i^l\in\mathcal A\) denote the preferred and non-preferred actions, respectively. A conventional modeling assumption is then the Bradley--Terry preference model \citep{bradley1952rank}.

\begin{definition}[Bradley--Terry Model]\label{def:bradley-terry-model}
For a context \(x\) and a preferred/non-preferred pair \((a^w,a^l)\), the probability that \(a^w\) is preferred to \(a^l\) is modeled as
\[
\mathbb{P}(a^w \succ a^l \mid x)
=
\frac{\exp(r^*(x,a^w))}{\exp(r^*(x,a^w))+\exp(r^*(x,a^l))}
=
\sigma\!\big(r^*(x,a^w)-r^*(x,a^l)\big),
\]
where \(\sigma(t)=(1+e^{-t})^{-1}\) is the sigmoid function.
\end{definition}

In KL-regularized RLHF, the reference policy \(\pi_0\) is treated as a strong baseline that encodes prior knowledge and safe behavior, and KL regularization controls the magnitude of the preference-driven update. In the LLM preference fine-tuning pipeline, a common choice of \(\pi_0\) is a model that has already been trained on a large supervised instruction-following dataset, often referred to as supervised fine-tuning, or SFT \citep{ouyang2022training}. This choice reflects the practical role of \(\pi_0\) as an anchor to a broadly competent response distribution, while preference data provide an additional signal that refines behavior without requiring the policy to relearn basic capabilities from scratch.

Under Definition~\ref{def:bradley-terry-model}, the standard reward-modeling phase estimates a parametric reward function \(r_\theta \in \mathcal{R}=\{r_\theta:\theta\in\Theta\}\) by maximizing the log-likelihood
\begin{equation}\label{equ:reward_loss}
\hat{\theta}
=
\argmax_{\theta \in \Theta}
\sum_{i=1}^{n}
\log \sigma \left( r_\theta(x_i, a_i^{w}) - r_\theta(x_i, a_i^{l}) \right).
\end{equation}

Once a reward estimator $\hat r$ is obtained, the goal is to derive a policy that achieves high reward while remaining close to a reference policy $\pi_0$, which is often the SFT model in LLM pipelines. For any reward function $r$ on $\mathcal X\times\mathcal A$, we evaluate a policy $\pi$ by the KL-regularized value
\begin{equation}\label{eq:value_def}
V_\eta(\pi; r)
=
\mathbb{E}_{x \sim d_0}\!\left[
\mathbb{E}_{a \sim \pi(\cdot\mid x)}\big[r(x, a)\big]
-\frac{1}{\eta}\mathrm{KL}\big(\pi(\cdot\mid x)\,\Vert\,\pi_0(\cdot\mid x)\big)
\right],
\end{equation}
where $d_0$ denotes the context distribution, and $\eta>0$ controls the strength of regularization. Let $\pi_r^\eta\in\arg\max_\pi V_\eta(\pi;r)$ denote the KL-regularized optimizer under $r$. The closed form solution is as follows:

\begin{lemma}[Policy Improvement Oracle]\label{lemma:pio}
For a fixed context $x$, reward $r$, and reference policy $\pi_0$, any maximizer $\pi_r^\eta\in\arg\max_\pi V_\eta(\pi;r)$ satisfies
\begin{equation}\label{equ:PIO}
\pi_{r}^{\eta}(a \mid x)
=
\frac{1}{Z_{r}(x)}\,\pi_{0}(a \mid x)\,\exp\!\big(\eta\,r(x,a)\big),
\end{equation}
where $Z_r(x)=\mathbb{E}_{a \sim \pi_0(\cdot\mid x)}\!\left[\exp\!\big(\eta r(x,a)\big)\right]$.
\end{lemma}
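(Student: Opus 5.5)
The plan is to use the Gibbs variational principle: rewrite the per-context objective as a negative KL divergence to the candidate policy, up to a constant that does not depend on $\pi$. Since $V_\eta(\pi;r)$ is an expectation over $x\sim d_0$ of a quantity that depends only on $\pi(\cdot\mid x)$, maximizing $V_\eta$ reduces to maximizing, for each fixed $x$ (for $d_0$-almost every $x$), the inner functional
\[
J_x(\pi)=\mathbb{E}_{a\sim\pi(\cdot\mid x)}[r(x,a)]-\frac{1}{\eta}\mathrm{KL}\big(\pi(\cdot\mid x)\,\Vert\,\pi_0(\cdot\mid x)\big).
\]
We restrict attention to $\pi(\cdot\mid x)\ll\pi_0(\cdot\mid x)$, since otherwise the KL term is $+\infty$ and $J_x(\pi)=-\infty$. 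We assume $Z_r(x)<\infty$, which holds automatically when $r$ is bounded, so that the candidate $\pi_r^\eta$ in \eqref{equ:PIO} is a well-defined probability distribution.

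The key algebraic step substitutes $\log \pi_0(a\mid x)=\log\pi_r^\eta(a\mid x)-\eta r(x,a)+\log Z_r(x)$, which is just \eqref{equ:PIO} rearranged, into the KL term:
\[
\frac{1}{\eta}\mathrm{KL}(\pi\Vert\pi_0)=\frac{1}{\eta}\mathbb{E}_{a\sim\pi}\Big[\log\frac{\pi(a\mid x)}{\pi_r^\eta(a\mid x)}\Big]-\mathbb{E}_{a\sim\pi}[r(x,a)]+\frac{1}{\eta}\log Z_r(x).
\]
This yields the identity
\[
J_x(\pi)=\frac{1}{\eta}\log Z_r(x)-\frac{1}{\eta}\mathrm{KL}\big(\pi(\cdot\mid x)\,\Vert\,\pi_r^\eta(\cdot\mid x)\big).
\]
The first term is independent of $\pi$. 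By Gibbs' inequality, the KL term is nonnegative and vanishes if and only if $\pi(\cdot\mid x)=\pi_r^\eta(\cdot\mid x)$. Hence $J_x$ is uniquely maximized at $\pi_r^\eta(\cdot\mid x)$, with maximum value $\eta^{-1}\log Z_r(x)$.

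To conclude, integrate over $x\sim d_0$. Any maximizer of $V_\eta$ must agree with $\pi_r^\eta$ for $d_0$-almost every $x$: if it differed on a set of positive $d_0$-measure, replacing it there by $\pi_r^\eta$ would strictly increase $V_\eta$. This gives \eqref{equ:PIO}, interpreted up to $d_0$-null sets, as the lemma's phrase ``for a fixed context $x$'' indicates.

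The main obstacle is not the algebra but the measure-theoretic bookkeeping. For general (possibly continuous) $\mathcal A$, one must justify splitting the expectations, which needs $\mathbb{E}_\pi[r]$ to be finite. One also needs to handle policies with infinite KL, and to make sure the pointwise maximization is consistent with a measurable policy. I would handle these by assuming $r$ is bounded, or more generally that $\mathbb{E}_{\pi}[r]<\infty$ whenever $\mathrm{KL}(\pi\Vert\pi_0)<\infty$, which is the standard Donsker--Varadhan setting. The minimizer $\pi_r^\eta$ is explicitly measurable in $x$, so measurability of the optimal policy is not an issue.
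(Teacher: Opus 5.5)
Your argument is correct and is the standard Gibbs-variational route. The paper states this lemma without proof, but the identity you derive, $J_x(\pi)=\frac{1}{\eta}\log Z_r(x)-\frac{1}{\eta}\mathrm{KL}\big(\pi(\cdot\mid x)\,\Vert\,\pi_r^\eta(\cdot\mid x)\big)$, is exactly the KL representation of the gap that the paper later invokes in the proof of Theorem~\ref{thm:minimax-lb-main}. One slip to fix: in your intermediate display for $\frac{1}{\eta}\mathrm{KL}(\pi\Vert\pi_0)$, the last two terms have the wrong signs and should read $+\mathbb{E}_{a\sim\pi}[r(x,a)]-\frac{1}{\eta}\log Z_r(x)$, after which your final identity follows as stated.
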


Although $\pi_r^\eta$ admits a closed-form expression, exact sampling from this policy is typically infeasible when the action space is large. The expression involves a normalization term that aggregates $\exp(\eta r(x,a))$ over the action space, which is computationally prohibitive in LLM settings where actions correspond to long token sequences. This motivates practical methods that approximate the KL-regularized optimizer through parameterized learning.

A common approach is to optimize a parameterized policy with reinforcement learning algorithms such as proximal policy optimization (PPO) \citep{schulman2017proximal}, which targets the KL-regularized objective while avoiding explicit normalization over $\mathcal A$. Since PPO does not appear explicitly in the main development, we defer a brief background discussion to Appendix~\ref{app:ppo_background}. An alternative is Direct Preference Optimization (DPO) \citep{rafailov2024direct}, which leverages the identity implied by \eqref{equ:PIO}. In particular, for $\pi_r^\eta$ one can write
\[
r(x,a)
=
\frac{1}{\eta}\log\frac{\pi_r^\eta(a\mid x)}{\pi_0(a\mid x)}
+\frac{1}{\eta}\log Z_r(x).
\]
Substituting this relation into the pairwise preference likelihood \eqref{equ:reward_loss} eliminates the unknown $Z_r(x)$ and yields a supervised objective over policy parameters
\begin{equation}\label{equ:dpo_loss}
\mathcal{L}_{\mathrm{DPO}}(\theta; \mathcal{D}, \pi_{0})
=
- \sum_{(x, a^w, a^l) \in \mathcal{D}}
\log \sigma \left(
\frac{1}{\eta} \log \frac{\pi_\theta(a^w \mid x)}{\pi_0(a^w \mid x)}
-
\frac{1}{\eta} \log \frac{\pi_\theta(a^l \mid x)}{\pi_0(a^l \mid x)}
\right).
\end{equation}
The primary appeal of DPO and its variants \citep{azar2024general, ethayarajh2024kto, xu2024contrastive, meng2024simpo} lies in their ability to avoid the computational complexities and training instabilities of RL frameworks. By formulating the process as a single-stage supervised learning, these methods eliminate the need for iterative sampling and complex hyperparameter tuning required by RL-based optimization. However, as we discuss in Section~\ref{sec: proposed method}, directly imposing DP on the policy parameters in \eqref{equ:dpo_loss} introduces fundamental challenges.

Our analysis focuses on the policy quality under the true reward $r^\ast$. Let $\pi_{r^\ast}^\eta\in\arg\max_\pi V_\eta(\pi;r^\ast)$.
We quantify the $\eta$-regularized suboptimality of a candidate policy $\pi$ by
\begin{equation}\label{eq:gap_def}
\Delta_\eta(\pi)
=
V_\eta(\pi_{r^\ast}^{\eta}; r^\ast) - V_\eta(\pi; r^\ast).
\end{equation}
In the proposed decoupled framework, $\pi$ will be induced by a private reward estimate, and our theory controls $\Delta_\eta(\pi)$ in finite samples.

\subsection{Differential Privacy}

Informally, a mechanism $M$ is DP if the distribution of its output $M(D)$ is nearly indistinguishable from that of $M(D')$ for any adjacent dataset $D'$, where $D$ and $D'$ differ by a single entry. We formalize the notion of adjacent as their Hamming distance is one.

Among various characterizations of DP--such as R{\'e}nyi DP \citep{mironov2017renyi} or Gaussian DP \citep{dong2022gaussian}--aimed at achieving tighter privacy accounting, the following $(\varepsilon, \delta)$-DP remains the most prevalent and standard definition, and thus we also adopt the following:

\begin{definition}[\((\varepsilon,\delta)\)-Differential Privacy \citep{dwork2006calibrating}]\label{def: eps del dp}
A mechanism $M$ is \((\varepsilon,\delta)\)-differentially private if, for any two adjacent \(D, D'\) and for any measurable event \(E\),
\[
\mathbb{P}\big(M(D) \in E\big) \leq e^{\varepsilon}\,\mathbb{P}\big(M(D') \in E\big) + \delta.
\]
When \(\delta = 0\), the mechanism $M$ is said to satisfy \(\varepsilon\)-differential privacy.
\end{definition}

DP protects individuals by requiring that an algorithm produce similar output distributions on adjacent datasets that differ in a single record. This similarity limits the influence of any one data point on what is released, so an observer cannot reliably infer whether that record was included. The $(\varepsilon,\delta)$ definition measures similarity through a log-likelihood ratio control between the two output distributions for every measurable event. The parameter $\varepsilon$ bounds the magnitude of this log-likelihood ratio, while $\delta$ allows a small probability mass on which the bound may be violated, which can be viewed as an $\varepsilon$-DP guarantee holding with probability at least $1-\delta$. Smaller values of $\varepsilon$ and $\delta$ therefore correspond to stronger privacy protection, and $\delta$ is typically chosen to be extremely small in practice, since it quantifies the probability of a rare failure event the privacy protection may not hold.

A key appeal of DP is that its guarantees are accompanied by explicit parameters that can be tracked across an entire pipeline, which makes it possible to quantify the overall level of protection. In particular, DP enjoys the following properties:

\begin{itemize}
    \item \textbf{Post-processing:}
    If $M$ is $(\varepsilon,\delta)$-DP, then for any mapping $\mathrm{Proc}$ independent of the data,
    the post-processed mechanism $\mathrm{Proc}\circ M$ is also $(\varepsilon,\delta)$-DP.
    
    \item \textbf{Sequential composition (basic):}
Let $M_1:\mathcal{X}\to\mathcal{Y}_1$ and  $M_i:\mathcal{X}\times\mathcal{Y}_1\times\cdots\times\mathcal{Y}_{i-1}\to\mathcal{Y}_i,$ for $i=2,\ldots,k$. Let $M(D):=\bigl(M_1(D),\,M_2(D,y_1),\,\ldots,\,M_k(D,y_1,\ldots,y_{k-1})\bigr)$ denote the joint mechanism, where the outputs are generated recursively.
If, for every fixed previous output history, each step is
$(\varepsilon_i,\delta_i)$-DP, then $M$ is $\Bigl(\sum_{i=1}^k \varepsilon_i,\ \sum_{i=1}^k \delta_i\Bigr)\text{-DP}$.
    
    \item \textbf{Parallel composition:}
    Let $D$ be partitioned into disjoint subsets $D_1,\ldots,D_k$,
    and let $M_i$ be an $(\varepsilon_i,\delta_i)$-DP mechanism applied only to $D_i$.
    Define the joint mechanism $M(D):=\big(M_1(D_1),\ldots,M_k(D_k)\big)$.
    Then $M$ is $\Big(\max_{1\le i\le k}\varepsilon_i,\ \max_{1\le i\le k}\delta_i\Big)$-DP.
\end{itemize}

We present the basic sequential composition rule only for illustration, and the main point is that privacy parameter accumulates across multiple data-dependent releases. In practice, one can obtain tighter accounting by using refined composition theorems and privacy definitions designed for sharper tracking of cumulative loss, such as R\'enyi DP or Gaussian DP. We do not pursue these refinements here since our focus is on the pipeline-level design and on policy-quality guarantees, rather than on the tightest possible accounting constants.

Constructing a DP mechanism typically involves injecting noise, calibrated to the \textit{sensitivity}, the maximum impact of a single data point on the output. In machine learning where gradient-based optimization is standard, the contribution of an individual data is its respective gradient. Therefore, a natural pathway to DP is to inject noise into the gradient updates, leading to the DP-stochastic gradient descent (DP-SGD) \citep{abadi2016deep}.

Concretely, let $\ell(\theta; z)$ denote a per-example loss function and $\Theta \subset \mathbb{R}^d$ be a parameter space. A standard minibatch DP-SGD update is defined as:
\begin{equation}\label{eq:dpsgd_update}
\theta_{t+1} = \Pi_{\Theta} \left( \theta_t - \eta_t \left( \frac{1}{|B_t|} \sum_{i \in B_t} \text{clip}(\nabla_\theta \ell(\theta_t; Z_i), C) + \xi_t \right) \right), \quad \xi_t \sim \mathcal{N}(0, \sigma_{\mathrm{DP}}^2 C^2 I_d),
\end{equation}
where $B_t \subset [n]$ is a minibatch, $\Pi_{\Theta}$ denotes the projection onto the set $\Theta$, and $\text{clip}(g, C) := g \cdot \min\{1, C/\|g\|_2\}$ scales each gradient to ensure a uniform $\ell_2$-sensitivity bound $C$. While sensitivity may not be inherently bounded or analytically tractable--particularly for black-box models-- this gradient clipping ensures that the sensitivity is always bounded by $C$, which is then used to calibrate the noise. As such, gradient clipping serves as a key tool that provides the flexibility to satisfy DP for arbitrary differentiable architectures.


Other optimization approaches are primarily distinguished by the stage at which noise is introduced, such as objective perturbation or output perturbation \citep{chaudhuri2011differentially}. For large-scale tasks, however, DP-SGD remains the \textit{de facto} standard and the most widely used scalable paradigm in practice, supported by implementations such as the \texttt{Opacus} library in Python \citep{yousefpour2021opacus}.

\section{Proposed Method}\label{sec: proposed method}


\subsection{Motivation: Challenges in Private Policy Optimization}\label{sec:motivation}

The prevailing paradigms in finetuning, such as DPO and PPO, optimize the policy parameters \(\pi_\theta\) directly. We defer a more detailed description of PPO to Appendix~\ref{app:offline_ppo}, and focus here on the common issue that arises when DP is imposed on policy optimization.

The fundamental difficulty stems from the inherent incompatibility between the unbounded nature of policy gradients and the sensitivity required by DP. In frameworks like PPO or DPO, the gradient involves the score function 
\begin{align*}
\nabla_\theta \log \pi_\theta(a \vert x) = \frac{\nabla \pi_{\theta}(a \vert x)}{\pi_{\theta}(a \vert x)},    
\end{align*}
which lacks a uniform upper bound since $\pi_\theta(a\vert x)$ can be arbitrarily close to zero, making the ratio arbitrarily large. As these policy gradients can be arbitrarily large, the aforementioned clipping during the gradient updates are necessary.

\begin{example}[DPO]\label{ex:dpo-dp}
Consider the DPO objective in~\eqref{equ:dpo_loss}. For a single preference pair $(x,a^w,a^l)$, define
\[
z_\theta(x,a^w,a^l)
:=
\frac{1}{\eta}\log \frac{\pi_\theta(a^w \vert x)}{\pi_0(a^w \vert x)}
-
\frac{1}{\eta}\log \frac{\pi_\theta(a^l \vert x)}{\pi_0(a^l \vert x)}.
\]
The per-pair loss is $\ell(\theta)=-\log\sigma\!\big(z_\theta(x,a^w,a^l)\big)$. Since $\pi_0$ is fixed, the gradient is
\[
\nabla_\theta \ell(\theta)
=
-\big(1-\sigma(z_\theta)\big)\,\nabla_\theta z_\theta
=
-\frac{1}{\eta}\big(1-\sigma(z_\theta)\big)
\Big(\nabla_\theta \log \pi_\theta(a^w\mid x)-\nabla_\theta \log \pi_\theta(a^l\mid x)\Big).
\]
This expression depends on the score function $\nabla_\theta\log\pi_\theta(a\mid x)$, which generally admits no uniform bound because $\pi_\theta(a\mid x)$ can be arbitrarily small. As a result, per-example gradients can be arbitrarily large under standard policy parameterizations, and DP training therefore requires explicit sensitivity control through per-example gradient clipping.
\end{example}

\begin{example}[PPO-style policy optimization]\label{ex:ppo-dp}
A similar issue arises for PPO-style policy optimization. At a high level, PPO uses an advantage signal to indicate whether a sampled action should become more likely or less likely under the updated policy, while constraining the update so that the policy does not move too far in a single step. Here the advantage \(A\) is a scalar quantity summarizing how favorable action \(a\) is at context \(x\) relative to a baseline. The PPO-style ratio
\[
\rho_\theta(x,a)
:=
\frac{\pi_\theta(a\mid x)}{\pi_{\mathrm{ref}}(a\mid x)}
=
\exp\!\Big(
\log \pi_\theta(a\mid x)-\log \pi_{\mathrm{ref}}(a\mid x)
\Big)
\]
measures how much the updated policy reweights action \(a\) relative to a fixed reference policy \(\pi_{\mathrm{ref}}\).
Given a parameter \(\varepsilon_{\mathrm{clip}}>0\), the clipped surrogate takes the form
\[
\ell_{\mathrm{PPO}}(\theta)
=
-\min\!\Big\{
\rho_\theta(x,a)\,A,\;
\mathrm{clip}\!\big(\rho_\theta(x,a),1-\varepsilon_{\mathrm{clip}},1+\varepsilon_{\mathrm{clip}}\big)\,A
\Big\}.
\]
Whenever the unclipped branch is active, differentiating with respect to \(\theta\) yields
\[
\nabla_\theta \rho_\theta(x,a)
=
\rho_\theta(x,a)\,\nabla_\theta \log \pi_\theta(a\mid x),
\]
so the gradient again depends on the score function \(\nabla_\theta \log \pi_\theta(a\mid x)\). Under standard policy parameterizations, this score function generally admits no uniform bound because \(\pi_\theta(a\mid x)\) can be arbitrarily small. Thus, PPO clipping controls the objective through the policy ratio, but it does not remove the need for explicit per-example gradient clipping when DP is imposed.
\end{example}

Clipping plays two roles in private training. It enforces a sensitivity bound, but it can also distort optimization when many per-example gradients exceed the clipping threshold. Let $F(\theta)=\mathbb{E}_z[\ell(\theta;z)]$ and define the minibatch gradient $\widehat{\nabla}F_t(\theta)$. Under standard sampling assumptions, $\widehat{\nabla}F_t(\theta)$ is an unbiased estimator of $\nabla F(\theta)$, whereas replacing per-example gradients by $\mathrm{clip}(\nabla_\theta\ell(\theta;z),C)$ generally introduces bias.

This distinction is especially relevant in policy optimization, where per-example gradients can be heavy-tailed due to score-function terms, so clipping may be frequently active. In contrast, when the learning target admits a uniform per-example gradient bound, one can choose $C$ to match this bound so that clipping is rarely active and introduces negligible distortion. In that case, privacy is enforced primarily through additive noise calibrated to the sensitivity bound, which preserves mean-zero updates while inflating variance.

To address these challenges, we place the privacy mechanism on reward learning rather than on policy optimization. The reward-modeling objective is typically better conditioned, and in many LLM pipelines it is implemented by freezing a pretrained backbone and training a lightweight head on top of its representations \citep{evci2022head2toe,houlsby2019parameter,hu2022lora}. Let $\phi(x,a)\in\mathbb R^d$ denote the fixed representation of a context action pair, and consider a linear-head reward model
\begin{equation}
\label{eq:linear_reward}
r_\theta(x,a)=\langle \phi(x,a),\theta\rangle,
\end{equation}
with $\theta\in\Theta=\{\theta\in\mathbb R^d\mid \|\theta\|_2\le R\}$. The key point is that sensitivity control becomes transparent once the per-example loss is Lipschitz in $\theta$. A convenient way to ensure this is to keep the representation norm bounded, for instance by applying a final normalization step on the backbone features. In modern Transformers, LayerNorm and RMSNorm already stabilize feature scales in practice \citep{ba2016layer,zhang2019root,zheng2024learn}, and an explicit final projection or normalization can enforce a deterministic bound $\sup_{x,a}\|\phi(x,a)\|_2 \le L.$

Under the Bradley--Terry model, each observation $z=(x,a^{w},a^{\ell})$ induces a convex loss $\ell(\theta;z)$ whose gradient has the form
\begin{equation}
\label{eq:bt_grad_form}
\nabla_\theta \ell(\theta;z)
=\alpha(\theta;z)\bigl(\phi(x,a^{w})-\phi(x,a^{\ell})\bigr),
\qquad
\alpha(\theta;z)\in[0,1].
\end{equation}
Therefore, we have
\begin{equation}
\label{eq:grad_bound}
\|\nabla_\theta \ell(\theta;z)\|_2
\le
\|\phi(x,a^{w})-\phi(x,a^{\ell})\|_2
\le
\|\phi(x,a^{w})\|_2+\|\phi(x,a^{\ell})\|_2
\le
2L.
\end{equation}
This yields a uniform per-example gradient bound, which provides a clean sensitivity control for private optimization. This suggests that per-example gradients in head-only reward learning are typically better behaved than in policy optimization. As a result, when the same per-example gradient clipping norm \(C\) is used to control DP sensitivity, clipping tends to be less frequently active in reward learning than in policy optimization. In that sense, privacy in reward learning is driven more by calibrated noise than by clipping-induced distortion.

\subsection{Proposed Framework: Private Reward-Based Alignment}\label{subsec:proposed_framework}

We propose a decoupled framework that concentrates the privacy expenditure solely on estimating the reward structure and derives the final decision rule via post-processing.

\begin{algorithm}[htb!]
\caption{Differentially Private Reward-Based Alignment}\label{alg:framework_general}
\begin{algorithmic}[1]
    \STATE \textbf{Input:} Preference dataset $\mathcal{D} = \{(x_i, a_i^w, a_i^l)\}_{i=1}^n$, privacy budget $(\varepsilon, \delta)$, KL-regularization parameter $\eta$, reference policy $\pi_0$, sampling budget $N$ (optional).
    
    \STATE \textbf{Step 1: Private Reward Learning}
    \STATE Learn a private reward model $\tilde{r}(x, a)$ by an $(\varepsilon, \delta)$-DP mechanism (e.g., DP-SGD).
    
    \STATE \textbf{Step 2: Policy Derivation (Inference)}
    \IF{normalization constant $Z_{\tilde{r}}(x) = \mathbb{E}_{a \sim \pi_0}[\exp(\eta \tilde{r}(x,a))]$ is computable}
        \STATE \textit{Exact Inference}
        \STATE Construct the optimal policy in closed form:
        \begin{equation*}
        \pi_{\tilde{r}}^{\eta}(a \vert x) = \frac{1}{Z_{\tilde{r}}(x)} \pi_0(a \vert x) \exp \left( \eta \tilde{r}(x, a) \right).
        \end{equation*}
    \ELSE
        \STATE \textit{Approximate Inference via Best-of-$N$}
        \STATE For a given context $x$, sample $N$ candidates from reference: $\{a^{(1)}, \dots, a^{(N)}\} \sim \pi_0(\cdot \vert x)$.
        \STATE Select the candidate maximizing the private reward:
        \begin{equation*}
        a^* = \mathop{\mathrm{argmax}}_{j \in \{1, \dots, N\}} \tilde{r}(x, a^{(j)}).
        \end{equation*}
        \STATE Define the policy output as the Dirac mass on $a^*$.
    \ENDIF
    
    \STATE \textbf{Output:} Privately aligned policy $\pi_{\tilde{r}}^{\eta}$ or action $a^*$.
\end{algorithmic}
\end{algorithm}

As outlined in Algorithm~\ref{alg:framework_general}, the procedure consists of two stages: (i) learning a differentially private reward model $\tilde r$ from the full preference dataset $\mathcal D$, and (ii) producing an aligned action/response induced by $\tilde r$ without any further access to $\mathcal D$.

In the first stage, we treat reward learning as a single empirical risk minimization problem on \(\mathcal D\).  We state the framework at the level of the resulting \((\varepsilon,\delta)\)-DP guarantee, rather than specifying an explicit closed-form calibration for the added Gaussian noise. In practice, the calibration of DP-SGD depends on various factors, including the clipping norm, sampling scheme, number of epochs, and privacy accounting method \citep{abadi2016deep,bu2020deep}. Since our focus is on the pipeline design and on the privacy--utility trade-off at the level of the final guarantee, rather than on accountant-specific calibration formulas, we do not make the noise level explicit here. In the theoretical development in Section~\ref{sec: theory}, we work with the projected noisy stochastic-gradient procedure of \citet{bassily2014private} as a concrete DP-SGD instantiation, since it delivers the strongly-convex excess-risk rate used in our analysis. In the experiments, the corresponding calibration is handled by \texttt{Opacus}, which takes the target privacy parameters together with the training configuration and internally performs privacy accounting to determine the required noise level; implementation details are deferred to Appendix~\ref{app:dp_accounting} and Table~\ref{tab:hyperparameters_llm}.

In the second stage, the KL-regularized target policy induced by $\tilde r$ is defined by the Gibbs form
\[
\pi_{\tilde r}^{\eta}(a\mid x)
=
\frac{\pi_0(a\mid x)\exp\{\eta\,\tilde r(x,a)\}}{Z_{\tilde r}^{\eta}(x)},
\]
where $Z_{\tilde r}^{\eta}(x):=\sum_{a'\in\mathcal A}\pi_0(a'\mid x)\exp\{\eta\,\tilde r(x,a')\}.$ The implementation depends on the tractability of the partition function $Z_{\tilde r}^{\eta}(x)$. When $Z_{\tilde r}^{\eta}(x)$ is tractable (e.g., finite action sets or structured settings where normalization is feasible), one can explicitly construct and sample from $\pi_{\tilde r}^{\eta}(\cdot\mid x)$ (or apply a deterministic rule such as $\arg\max_a \pi_{\tilde r}^{\eta}(a\mid x)$).

In contrast, when $\mathcal A$ is combinatorially large--as in preference fine-tuning for LLMs where actions correspond to long token sequences--computing $Z_{\tilde r}^{\eta}(x)$ is infeasible. In this regime, Algorithm~\ref{alg:framework_general} adopts a best-of-$N$ (BoN) inference-time policy \citep{stiennon2020learning}: draw $N$ candidates $a^{(1)},\dots,a^{(N)}\sim \pi_0(\cdot\mid x)$ and output
\[
a^* \in \arg\max_{j\in\{1,\dots,N\}} \tilde r\!\bigl(x,a^{(j)}\bigr).
\]
As $N$ increases, this selection increasingly tends to return higher-reward candidates under the proposal distribution $\pi_0(\cdot\mid x)$ by restricting attention to a richer candidate pool, without requiring explicit normalization over $\mathcal A$.

A simple rationale for using $\pi_0$ as a proposal distribution follows from the relationship between the KL-regularized target policy and the reference. Consider $\pi_{\eta r}(a\vert x)\propto \pi_0(a\vert x)\exp\{\eta\,r(x,a)\}$. If the reward is uniformly bounded, i.e., $\sup_{x\in\mathcal X,\,a\in\mathcal A}|r(x,a)|\le B$ for some $B<\infty$, then for every $(x,a)$,
\[
e^{-\eta B}\ \le\ \frac{\pi_{\eta r}(a\vert x)}{\pi_0(a\vert x)}\cdot Z_{\eta r}(x)\ \le\ e^{\eta B},
\]
with $Z_{\eta r}(x):=\sum_{a'\in\mathcal A}\pi_0(a'\vert x)\exp\{\eta r(x,a')\}.$ Since $e^{-\eta B}\le Z_{\eta r}(x)\le e^{\eta B}$, we obtain the pointwise bounds
\[
e^{-2\eta B}\ \le\ \frac{\pi_{\eta r}(a\vert x)}{\pi_0(a\vert x)}\ \le\ e^{2\eta B}.
\]
In particular, when $\eta B$ is moderate, $\pi_{\eta r}(\cdot\vert x)$ remains within a controlled multiplicative tilt of $\pi_0(\cdot\vert x)$, which supports using $\pi_0(\cdot\vert x)$ as a reasonable proposal distribution for candidate-based approximate inference.

We highlight that Algorithm~\ref{alg:framework_general} is designed to exploit a structural feature specific to RLHF. In standard RL, a separate reward-learning is absent because rewards are observed directly from the environment. As a result, if one seeks to enforce DP in standard RL, privacy must be introduced at the policy-optimization stage, typically through noisy policy gradient updates \citep{he2025sample}. RLHF is different in that it introduces an intermediate reward-learning layer. Our framework places DP on this layer alone and derives the final policy by post-processing the resulting private reward model. This design is especially appealing because, as discussed in Section~\ref{sec:motivation}, private reward learning is less affected by clipping than policy optimization.

Moreover, the role of \eqref{equ:PIO} differs between DPO and our framework. Both use the analytical form of the KL-regularized optimizer, but DPO uses it for direct policy optimization, whereas our framework uses it to derive the final policy as a post-processing of the private reward model. This distinction is consequential under DP. Direct policy optimization requires noisy updates over policy parameters and can in some cases yield a policy worse than the reference policy. In contrast, our framework places privacy only on reward learning, so the reward-estimation layer serves as a buffer between the privacy mechanism and the final policy. A similar issue arises in RLHF pipelines that privatize both reward learning and policy optimization \citep{wu2024privately,wu2025offline}, since using the same preference data in both stages requires either data splitting or privacy-budget splitting. Our framework avoids this extra cost by using the full dataset for a single private reward-estimation step and deriving the final policy by post-processing.

Once the private reward model \(\tilde r\) is learned, any subsequent output construction, whether through the exact policy or the BoN rule, depends on the dataset only through \(\tilde r\). This leads to the following privacy guarantee for the entire pipeline.

\begin{proposition}[Privacy of the Framework]\label{prop:privacy_framework}
Suppose the reward-learning mechanism $\mathcal M$ that outputs a reward model $\tilde r=\mathcal M(\mathcal D)$ satisfies $(\varepsilon,\delta)$-DP. Then Algorithm~\ref{alg:framework_general} also $(\varepsilon,\delta)$-DP.
\end{proposition}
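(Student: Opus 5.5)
The plan is to apply the post-processing property of $(\varepsilon,\delta)$-DP stated in the preliminaries. We view Algorithm~\ref{alg:framework_general} as the composition $\mathrm{Proc}\circ\mathcal M$. Here $\mathcal M(\mathcal D)=\tilde r$ is the private reward-learning step, and $\mathrm{Proc}$ is the map taking $\tilde r$ to the released object: either the Gibbs policy $\pi_{\tilde r}^{\eta}$ or the BoN action $a^*$. The only thing to verify is that $\mathrm{Proc}$ does not touch $\mathcal D$. Its inputs are $\tilde r$ together with data-independent quantities: $\eta$, $\pi_0$, $N$, the context $x$ being queried, and any internal randomness.

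\textbf{Exact-inference branch.} The map $\tilde r\mapsto \pi_{\tilde r}^{\eta}(\cdot\mid x)=\pi_0(\cdot\mid x)\exp(\eta\tilde r(x,\cdot))/Z_{\tilde r}(x)$ is a deterministic function of $\tilde r$, $\pi_0$ and $\eta$. The branching condition, namely whether $Z_{\tilde r}(x)$ is computable, depends only on the structure of $\mathcal A$, $\pi_0$ and the model class, and not on $\mathcal D$. So it can be fixed in advance.

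\textbf{BoN branch.} Here $\mathrm{Proc}$ is randomized. It draws $a^{(1)},\dots,a^{(N)}\sim\pi_0(\cdot\mid x)$ independently of $\mathcal D$ and then takes $\arg\max_j \tilde r(x,a^{(j)})$, with a fixed tie-breaking rule.

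To make the randomized case rigorous, I would prove the post-processing inequality directly. Write $\mathrm{Proc}(\tilde r)=g(\tilde r,U)$, where $U$ is the candidate-sampling randomness, independent of $\mathcal M(\mathcal D)$ and of the dataset. For any measurable event $E$ and adjacent $\mathcal D,\mathcal D'$, condition on $U=u$ and set $E_u=\{r: g(r,u)\in E\}$. The DP guarantee for $\mathcal M$ gives
\[
\mathbb P\big(\mathcal M(\mathcal D)\in E_u\big)\le e^{\varepsilon}\,\mathbb P\big(\mathcal M(\mathcal D')\in E_u\big)+\delta .
\]
Integrating over $u$ gives the same inequality for $\mathrm{Proc}\circ\mathcal M$.

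The same argument covers releasing the output on many contexts $x$, including repeated queries. Each query is a further data-independent function of the single released $\tilde r$, so no composition cost is incurred. This is the point to emphasize against the pipelines that split the privacy budget.

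\textbf{Main obstacle.} The only delicate step is measurability and data-independence. Specifically, $E_u$ must be measurable in the output space of $\mathcal M$, and the contexts $x$ at which the policy is evaluated must not be drawn from $\mathcal D$. I would state these as standing conventions: contexts at inference time are fresh or public, and $g$ is jointly measurable. With those conventions in place, the result follows immediately.
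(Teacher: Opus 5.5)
Your proposal is correct and takes the same route as the paper, which proves the result in one line by invoking the post-processing property of DP (explicitly allowing extra randomness independent of $\mathcal D$). Your conditioning-and-integration argument for the randomized BoN branch, together with your stated conventions (data-independent branching condition, public or fresh inference-time contexts), makes rigorous what the paper leaves implicit.
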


\begin{proof}
This follows directly from the post-processing property of DP: composing an $(\varepsilon,\delta)$-DP
mechanism with any data-independent mapping (including additional randomness independent of $\mathcal D$) preserves
the same $(\varepsilon,\delta)$-DP guarantee.
\end{proof}

\section{Theoretical Analysis}\label{sec: theory}

To present a series of theoretical results, we begin by providing essential assumptions.

\begin{assumption}[i.i.d.\ preference data]\label{assump:iid_data}
The contexts \(x_1,\dots,x_n\) are i.i.d.\ draws from \(d_0\). For each \(i\), two candidate actions are drawn independently from the reference policy \(\pi_0(\cdot\mid x_i)\), and \((a_i^w,a_i^l)\) is obtained according to the Bradley--Terry model in Definition~\ref{def:bradley-terry-model}.
\end{assumption}

\begin{assumption}[Linear reward realizability]\label{assump:linear_reward_bounded_phi}
There exists a parameter $\theta^*\in\Theta\subset\mathbb{R}^d$ such that, for all $(x,a)\in\mathcal{X}\times\mathcal{A}$, $r^*(x,a)=r_{\theta^*}(x,a) =\langle \phi(x,a),\theta^*\rangle.$ Moreover, the representation is uniformly bounded: $\sup_{x\in\mathcal{X},\,a\in\mathcal{A}}\|\phi(x,a)\|_2 \le L .$
\end{assumption}

\begin{assumption}[Non-degeneracy feature]
\label{assump:nondeg_diff_pi0}
Define $\Delta\phi(x;a,a') := \phi(x,a)-\phi(x,a').$ Then the smallest eigenvalue of the matrix $\mathbb{E}_{x\sim d_0,\ a,a'\sim \pi_0(\cdot\vert x)}\big[\Delta\phi(x;a,a')\,\Delta\phi(x;a,a')^\top\big]$ is $\lambda >0$.
\end{assumption}

\begin{assumption}[Coverage]
\label{assump:coverage}
There exists a constant $C$ such that for any $\pi \in \Pi$,
\[
\max_{x,a:d_{0}(x)>0} \frac{\pi(a|x)}{\pi_0(a|x)} \le C,
\]
with convention that $\frac{0}{0}=0$.
\end{assumption}

Assumption~\ref{assump:iid_data} specifies the basic offline data-collection model used in our analysis. It provides the independence structure needed for the statistical arguments and is consistent with a common RLHF pipeline in which prompts are sampled, candidate responses are generated from a fixed reference policy, and human pairwise preferences are then collected \citep{ouyang2022training,bai2022training}.

Assumption~\ref{assump:linear_reward_bounded_phi} posits that the preference signal is captured by a linear reward model on a fixed representation. This type of realizability assumption is standard \citep{zhu2023principled,liu2025uncertainty}. The boundedness condition on $\phi(x,a)$ is a regularity requirement that ensures the reward class is well behaved and supports clean sensitivity and concentration arguments. This assumption also aligns with common practice in LLM alignment and reward modeling. A widely used implementation freezes a pretrained backbone and learns a lightweight head on top of its representations, which is a parameter efficient way to fit preference data while limiting overfitting and training instability \citep{evci2022head2toe,houlsby2019parameter,zaken2022bitfit}. Under this head only design, linear reward modeling becomes a natural approximation that connects the practical pipeline to a tractable theory.

Assumption~\ref{assump:nondeg_diff_pi0} requires that the Fisher-information-type matrix $\mathbb{E}\left[\Delta\phi(x;a,a')\Delta\phi(x;a,a')^\top\right]$ is uniformly positive definite, ruling out degenerate feature maps that narrows only in a lower-dimensional subspace. This condition also excludes the usual additive-baseline non-identifiability in Bradley--Terry models, since any direction that leaves all pairwise differences unchanged would lie in the null space of this matrix. Similar non-degeneracy conditions are standard in theoretical analyses \citep{zhu2023principled,zhong2024provable,liu2025uncertainty}.

Assumption~\ref{assump:coverage} imposes a uniform bound on the density ratio between any candidate policy $\pi \in \Pi$ and the reference policy $\pi_0$. This condition ensures that the reference policy $\pi_0$ provides sufficient coverage over the state-action pairs potentially visited by the target policy class, guaranteeing that no candidate policy places significant mass on actions rarely sampled by $\pi_0$. This type of uniform coverage is standard in the theoretical analysis of RLHF and offline RL \citep{munos2008finite,xiong2024iterative, song2024importance}. For modern LLMs, this assumption is practically well-motivated. Since RLHF typically performs a refinement of a strong SFT reference $\pi_{0}$ rather than learning entirely new capabilities, it is natural to restrict attention to a policy class $\Pi$ that stays within a controlled neighborhood of $\pi_{0}$. 

\subsection{Upper Bound on the Suboptimality Gap}

We first deliver the utility analysis of our private estimator. The following lemma, adapted from \cite{bassily2014private}, characterizes the expected excess empirical risk.

\begin{lemma}[Utility of private projected SGD]\label{lem:hp-sc-dpsgd}
Suppose Assumption~\ref{assump:iid_data}, \ref{assump:linear_reward_bounded_phi} and Assumption~\ref{assump:nondeg_diff_pi0} hold. Let \(\tilde\theta_n\) be the output of DP-SGD procedure of \citet{bassily2014private}; that is, at each iteration, one data point is sampled uniformly with replacement, Gaussian noise is added to the resulting stochastic gradient, and the update is projected back onto \(\Theta\), where the Gaussian noise is calibrated to satisfy \((\varepsilon,\delta)\)-DP under the per-example gradient bound \(2L\). Fix any \(\rho\in(0,1)\) and let $n \;\ge\; \frac{32L^{2}}{\lambda}\log\!\Big(\frac{d}{\rho}\Big).$ Then there exists an event \(\mathcal{E}\) with \(\mathbb{P}(\mathcal{E})\ge 1-\rho\) such that, on \(\mathcal{E}\), the negative log-likelihood is \(\mu\)-strongly convex over \(\Theta\) with $\mu \;=\; \frac{\lambda}{2}\sigma(2RL)\bigl(1-\sigma(2RL)\bigr),$ where \(\sigma(t)=(1+e^{-t})^{-1}\) is the sigmoid function. Moreover, on the same event \(\mathcal{E}\),
\[
\mathbb{E}\!\left[\bar L_n(\tilde \theta_n)-\bar L_n(\hat\theta_n)\,\middle|\,D\right]
\;=\;
\tilde O\!\left(\frac{d}{n^{2}\varepsilon^{2}}\right),
\]
where the expectation is over the algorithmic randomness conditional on \(D\).
\end{lemma}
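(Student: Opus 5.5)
The proof has two parts. First we show that, with high probability, the empirical negative log-likelihood $\bar L_n(\theta)=-\frac1n\sum_{i=1}^n\log\sigma(\langle\Delta\phi_i,\theta\rangle)$, with $\Delta\phi_i:=\phi(x_i,a_i^w)-\phi(x_i,a_i^l)$, is $\mu$-strongly convex on $\Theta$. Then, conditional on this event, we invoke the strongly convex excess empirical risk guarantee of \citet{bassily2014private}, with Lipschitz constant $2L$ supplied by \eqref{eq:grad_bound}.

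\textbf{Hessian.} The Hessian is
\[
\nabla^2\bar L_n(\theta)=\frac1n\sum_{i=1}^n\sigma(z_i)\bigl(1-\sigma(z_i)\bigr)\Delta\phi_i\Delta\phi_i^\top,
\qquad z_i=\langle\Delta\phi_i,\theta\rangle .
\]
For $\theta\in\Theta$ we have $|z_i|\le\|\Delta\phi_i\|_2R\le 2RL$. Since $t\mapsto\sigma(t)(1-\sigma(t))$ is even and decreasing in $|t|$, each weight is at least $\sigma(2RL)(1-\sigma(2RL))$. Hence
\[
\nabla^2\bar L_n(\theta)\succeq \sigma(2RL)\bigl(1-\sigma(2RL)\bigr)\,\widehat\Sigma_n,
\qquad \widehat\Sigma_n=\frac1n\sum_{i=1}^n\Delta\phi_i\Delta\phi_i^\top .
\]
Swapping $a^w$ and $a^l$ flips the sign of $\Delta\phi_i$, so the outer product is unaffected by which action is labeled preferred. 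Under Assumption~\ref{assump:iid_data}, the pair $\{a^w,a^l\}$ is therefore two independent draws from $\pi_0(\cdot\mid x)$. It follows that $\E[\Delta\phi_i\Delta\phi_i^\top]$ is exactly the matrix in Assumption~\ref{assump:nondeg_diff_pi0}, whose minimum eigenvalue is $\lambda$.

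\textbf{Matrix concentration (the main step).} We apply the matrix Chernoff lower-tail bound to the i.i.d.\ PSD summands $\Delta\phi_i\Delta\phi_i^\top$, each of which satisfies $\lambda_{\max}\le 4L^2$. This gives
\[
\mathbb{P}\Bigl(\lambda_{\min}(\widehat\Sigma_n)\le \tfrac12\lambda\Bigr)\le d\exp\!\Bigl(-\tfrac{(1/2)^2 n\lambda}{2\cdot 4L^2}\Bigr)=d\exp\!\Bigl(-\tfrac{n\lambda}{32L^2}\Bigr).
\]
The right-hand side is at most $\rho$ exactly when $n\ge\frac{32L^2}{\lambda}\log(d/\rho)$, which matches the sample-size condition in the statement. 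We define $\mathcal E=\{\lambda_{\min}(\widehat\Sigma_n)\ge\lambda/2\}$. On $\mathcal E$, $\bar L_n$ is $\mu$-strongly convex on $\Theta$ with $\mu=\frac{\lambda}{2}\sigma(2RL)(1-\sigma(2RL))$. This step carries the real content, so the Chernoff parameters must be checked so that the constant 32 comes out correctly.

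\textbf{Privacy and utility.} The event $\mathcal E$ is a function of the data $D$ alone. Conditional on $D\in\mathcal E$, the optimization problem is a fixed $2L$-Lipschitz, $\mu$-strongly convex ERM over the convex set $\Theta$, whose diameter is $2R$. The $(\varepsilon,\delta)$-DP guarantee holds for every dataset, because the Gaussian noise is calibrated only to the gradient bound $2L$ and does not use strong convexity. Strong convexity is used only in the utility analysis. For the utility bound we apply the strongly convex result of \citet{bassily2014private}: $n^2$ iterations of projected noisy SGD with step size $1/(\mu t)$ and noise variance of order $L^2 n^2\log(n/\delta)\log(1/\delta)/\varepsilon^2$ give
\[
\E\bigl[\bar L_n(\tilde\theta_n)-\bar L_n(\hat\theta_n)\mid D\bigr]=O\!\Bigl(\frac{L^2 d\,\log^2(n/\delta)}{\mu n^2\varepsilon^2}\Bigr)=\tilde O\!\Bigl(\frac{d}{n^2\varepsilon^2}\Bigr).
\]
Here $\hat\theta_n$ is the constrained minimizer over $\Theta$, and the $\tilde O$ treats $L,\lambda,R$ as constants. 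There is one subtlety. The cited theorem is stated for a normalized sum objective, so the factor of $n$ relating the sum to the average $\bar L_n$ must be tracked: both the Lipschitz constant and the strong-convexity parameter scale with $n$, and the ratio $L^2/\mu$ that controls the bound is unchanged. Beyond this bookkeeping we would cite the theorem rather than re-derive it.
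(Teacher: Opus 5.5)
Your proposal is correct and follows the paper's proof step for step. The steps are the uniform logistic-curvature bound $\sigma(2RL)(1-\sigma(2RL))$ on the Hessian, then Tropp's matrix Chernoff bound with $R=4L^2$ and $t=1/2$ giving $d\exp(-n\lambda/(32L^2))$, then strong convexity on $\mathcal E=\{\lambda_{\min}(\widehat\Sigma_n)\ge\lambda/2\}$ together with the $2L$ Lipschitz bound, and finally the strongly convex guarantee of \citet{bassily2014private}; your swap-invariance remark usefully justifies a step the paper leaves implicit.

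One small correction to your bookkeeping aside: in that theorem the per-example Lipschitz constant does not scale with $n$ (if it did, $L^2/\mu$ would grow by a factor of $n$, not stay fixed). Only the strong-convexity modulus of the sum becomes $n\mu$, and the second factor of $1/n$ comes from dividing the sum's excess risk by $n$; this is what produces the $1/(n^2\varepsilon^2)$ rate.
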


\noindent\textbf{Proof sketch.} Assumption~\ref{assump:nondeg_diff_pi0} implies the population negative log-likelihood is uniformly strongly convex. A matrix concentration bound then shows that the empirical loss inherits the same strong convexity on an event $\mathcal{E}$ with $\mathbb{P}(\mathcal{E})\ge 1-\rho$. Conditioning on $\mathcal{E}$, we apply the standard DP-SGD utility guarantee for Lipschitz and strongly convex objectives to obtain the stated conditional expected excess empirical risk bound. \hfill$\square$

Lemma~\ref{lem:hp-sc-dpsgd} provides a utility bound for our private reward estimator by certifying, with high probability, that the empirical objective is strongly convex over \(\Theta\). Various variants of DP-SGD are available, and for the privacy guarantee itself our framework is compatible with any such instantiation that attains the target \((\varepsilon,\delta)\)-DP level. For the theoretical analysis, however, we invoke the strongly-convex private stochastic-gradient guarantee of \citet{bassily2014private}, which yields the optimal excess-risk order in the strongly-convex regime. In our linear-head setting, the per-example gradient is uniformly bounded by \(2L\), so the same type of guarantee applies after calibrating the noise to the target \((\varepsilon,\delta)\)-DP level. This allows us to quantify the privacy-induced optimization error at the sharpest available rate without postulating strong convexity as a separate assumption, and we therefore state the lemma in terms of the resulting privacy guarantee and excess empirical risk rate rather than in terms of an accountant-specific calibration formula.

\begin{remark}[Unconditional DP-SGD control]\label{remark:hp-dpsgd}
To characterize the suboptimality gap, we require a high-probability control of the estimation error with respect to the joint randomness of the data and the learning procedure. Much of the existing DP-SGD literature analyzes the procedure under a fixed dataset (or in expectation), which does not directly provide the unconditional statement needed to our analysis. Section~\ref{sec:supp-hp-mbdpsgd-coupling} in the supplementary provides the missing bridge by proving an unconditional result tailored to our setup (Theorem~\ref{thm:hp-mbdpsgd-last-coupling}).
\end{remark}

We now state our main upper bound on the suboptimality gap.

\begin{theorem}[Upper bound on the suboptimality gap]\label{thm:main-gap}
Suppose Assumptions~\ref{assump:iid_data}, \ref{assump:linear_reward_bounded_phi}, \ref{assump:nondeg_diff_pi0}, and \ref{assump:coverage} hold. Consider Algorithm~1 instantiated with the private projected SGD procedure in Lemma~\ref{lem:hp-sc-dpsgd}, and let \(\tilde\theta\) be its output. Let \(\pi_{\tilde\theta}^{\eta}\) denote the induced KL-regularized policy. Fix any \(\rho\in(0,1)\). If $n \;\ge\; \frac{32L^{2}}{\lambda}\log\!\Big(\frac{d}{\rho}\Big),$
then with probability at least \(1-\rho\),
\[
\Gap\!\left(\pi_{\tilde\theta}^{\eta}\right)
\;\le\;
\tilde O\!\left(\frac{\eta d}{n} \;+\; \frac{\eta d}{n^{2}\varepsilon^{2}}\right).
\]
\end{theorem}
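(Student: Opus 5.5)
Our plan has three steps. First, express the KL-regularized suboptimality gap as a quadratic functional of the reward error. Second, use coverage to convert this into an estimation error under $\pi_0$-sampled pairs. Third, bound that error by the strong convexity of the empirical Bradley--Terry loss combined with Lemma~\ref{lem:hp-sc-dpsgd} and its unconditional version (Theorem~\ref{thm:hp-mbdpsgd-last-coupling}).

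\textbf{Step 1 (gap decomposition).} Write $\hat r = r_{\tilde\theta}$ and $g(x,a) = \hat r(x,a) - r^*(x,a)$. Since $\pi_{r^*}^\eta$ maximizes $V_\eta(\cdot;r^*)$, a standard calculation with the Gibbs form of Lemma~\ref{lemma:pio} gives, for each $x$,
\[
V_\eta(\pi_{r^*}^\eta;r^*)-V_\eta(\pi_{\hat r}^\eta;r^*) = \tfrac1\eta \E_{x}\,\KL\big(\pi_{\hat r}^\eta(\cdot|x)\,\Vert\,\pi_{r^*}^\eta(\cdot|x)\big).
\]
Along the path $r_t = r^* + t g$, the log-partition function $\log Z_{r_t}(x)$ is smooth in $t$. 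A second-order expansion, or equivalently a mean-value form of the KL divergence between two exponential tilts, bounds this KL by $\tfrac{\eta^2}{2}\sup_{t\in[0,1]}\Var_{a\sim\pi_{r_t}^\eta}(g(x,a))$. This yields
\[
\Gap(\pi^\eta_{\hat r}) \le \tfrac{\eta}{2}\,\E_x \sup_t \Var_{\pi_{r_t}^\eta(\cdot|x)}\big(g(x,\cdot)\big).
\]
The variance is invariant to additive baselines in $a$, which matters because Bradley--Terry identifies rewards only up to such baselines.

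\textbf{Step 2 (coverage transfer).} We take the policy class $\Pi$ to contain the intermediate Gibbs policies $\pi_{r_t}^\eta$; these are uniformly tilted versions of $\pi_0$, so this is the intended use of Assumption~\ref{assump:coverage}. Then $\Var_{\pi}(g) \le \E_{a\sim\pi}[(g(x,a)-\E_{\pi_0}g)^2] \le C\,\E_{a\sim\pi_0}[(g-\E_{\pi_0} g)^2] = \tfrac C2 \E_{a,a'\sim\pi_0}[(g(x,a)-g(x,a'))^2]$. Substituting $g(x,a)-g(x,a') = \langle \Delta\phi(x;a,a'), \tilde\theta-\theta^*\rangle$, we obtain
\[
\Gap \le \tfrac{C\eta}{4}\,\|\tilde\theta-\theta^*\|_{\Sigma}^2 \le \tfrac{C\eta \Lambda}{4}\|\tilde\theta-\theta^*\|_2^2, \qquad \Sigma = \E[\Delta\phi\Delta\phi^\top],
\]
where $\Lambda \le 4L^2$ is the largest eigenvalue of $\Sigma$. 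It therefore remains to show $\|\tilde\theta-\theta^*\|_2^2 = \tilde O(d/n + d/(n^2\varepsilon^2))$ with probability at least $1-\rho$.

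\textbf{Step 3 (estimation error).} We split $\|\tilde\theta-\theta^*\|^2 \le 2\|\tilde\theta-\hat\theta_n\|^2 + 2\|\hat\theta_n-\theta^*\|^2$ and work on the event $\mathcal E$ of Lemma~\ref{lem:hp-sc-dpsgd}.
\begin{itemize}
\item \emph{Private term.} On $\mathcal E$ the empirical loss is $\mu$-strongly convex, so $\tfrac\mu2\|\tilde\theta-\hat\theta_n\|^2 \le \bar L_n(\tilde\theta)-\bar L_n(\hat\theta_n)$. A high-probability version of the excess-risk bound in Lemma~\ref{lem:hp-sc-dpsgd}, taken jointly over the data and the algorithm, is exactly what Theorem~\ref{thm:hp-mbdpsgd-last-coupling} supplies (see Remark~\ref{remark:hp-dpsgd}). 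It gives $\|\tilde\theta-\hat\theta_n\|^2 = \tilde O(d/(\mu n^2\varepsilon^2))$.
\item \emph{Statistical term.} We use the standard MLE argument of \citet{zhu2023principled}. Strong convexity of $\bar L_n$ on $\Theta$ gives $\tfrac\mu2\|\hat\theta_n-\theta^*\|^2 \le \langle \nabla\bar L_n(\theta^*), \theta^*-\hat\theta_n\rangle$. The score $\nabla\bar L_n(\theta^*)$ is an average of i.i.d.\ mean-zero vectors bounded by $2L$, so a vector Bernstein or Hoeffding bound gives $\|\nabla\bar L_n(\theta^*)\|_2^2 = O(L^2 d\log(1/\rho)/n)$. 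Hence $\|\hat\theta_n-\theta^*\|^2 = O(d\log(1/\rho)/(\mu^2 n))$.
\end{itemize}
A union bound over the failure events, with $\rho$ rescaled, and absorbing $\mu$, $L$, $C$, $R$ into constants finishes the proof.

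The main obstacle is the high-probability control of the private optimization error. Lemma~\ref{lem:hp-sc-dpsgd} only bounds an expectation conditional on $D$. Markov's inequality would cost a factor $1/\rho$, not $\log(1/\rho)$. The remedy is to invoke Theorem~\ref{thm:hp-mbdpsgd-last-coupling}, which gives last-iterate concentration of the Gaussian-noise martingale under strong convexity, coupled with $\mathcal E$. A secondary point is the coverage step: we must check that the intermediate Gibbs policies lie in $\Pi$, or else bound their density ratio to $\pi_0$ directly by $e^{4\eta RL}$, which is treated as a constant.
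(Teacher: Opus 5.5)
Your proposal is correct and follows essentially the paper's route. You interpolate linearly between $r_{\theta^*}$ and $r_{\tilde\theta}$ and express the gap through the variance of the reward error under the interpolating Gibbs policy; your KL/Bregman form with a Taylor remainder is the same computation as the paper's $J$-functional and mean-value step. You then transfer to $\pi_0$ via Assumption~\ref{assump:coverage} and split $\|\tilde\theta-\theta^*\|_2^2$ into a private term (Theorem~\ref{thm:hp-mbdpsgd-last-coupling} plus strong convexity) and a statistical MLE term. Centering the variance, which gives a baseline-invariant $\Sigma$-norm bound, and proving the MLE rate directly by score concentration instead of citing Lemma~\ref{lem:mle_stat} are mild refinements. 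Your one slip is that strong convexity gives $\|\tilde\theta-\hat\theta_n\|_2^2=\tilde O\bigl(dL^2/(\mu^2 n^2\varepsilon^2)\bigr)$ rather than $\tilde O\bigl(d/(\mu n^2\varepsilon^2)\bigr)$, which is harmless once $\mu$ and $L$ are absorbed into constants.
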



Theorem~\ref{thm:main-gap} provides an additive decomposition of the suboptimality gap into a non-private term of order $\eta d/n$ and a privacy cost of order $\eta d/(n^2\varepsilon^2)$. The parameter $\delta$ which characterizes a failure probability of the privacy protection is typically chosen to be negligible, e.g., $\delta = n^{-k}$ for some $k\ge 2$; under such choices, its impact enters only through logarithmic factors and does not affect the leading rates. Notably, privacy does not worsen the dimensional dependence, so the price of privacy appears only through the extra $1/(n\varepsilon^2)$ factor.

To interpret the sample-size regimes, define the crossover scale $n_\varepsilon := \varepsilon^{-2}$. When $n \gtrsim n_\varepsilon$, the privacy-induced term is lower order and the rate effectively matches the non-private one; when $n \lesssim n_\varepsilon$, the privacy cost can dominate and determines the attainable gap. This captures the privacy--utility tradeoff: holding $\varepsilon$ fixed recovers the non-private rate as $n$ grows, while strengthening privacy by shrinking $\varepsilon$ increases the privacy-induced term. Importantly, the faster decay $1/n^2$ in the privacy term arises from invoking strongly-convex DP optimization guarantees on a high-probability event; without such curvature, privacy costs would typically decay only as $1/n$ rather than $1/n^2$. 

The bound is linear in the KL regularization parameter \(\eta\), which we treat as fixed throughout the analysis, as is standard in KL-regularized RLHF and related formulations \citep{ouyang2022training,rafailov2024direct,zhao2024sharp}. This linear dependence is natural, since smaller \(\eta\) pulls both \(\pi_{\tilde\theta}^{\eta}\) and \(\pi_{\theta^\ast}^{\eta}\) closer to the reference policy \(\pi_0\), thereby reducing the gap between the two induced policies.

\subsection{Minimax Lower Bound on the Suboptimality Gap}

We next establish a minimax lower bound on the suboptimality gap for the $d$-dimensional linear reward
model class, which applies to any $(\varepsilon,\delta)$-DP algorithm.

\begin{theorem}[Minimax lower bound]\label{thm:minimax-lb-main}
Fix $\eta>0$ and consider the $d$-dimensional linear reward model class. For $\varepsilon\in(0,1]$ and $\delta\le \varepsilon$, define the minimax risk
\[
R_n(\varepsilon,\delta)
:=
\inf_{A\in\mathcal{A}_{\varepsilon,\delta}}
\sup_{\theta^\ast\in\Theta}
\mathbb{E}_{\theta^\ast}\!\left[\Gap\!\left(A(Z^n)\right)\right],
\]
where $\mathcal{A}_{\varepsilon,\delta}$ is the class of $(\varepsilon,\delta)$-DP algorithms and $Z^n$ denotes $n$ preference pairs generated under $\theta^\ast$.

Then, for each fixed $\eta>0$, there exist constants $c_\eta,C_\eta>0$ such that, up to logarithmic factors, for all $n \ge n_{\mathrm{NP}} := C_\eta d$,
\[
R_n(\varepsilon,\delta)
\ge
c_\eta\,\max\!\left\{\frac{d}{n},\;\min\!\left(\frac{1}{n\varepsilon},\;\frac{d}{n^{2}\varepsilon^{2}}\right)\right\}.
\]
Moreover, letting $n_{\mathrm{P}} := C_\eta d/\varepsilon$, for all $n \ge \max\{n_{\mathrm{NP}},n_{\mathrm{P}}\}$,
\[
R_n(\varepsilon,\delta)
\ge
c_\eta\,\max\!\left\{\frac{d}{n},\;\frac{d}{n^{2}\varepsilon^{2}}\right\}.
\]
\end{theorem}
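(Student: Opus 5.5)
The argument reduces lower-bounding the suboptimality gap to lower-bounding private parameter estimation. We then treat the non-private and private terms separately and take the maximum.

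\textbf{Step 1: from gap to parameter error.} We fix a hard instance with a finite action set. For instance, take a single context, $\pi_0$ uniform over $2d$ actions, and features $\phi(x,a)=\pm e_j$, so that Assumption~\ref{assump:nondeg_diff_pi0} holds with $\lambda\asymp 1/d$ (up to normalization). We restrict $\theta^\ast$ to a small cube $\{\theta_0+\gamma v: v\in\{\pm1\}^d\}$ inside $\Theta$. For a candidate $\pi$, the gap equals $\frac1\eta\KL(\pi\Vert\pi^\eta_{r^\ast})$, since $V_\eta(\pi^\eta_{r^\ast})-V_\eta(\pi)=\frac1\eta\E_x\KL(\pi\Vert\pi^\eta_{r^\ast})$ by Lemma~\ref{lemma:pio}. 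Because the rewards are bounded, Pinsker's inequality together with a reverse bound gives the following. If $\pi$ is far from $\pi^\eta_{\theta}$ and $\pi^\eta_{\theta'}$ differs from $\pi^\eta_\theta$ in coordinate $j$, then for any $\pi$,
\[
\Gap(\pi;\theta)+\Gap(\pi;\theta')\gtrsim c_\eta\,\gamma^2/d .
\]
Here the per-coordinate separation is of order $\eta^2\gamma^2/d$ times $1/\eta$, with the constants absorbed into $c_\eta$. This yields an Assouad-type separation in Hamming distance: $\Gap(\pi;\theta_v)\ge c_\eta(\gamma^2/d)\,\Ham(\hat v(\pi),v)$, where $\hat v$ is obtained by projecting $\pi$ to the nearest vertex.

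\textbf{Step 2: non-private term.} We apply Assouad's lemma. Neighbouring vertices $v,v^{(j)}$ induce preference distributions with per-sample KL of order $\gamma^2/d$, because only comparisons involving coordinate $j$ (probability about $1/d$) are affected. Taking $\gamma^2\asymp d^2/n$ keeps the $n$-sample KL bounded. The minimax risk is then at least $d\cdot c_\eta(\gamma^2/d)\asymp c_\eta d/n$, up to normalization. The condition $n\ge C_\eta d$ ensures that $\gamma$ is small enough for the cube to lie in $\Theta$ and for the local (quadratic) KL expansion to apply.

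\textbf{Step 3: private term.} We use the DP Assouad lemma (Acharya--Sun--Zhang), together with a coupling between $Z^n\sim P_v$ and $Z^n\sim P_{v^{(j)}}$. The expected Hamming distance of this coupling is $D\asymp n\,\TV(P_v,P_{v^{(j)}})\asymp n\gamma/d$. The lemma gives a per-coordinate testing error of at least $\frac12(0.9e^{-10\varepsilon D}-10\delta D)$. This is bounded below by a constant whenever $n\gamma\varepsilon/d\lesssim 1$ and $\delta\le\varepsilon$, using $\delta D\lesssim\varepsilon D\lesssim 1$. Taking $\gamma\asymp d/(n\varepsilon)$ gives risk $\gtrsim c_\eta\, d\cdot(\gamma^2/d)=c_\eta d/(n^2\varepsilon^2)$, after absorbing the feature normalization as in Step 2. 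This choice is feasible only when $\gamma\lesssim 1$, i.e.\ $n\gtrsim d/\varepsilon=n_{\mathrm P}$. Outside that regime, $\gamma$ is capped at a constant, which reduces the dimension. Using a subcube of dimension $k\asymp n\varepsilon$ then gives $c_\eta k\cdot 1/k^{\cdot}$, hence a bound of order $1/(n\varepsilon)$. Combining the two gives the $\min\{1/(n\varepsilon),\,d/(n^2\varepsilon^2)\}$ form, and taking the maximum with Step 2 completes the argument. The second display follows because for $n\ge n_{\mathrm P}$ the minimum equals $d/(n^2\varepsilon^2)$.

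\textbf{Main obstacle.} The delicate step is Step 1: converting the KL-regularized gap into a coordinate-separable Hamming loss with a uniform constant $c_\eta$. The gap is a KL divergence between Gibbs policies rather than a squared norm, and the argmax projection must lose only constants. I would handle this by lower-bounding the KL by $\chi^2$-type local expansions on the small cube, where the log-partition function is smooth with Hessian comparable to $\eta^2$ times the feature covariance. The second concern is checking that the cap $\gamma\lesssim1$ is handled correctly in the small-$n$ private regime.
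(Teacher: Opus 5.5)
Your Step~1 is fine, and easier than you fear. With $\theta_0=0$, $\pi^\eta_{\gamma v}$ has uniform block marginals and within-block laws $\mathrm{Bern}(\sigma(2\eta\gamma v_j))$. The chain rule for $\KL$ and the paper's wrong-side bound $\KL(\mathrm{Bern}(p)\Vert\mathrm{Bern}(\sigma(t)))\ge\log\cosh(t/2)$ then apply. Finally, minimizing $\KL(w\Vert\mathrm{Unif})+a\sum_{j\in S}w_j$ over block weights $w$, with $a=\log\cosh(\eta\gamma)$ and $S$ the set of wrong blocks, gives $\Gap(\pi;\gamma v)\ge\frac{1-\mathrm{sech}(\eta\gamma)}{\eta d}\,\Ham(\hat v,v)$.

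The genuine gap is the constraint $\theta\in\Theta=\{\|\theta\|_2\le R\}$ with $R$ fixed. Your cube has $\ell_2$-radius $\gamma\sqrt d$, so a small side length does not put it in $\Theta$. The claims that $n\ge C_\eta d$ (Step~2) or $\gamma\lesssim 1$ (Step~3) make it admissible are therefore false. In Step~2 the consistent choice is $\gamma^2\asymp d/n$; with your $\gamma^2\asymp d^2/n$ the $n$-sample KL is of order $d$. The radius is then $d/\sqrt n$, so you need $n\gtrsim d^2/R^2$. No renormalization rescues this. In your design $\E_{\pi_0}[r_\theta]=0$, so the data-independent (hence DP) output $\pi_0$ has $\Gap(\pi_0;\theta)=\frac1\eta\log\bigl(\frac1d\sum_j\cosh(\eta\theta_j)\bigr)\le\frac{\eta\cosh(\eta R)}{2d}\|\theta\|_2^2\le\frac{\eta R^2\cosh(\eta R)}{2d}$ for every $\theta\in\Theta$. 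Nothing larger than order $1/d$ can be extracted from this instance, so $d/n$ is out of reach for $C_\eta d\le n\ll d^2$. Step~3 has the same defect. With $\gamma\asymp d/(n\varepsilon)$ on all $d$ coordinates, your own arithmetic gives $d\cdot\gamma^2/d=d^2/(n^2\varepsilon^2)$. The extra factor $d$ that you absorb into ``normalization'' is exactly the excess norm, and the cube lies in $\Theta$ only once $n\gtrsim d^{3/2}/(R\varepsilon)$, not for all $n\gtrsim n_{\mathrm P}$.

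Perturbing a single coordinate repairs the private branch. Then $\|\theta\|_2=\gamma$, the choice $\gamma\asymp d/(n\varepsilon)$ is admissible exactly when $n\gtrsim d/(R\varepsilon)$, and your coupling with DP Le Cam yields $\frac{1-\mathrm{sech}(\eta\gamma)}{\eta d}\asymp \eta d/(n^2\varepsilon^2)$. That is essentially the paper's proof. There the rare event is a context $x_0$ of probability $1/d$, with signal $c=d/(Kn\varepsilon)$ along $e_1$, coupling cost $D=\frac nd\tanh(c/2)$, and loss $\frac{1}{\eta d}\log\cosh(\eta c/2)$. For the $d/n$ term the paper builds no instance; it inverts the sample-complexity bound of Proposition~\ref{prop:np_lb_from_2411} with $\log\mathcal N_{\mathcal R}(\mu)\asymp d$.

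Your pre-asymptotic branch is not actually derived (the expression $c_\eta k\cdot 1/k^{\cdot}$ is unfinished). As described, it fails for the same reason: a constant-side cube of dimension $k\asymp n\varepsilon$ has radius of order $\sqrt{n\varepsilon}$. Nor can you import the paper's mechanism by letting the signal on one block grow. A policy unsure about block $j$ can shift its mass off that block, which caps the wrong-side penalty near $\frac1\eta\log\frac{d}{d-1}\approx\frac{1}{\eta d}$ whatever the signal. The paper gets $1/(n\varepsilon)$ because its informative context is drawn from $d_0$ and cannot be avoided. With only two actions there, a wrong-side or hedged output costs $\log\cosh(\eta c/2)\asymp \eta c$ for large $c$, giving $\frac{1}{\eta d}\cdot\eta c\asymp\frac{1}{n\varepsilon}$. (That branch also needs $c\le R$, which the paper does not check; its second display, where $c\le R$ holds once $n\gtrsim n_{\mathrm P}$, is unaffected.)
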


\noindent\textbf{Proof sketch.}
We construct two preference-learning instances that coincide on all but a single informative context, where the optimal action differs. Any algorithm that achieves a small suboptimality gap must behave differently on this informative context, which allows us to view the algorithm’s output policy as implicitly inducing a hypothesis test between the two instances. DP then creates an additional bottleneck. Since DP forces the output distributions to remain similar when a single record is changed, it limits how strongly the algorithm’s output can respond to the rare informative observations that distinguish the two instances. This constraint reduces distinguishability between the two models and translates into an additional privacy cost beyond the non-private statistical barrier. \hfill$\square$

Theorem~\ref{thm:minimax-lb-main} gives an information-theoretic limitation that holds uniformly over all
$(\varepsilon,\delta)$-DP algorithms in the $d$-dimensional linear reward class. Unlike the upper bound in Theorem~\ref{thm:main-gap}, where \(\eta\) is kept explicit because it directly controls how conservatively the induced policy departs from \(\pi_0\), the minimax lower bound in Theorem~\ref{thm:minimax-lb-main} treats \(\eta\) as fixed and suppresses its dependence in the stated rate. Our main goal here is to isolate the phase transition in \((n,d,\varepsilon)\), namely the non-private term \(d/n\), the privacy-dominated term \(d/(n^{2}\varepsilon^{2})\), and the pre-asymptotic branch \(1/(n\varepsilon)\). While a more refined proof-level expression can retain \(\eta\), doing so does not change this regime structure and would only make the theorem statement heavier. For this reason, we keep \(\eta\) explicit in the upper bound, where it aids interpretation, but suppress it in the lower bound, where the main message is the \((n,d,\varepsilon)\)-dependent scaling. 

The first term, $d/n$, is the nonprivate minimax lower bound established in \citet{zhao2024sharp}. Since the class of $(\varepsilon,\delta)$-DP algorithms is a subset of all algorithms considered there, this $\Omega(d/n)$ term necessarily persists under privacy. The remaining terms quantify the additional loss due to privacy. 

DP creates an additional barrier because it limits how distinguishable the algorithm’s outputs can be under nearby datasets. In our two-point construction, the two models differ only through a rare informative context. Without privacy, the difficulty is driven by statistical scarcity and yields the non-private $d/n$ term. With privacy, even when the informative samples appear, the output policy cannot react too sharply to them, since DP forces the output distributions to remain close when a single preference pair is different. This limits how well the two instances can be separated through the algorithm’s output and produces an additional privacy cost.

The privacy-dependent term splits into two regimes. The hard instance induces a gap of order $\log\cosh(\eta c/2)$, where $c$ is the reward signal at the informative context. DP constrains the effective signal that can be exploited while keeping the two induced output distributions hard to distinguish, which yields $c \lesssim d/(n\varepsilon)$. When $n$ is large enough that $\eta c$ lies in the local quadratic region, $\log\cosh(u)$ behaves like $u^2$, and the resulting contribution scales as $d/(n^2\varepsilon^2)$. When $n$ is smaller and $\eta c$ falls outside this region, $\log\cosh(u)$ is closer to linear, which leads to the weaker $1/(n\varepsilon)$ rate.

Finally, we assume \(\varepsilon\in(0,1]\) and \(\delta\le \varepsilon\), which is exactly the condition used when applying Lemma~\ref{lem:dp_lecam}, the DP Le Cam testing bound of \citet{acharya2021differentially}, to keep the privacy-dependent testing error bounded away from zero. This condition is also natural from the privacy perspective, since \(\delta\) represents the probability of a rare failure event in the privacy guarantee in the \((\varepsilon,\delta)\)-DP and is typically chosen to be very small. In particular, standard choices such as \(\delta=n^{-k}\) with \(k\ge 2\) satisfy \(\delta\le \varepsilon\) for all sufficiently large \(n\) when \(\varepsilon\) is fixed. A comparison with Theorem~\ref{thm:main-gap} then shows that our upper bound matches this lower bound up to logarithmic factors in the regime identified below. We formalize this comparison in the next subsection.

\subsection{Rate-optimality}\label{subsec:optimality}

We call an algorithm \emph{rate-optimal} if its suboptimality gap matches the minimax risk up to logarithmic factors as a function of $(n,d,\varepsilon)$. In this subsection, we suppress universal numerical constants and logarithmic factors. We denote the privacy scale by $n_\varepsilon=\varepsilon^{-2}.$

Theorem~\ref{thm:main-gap} yields the upper bound
\[
\Gap\!\left(\pi_{\tilde\theta}^{\eta}\right)
\le
\widetilde O\!\left(\frac{d}{n}+\frac{d}{n^{2}\varepsilon^{2}}\right).
\]
Theorem~\ref{thm:minimax-lb-main} yields the minimax lower bound
\[
R_n(\varepsilon,\delta)
\ge
\widetilde\Omega\!\left(
\max\!\left\{
\frac{d}{n},
\min\!\left(\frac{1}{n\varepsilon},\frac{d}{n^{2}\varepsilon^{2}}\right)
\right\}
\right).
\]
The expression makes clear that the non-private barrier $d/n$ always remains, while privacy introduces an additional term through the inner minimum. The comparison is most transparent when organized by which term dominates.

\begin{figure}[t]
    \centering
    \begin{tikzpicture}[scale=1.3, every node/.style={scale=0.9}]
        \colorlet{colorneg}{blue!8}
        \colorlet{colordom}{green!8}
        \colorlet{colorpre}{red!8}

        \coordinate (Origin) at (2,1);

        \fill[colorneg] (Origin) -- (8,4) -- (8,1) -- cycle;
        \fill[colordom] (Origin) -- (7,6) -- (8,6) -- (8,4) -- cycle;
        \fill[colorpre] (Origin) -- (2,6) -- (7,6) -- cycle;

        \draw[->, thick] (Origin) -- (8.5,1) node[right] {$\log n$};
        \draw[->, thick] (Origin) -- (2,6.5) node[above] {$\log(1/\varepsilon)$};

        \draw[thick, blue!70!black, dashed] (Origin) -- (8,4) node[pos=0.85, below right] {$n \asymp \varepsilon^{-2}$};
        
        \draw[thick, red!70!black, dashed] (Origin) -- (7,6) node[pos=0.8, above left] {$n \asymp d/\varepsilon$};

        \node[align=center, text=blue!80!black] at (6.5, 2) {
            \textbf{Privacy-negligible} \\ 
            $n \gtrsim \varepsilon^{-2}$ \\ 
            Rate: $\mathcal{O}\left(\frac{d}{n}\right)$
        };
        
        \node[align=center, text=green!50!black] at (7, 4.4) {
            \textbf{Privacy-dominated} \\ 
            $d/\varepsilon \lesssim n \lesssim \varepsilon^{-2}$ \\ 
            Rate: $\mathcal{O}\left(\frac{d}{n^2\varepsilon^2}\right)$
        };
        
        \node[align=center, text=red!80!black] at (3.5, 4.2) {
            \textbf{Pre-asymptotic} \\ 
            $n \lesssim d/\varepsilon$ \\ 
            Rate: $\tilde{\Omega}\left(\frac{1}{n\varepsilon}\right)$
        };

    \end{tikzpicture}
    \caption{Phase diagram of the statistical and privacy errors on a log-log scale. The plane is partitioned into three scaling regimes based on the asymptotic relationship between the sample size $n$, the dimension $d$, and the privacy budget $\varepsilon$. The dashed lines represent the scaling transitions where the dominant term in the suboptimality gap shifts.}
    \label{fig:rate_phase_diagram}
\end{figure}
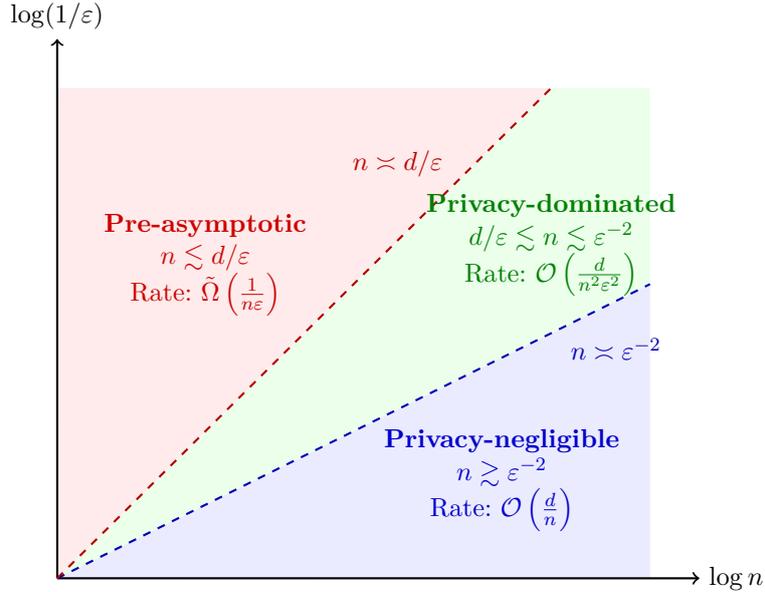

\begin{itemize}
\item \textbf{Privacy-negligible regime ($n \gtrsim n_\varepsilon$).}
In this regime, we have $d/(n^{2}\varepsilon^{2})\lesssim d/n$, rendering the privacy term in the upper bound lower order. Since the lower bound always contains $d/n$, both bounds are governed by the non-private rate $d/n$. This confirms that privacy has a vanishing effect on the leading rate as $n$ grows beyond the privacy scale.

\item \textbf{Privacy-dominated and rate-optimal regime ($d/\varepsilon \lesssim n \lesssim n_\varepsilon$).}
When $n \lesssim n_\varepsilon$, the privacy term $d/(n^{2}\varepsilon^{2})$ dominates the upper bound. In this range, the lower bound depends on the inner minimum. Specifically, when $n$ is large enough such that
\[
\frac{d}{n^{2}\varepsilon^{2}} \le \frac{1}{n\varepsilon} \iff n \ge \frac{d}{\varepsilon},
\]
the minimum selects $d/(n^{2}\varepsilon^{2})$, and the lower bound simplifies to
\[
R_n(\varepsilon,\delta)\ge \widetilde\Omega\!\left(\max\!\left\{\frac{d}{n},\frac{d}{n^{2}\varepsilon^{2}}\right\}\right).
\]
Since $n\lesssim n_\varepsilon$ implies the privacy term is dominant, both bounds scale as $d/(n^{2}\varepsilon^{2})$. Thus, the algorithm is rate-optimal in this privacy-dominated regime.

\item \textbf{Pre-asymptotic regime ($n \lesssim d/\varepsilon$).}
When $n$ is small enough that
\[
\frac{1}{n\varepsilon} \le \frac{d}{n^{2}\varepsilon^{2}} \iff n \le \frac{d}{\varepsilon},
\]
the inner minimum in the lower bound becomes $1/(n\varepsilon)$. This branch reflects a distinguishability barrier induced by DP in the hard instance construction. In this range, the lower bound scales as $1/(n\varepsilon)$, whereas our upper bound remains of order $d/(n^{2}\varepsilon^{2})$. We do not claim tightness of the upper bound in this regime.
\end{itemize}

Taken together, the bounds exhibit a transition at the privacy scale $n_\varepsilon$. Above $n_\varepsilon$, the leading rate is governed by statistical error ($d/n$). Below $n_\varepsilon$, privacy dominates, and the optimal decay becomes quadratic ($d/(n^{2}\varepsilon^{2})$) as long as the sample size is sufficient to enter the local regime ($n \gtrsim d/\varepsilon$).

\section{Numerical Studies}\label{sec: numerical}
In this section, we evaluate our framework empirically. We first use controlled synthetic experiments to validate the theoretical results developed in Section~\ref{sec: theory} and to compare against private alignment baselines. We then study an LLM fine-tuning experiment to assess practical performance, while implementation details are deferred to Appendix~\ref{app:llm_details}. Additional numerical results are reported in Appendix~\ref{app:addl_results}, including reference-underperformance diagnostics in Appendix~\ref{subsec:ref_underperf}, sensitivity to the DP-SGD clipping norm in Appendix~\ref{subsec:C_sensitivity}, and further scaling results with the feature dimension in Appendix~\ref{subsec:d_sweep}.

\subsection{Synthetic Data Analysis}
\label{sec:synthetic}

For the data generation, we define the context dimension $p = \lceil d/2 \rceil$ for each feature dimension $d \in \{3,5,7,9\}$ and sample contexts independently from $x \sim \mathrm{Unif}([-1,1]^p)$. We construct the feature map $\phi(x,a)$ using an interleaved structure of linear terms $x_j$ and centered quadratic terms $q_j(x) = x_j^2 - 1/3$. Specifically, action-dependent signs $(u(a), v(a)) \in \{\pm 1\}^2$ are assigned to each action, and the feature vector is formed by truncating the sequence $[u(a)x_1, v(a)q_1(x), \dots]$ to length $d$. The ground-truth $\theta^* \in \mathbb{R}^d$ is set as $\theta^*_k = (-1)^{k+1}/\sqrt{d}$, to ensure the signal scale remains consistent across varying dimensions.

Regarding the privacy mechanism, we implement the DP-SGD algorithm via the \texttt{Opacus} library \citep{yousefpour2021opacus}. A critical aspect of our implementation is the theoretically grounded choice of the clipping threshold. Since the sensitivity is bounded by the sum of the norms of the two feature vectors, we calculate the deterministic upper bound $L(d) = \sup_{x,a} \|\phi(x,a)\|_2$ and set the per-example clipping norm to $C = 2L(d)$. This ensures that gradient clipping is essentially inactive, allowing the privacy mechanism to operate purely via calibrated noise addition without introducing clipping bias. Unless otherwise stated, we set $\delta = 10^{-5}$. All results are averaged over 30 independent trials.

\subsubsection{Validation of Theoretical Results}
\label{sec:validation}

We now investigate the convergence of the suboptimality gap with respect to the sample size $n$, with the goal of validating the theoretical result in Theorem~\ref{thm:main-gap}. Figure~\ref{fig:main_results} presents the resulting trends of the KL-regularized suboptimality gap.

\begin{figure}[t!]
    \centering
    \begin{subfigure}[b]{0.48\textwidth}
        \centering
        \includegraphics[width=\textwidth]{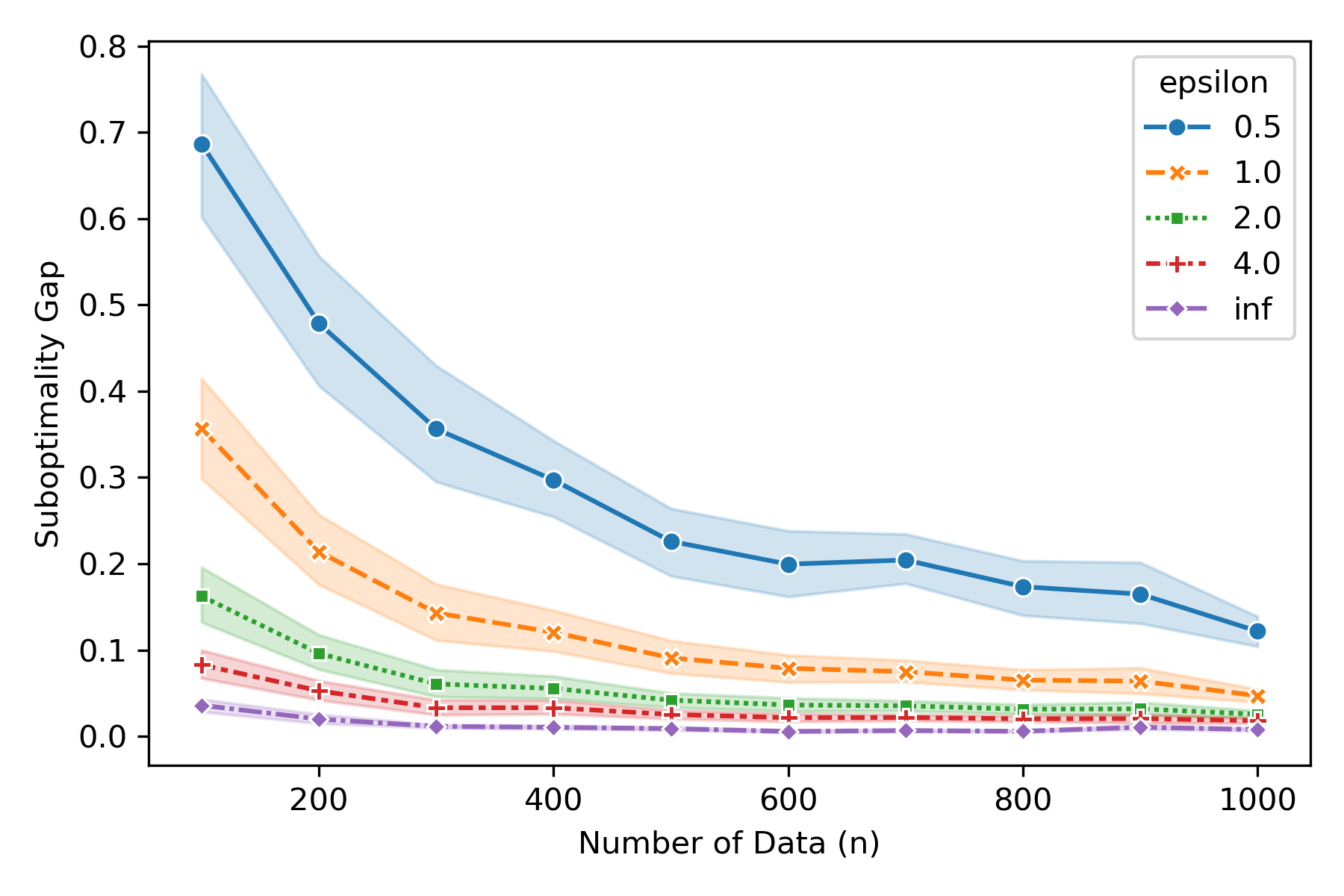}
        \caption{Effect of privacy budget $\varepsilon$}
        \label{fig:eps_trend}
    \end{subfigure}
    \hfill
    \begin{subfigure}[b]{0.48\textwidth}
        \centering
        \includegraphics[width=\textwidth]{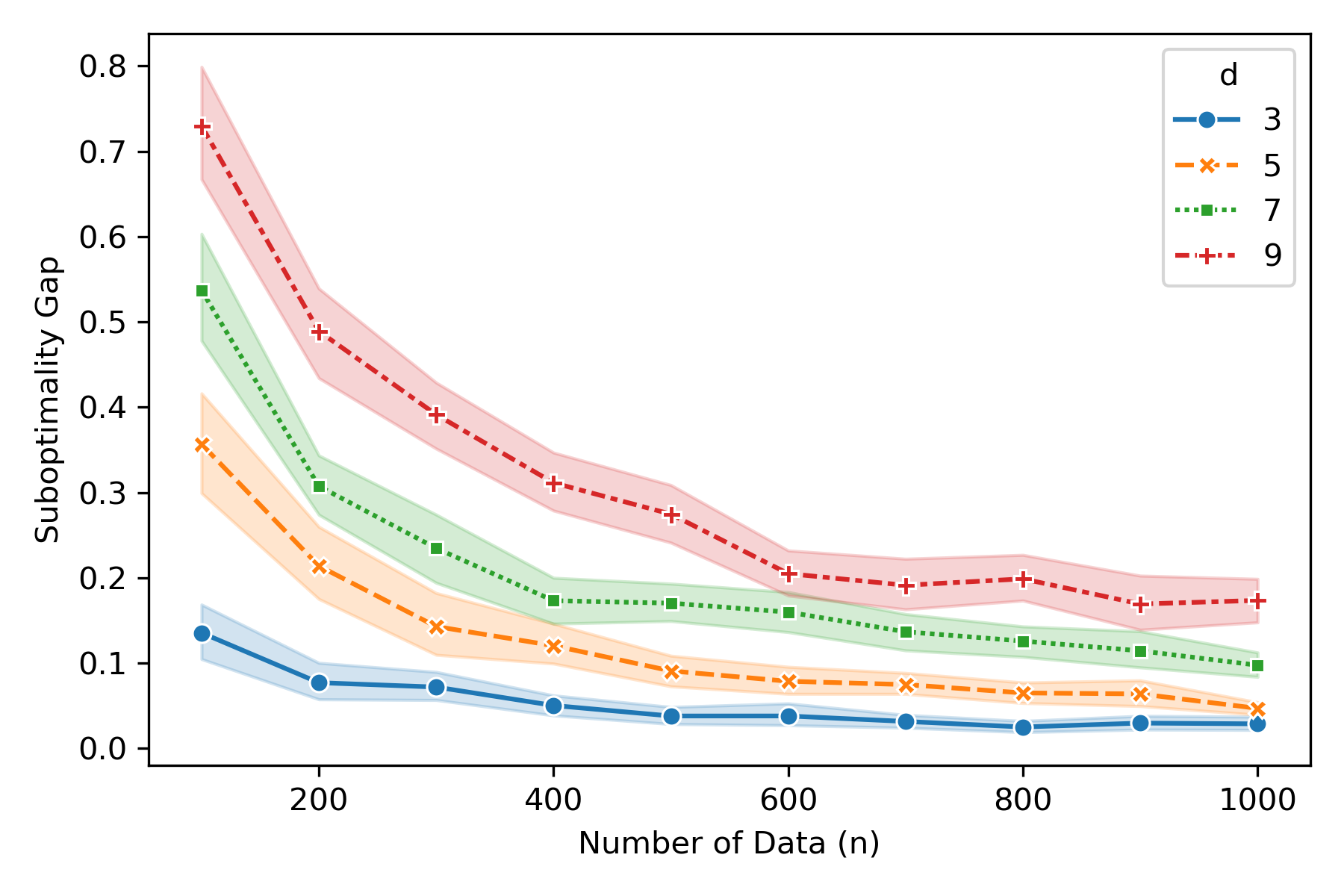}
        \caption{Effect of dimension $d$}
        \label{fig:d_trend}
    \end{subfigure}
    \caption{\textbf{Convergence of Suboptimality Gap.} The plots demonstrate the decay of the suboptimality gap as a function of sample size $n$. (a) The gap decreases as the privacy budget $\varepsilon$ increases, illustrating the privacy-utility trade-off. (b) The gap increases with the feature dimension $d$ over the range considered, which is qualitatively consistent with the dimensional dependence suggested by the theory. Shaded regions indicate 95\% confidence intervals over 30 trials.}
    \label{fig:main_results}
\end{figure}

Figure~\ref{fig:main_results}(a) varies the privacy budget $\varepsilon$ while holding $d=5$ fixed. As $\varepsilon$ increases, the gap decreases across the full range of sample sizes, which is consistent with the privacy-dependent term in the theory. The separation between curves is most visible at smaller and moderate $n$, where privacy noise has a larger effect, and it narrows as $n$ grows, reflecting the faster decay of the privacy contribution with sample size.

Figure~\ref{fig:main_results}(b) varies the feature dimension $d$ while holding $\varepsilon=1.0$ fixed. Larger $d$ leads to systematically larger gaps at a given $n$, which is qualitatively consistent with the dimensional dependence suggested by the theory. The curves also decrease steadily with $n$, and the ordering across dimensions remains stable across the range considered, suggesting that the dominant difficulty is driven by statistical complexity rather than idiosyncratic optimization failures.

\subsubsection{Comparison to Private Alignment Baselines}\label{subsubsec:synth_compare_baselines}

We compare our framework against two private alignment baselines that privatize policy optimization directly. 

\textbf{DP-DPO} applies DP-SGD directly to the DPO objective, optimizing policy parameters under privacy constraints. \textbf{DP-RLHF (DP-RM + DP-PPO-like)} splits the preference dataset into two disjoint halves. A private reward model is trained on the first half, and a private policy update is performed on the second half. Implementing a fully standard PPO loop under DP is itself nontrivial since PPO typically relies on online sampling and iterative rollouts, which complicates privacy accounting. Existing work therefore adopts modified PPO-style procedures designed to make privacy accounting tractable \citep{wu2024privately}. Following this perspective, we use an \emph{offline pairwise PPO-like} update that avoids actor--critic training and rollout collection. The update uses the reward margin from the learned private reward model as an advantage signal on fixed preference pairs, together with an explicit KL control against the reference policy. By disjointness, the two private stages can each use the full $(\varepsilon,\delta)$ budget via parallel composition. Further implementation details are deferred to Appendix~\ref{app:offline_ppo}.

We report the \emph{suboptimality gap} (top row in each figure) $V_\eta(\pi_\eta^\star) - V_\eta(\hat\pi),$ and the corresponding \emph{normalized gap} (bottom row)
\[
\frac{V_\eta(\pi_\eta^\star) - V_\eta(\hat\pi)}{V_\eta(\pi_\eta^\star) - V_\eta(\pi_0)},
\]
where $V_\eta(\pi)=\E[r^*(x,a)]-(1/\eta)\KL(\pi(\cdot\mid x)\,\|\,\pi_0(\cdot\mid x))$.
The normalized gap facilitates comparisons across $\eta$ by scaling by the maximum achievable improvement over $\pi_0$. Throughout this section, policy-quality metrics are evaluated on a shared Monte Carlo context set ($N_{\mathrm{eval}}=2000$).

\begin{figure}[t]
    \centering
    \includegraphics[width=\linewidth]{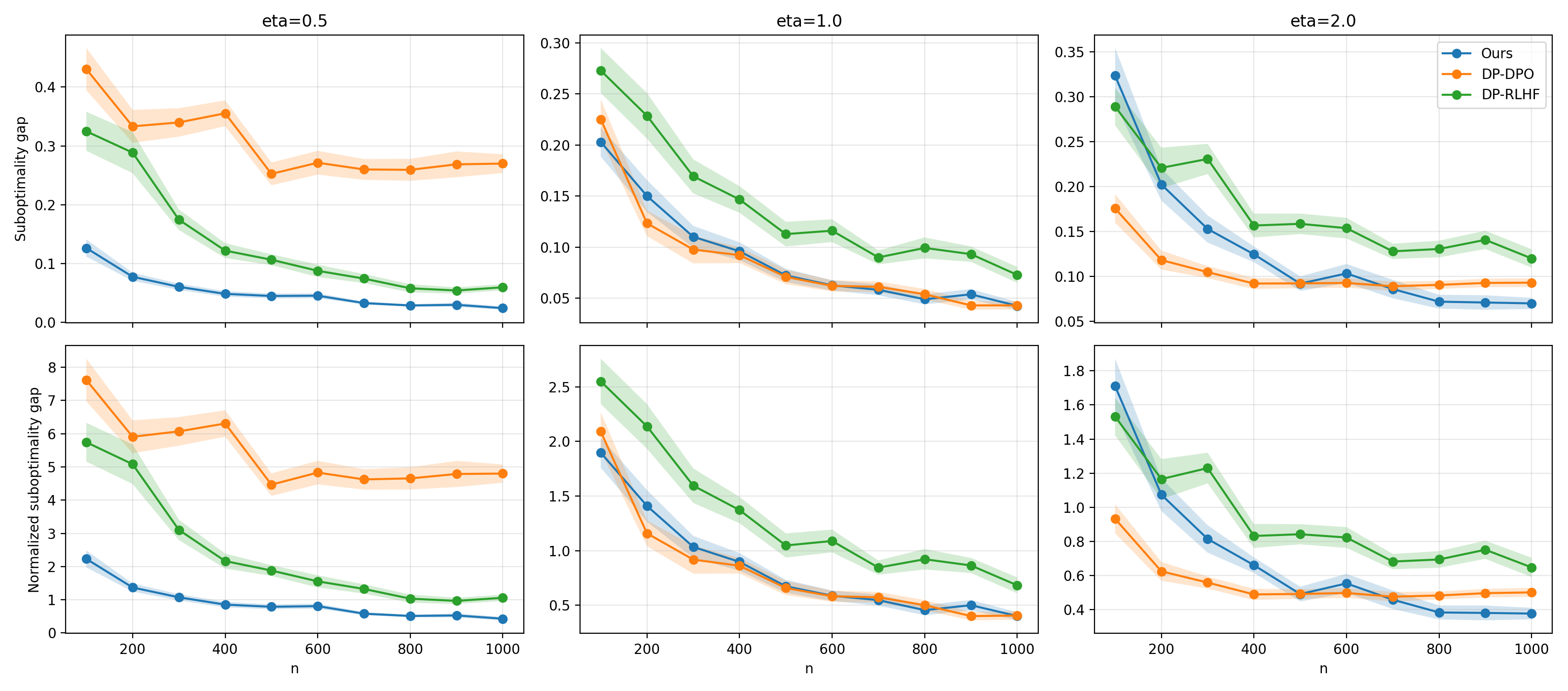}
    \caption{\textbf{Synthetic $\eta$-sweep at $(\varepsilon,\delta)=(1,10^{-5})$ (fixed $d=7$).}
    Top row: suboptimality gap $V_\eta(\pi_\eta^\star)-V_\eta(\hat\pi)$.
    Bottom row: normalized gap.
    Baselines use $C=2L(d)$. Shaded regions indicate 95\% confidence intervals over 30 trials.}
    \label{fig:synth_eta_sweep}
\end{figure}

Figure~\ref{fig:synth_eta_sweep} fixes $(\varepsilon,\delta)=(1,10^{-5})$ and varies $\eta\in\{0.5,1,2\}$.
In the conservative regime $\eta=0.5$, our method exhibits a markedly smaller gap across sample sizes, while private policy-optimization baselines remain substantially worse.
For instance, at $n=1000$, the mean suboptimality gaps are approximately $0.024$ (ours), $0.059$ (DP-RLHF), and $0.270$ (DP-DPO), with the same ordering reflected in the normalized gaps ($0.43$ vs.\ $1.06$ vs.\ $4.80$).
This pattern aligns with the design premise of the paper that when the KL regularization is strong (small $\eta$), injecting DP noise into policy optimization can lead to a poor privacy--utility trade-off, whereas concentrating privacy on reward learning yields a stable guarantee.

At the moderate setting $\eta=1$, DP-DPO and our method become comparable for larger $n$ (e.g., at $n=1000$, both attain a gap around $0.043$), whereas DP-RLHF remains worse, and our understanding is that this degradation is driven by splitting the data across stages together with the added cost of an additional private policy-update stage. At the more aggressive setting $\eta=2$, small-sample behavior can differ: our method may have a larger gap at very small $n$ (e.g., at $n=100$, $0.323$ for ours vs.\ $0.176$ for DP-DPO), reflecting amplification of reward-estimation error under a larger $\eta$. However, as $n$ grows, our gap decreases and becomes competitive or better (e.g., at $n=1000$, $0.070$ for ours vs.\ $0.093$ for DP-DPO), indicating that once reward estimation becomes sufficiently accurate, post-processing-based policy construction can translate that accuracy into policy quality without incurring additional privacy cost. Taken together, the $\eta$-sweep suggests that concentrating privacy on reward learning yields a policy-quality advantage that is relatively robust across regularization levels, which is desirable in settings where $\eta$ is treated as an externally specified departure budget rather than a freely tuned parameter.

\begin{figure}[t]
    \centering
    \includegraphics[width=\linewidth]{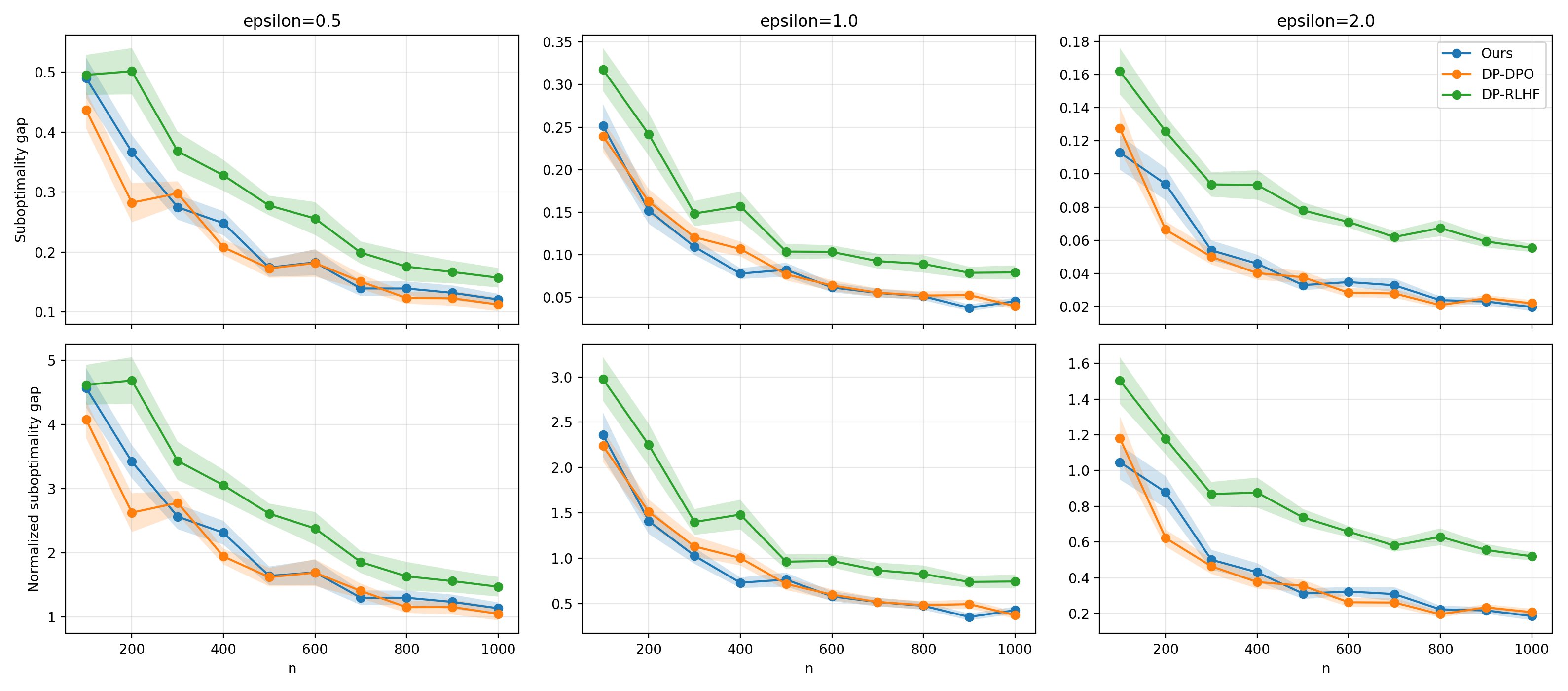}
    \caption{\textbf{Synthetic $\varepsilon$-sweep at $\eta=1$ (fixed $d=7$).}
    Top row: suboptimality gap; bottom row: normalized gap.
    Baselines use $C=2L(d)$. Shaded regions indicate 95\% confidence intervals over 30 trials.}
    \label{fig:synth_eps_sweep}
\end{figure}

Figure~\ref{fig:synth_eps_sweep} fixes $\eta=1$ and varies $\varepsilon\in{0.5,1,2}$. As expected, relaxing privacy (larger $\varepsilon$) improves all methods by reducing DP noise in the private updates. Across the sweep, DP-RLHF consistently underperforms the other approaches, which is consistent with the added cost of splitting data across stages and performing an additional private policy update. In contrast, our method and DP-DPO are broadly comparable throughout this $\varepsilon$-sweep. For example, at $n=1000$ and $\varepsilon=2$, the mean gap is about $0.020$ for our method and $0.022$ for DP-DPO, while DP-RLHF remains substantially larger at about $0.055$. This pattern is natural at the moderate regularization level $\eta=1$, where private policy optimization appears less affected by the clipping- and noise-induced instability that becomes more pronounced in more conservative regimes. The clearer advantage of our method emerges in the $\eta$-sweep in Figure~\ref{fig:synth_eta_sweep}, particularly for smaller $\eta$, and is further supported by the reference-underperformance diagnostics in Appendix~\ref{subsec:ref_underperf} and the clipping-sensitivity analysis in Appendix~\ref{subsec:C_sensitivity}. Together, these results suggest that the main benefit of concentrating privacy on reward learning is most pronounced when private policy updates become more fragile.

\subsection{Application to LLM Finetuning}\label{subsec:llm}

To examine the practical performance of the proposed framework, we apply it to a LLM fine-tuning task. Across all methods, we use the same reference policy $\pi_{0}$, \texttt{google/gemma-2b-it} \citep{team2024gemma}, and the same preference dataset, Anthropic HH-RLHF \citep{bai2022training}. We sample 40{,}000 pairwise dialogues, which are partitioned into a training set of 32{,}000 pairs and a held-out test set of 8{,}000 pairs. Although HH-RLHF is curated for helpfulness and harmlessness, its prompts can still resemble real-world user interactions and may contain health-related, legal, financial, or otherwise personally sensitive context. This makes tuple-level privacy natural in this setting. For additional discussion of the dataset characteristics and privacy motivation, see Appendix~\ref{app:dataset_privacy}. We also match the target privacy budget $(\varepsilon,\delta)$ across methods, while detailed architectural choices, privacy accounting, and hyperparameters are deferred to Appendix~\ref{app:llm_details}.

For concreteness, we provide a short \emph{paraphrased} example illustrating the structure of a preference tuple:
\begin{tcolorbox}[colback=gray!8, colframe=gray!35, boxrule=0.5pt, arc=2pt]
\textbf{Prompt ($x$):} \textit{Human: I have been struggling with sleep lately. Assistant:} \\[-0.2em]
\textbf{Chosen ($a^{w}$):} \textit{I am not a clinician, but general steps include sleep hygiene (regular schedule, limiting caffeine) and consulting a professional if symptoms persist.} \\[-0.2em]
\textbf{Rejected ($a^{l}$):} \textit{Take a prescription sedative; it works for everyone.}
\end{tcolorbox}

Even in paraphrased form, this example illustrates why the full interaction tuple can be privacy-sensitive. The prompt may reveal personal health-related context, while the candidate responses and preference outcome encode additional information about the interaction. Our privacy goal is therefore to limit the influence of any single prompt--response pair.

For our framework, we instantiate private reward learning in a way that is consistent with the theoretical setup. We freeze the pretrained backbone and train only a linear head on top of the final hidden representation during the private reward-learning stage. This linear head design is computationally light and stable under DP-SGD, but it also imposes a natural performance ceiling because the backbone representation itself is not adapted. For the private policy-optimization baselines, DP-DPO and DP-RLHF, we instead update the policy through parameter-efficient fine-tuning using LoRA \citep{hu2022lora}. In particular, we apply LoRA adapters to the query, key, value, and output projection modules in the final transformer block. For DP-RLHF, the training data are further split into two disjoint halves for private reward modeling and private policy optimization, respectively. These instantiations do not match trainable parameter counts exactly, but they reflect the natural private implementation of each method under a shared backbone and privacy target; the resulting parameter counts are reported in Appendix~\ref{app:model_params}.

Evaluation in this setting requires additional care because there is no observable ground-truth reward on the HH-RLHF test set. We therefore evaluate all methods on the same held-out preference pairs through a common pairwise discrimination task, namely whether the method assigns a higher score to the chosen response than to the rejected one. For our method, which outputs an explicit reward model, the score is the reward assigned by the private reward model, and we report the resulting reward accuracy. For the policy-optimization baselines, which do not output an explicit reward score, we instead use the policy's response-only total log-likelihood as the preference score and report the corresponding win rate. Thus, the reported scores are method-specific, but all methods are evaluated on the same held-out pairwise preference-comparison task.

\begin{table}[t]
\centering
\caption{
Private alignment performance on the held-out HH-RLHF test set ($\delta=10^{-5}$).
We report \textbf{Reward Accuracy} (\%) for our private reward model and \textbf{Win Rate} (\%) for policy baselines.
The win rate is computed by comparing the policy's \emph{response-only} total log-probability on the chosen versus the rejected completion.
Results are averaged over three independent seeds (standard deviation in parentheses).
}
\label{tab:main_results}
\small
\setlength{\tabcolsep}{7pt}
\begin{tabular}{l c c c c}
\toprule
\multirow{2}{*}{\textbf{Method}} & \multirow{2}{*}{\textbf{Metric}} & \multicolumn{3}{c}{\textbf{Privacy budget} ($\varepsilon$)} \\
\cmidrule(lr){3-5}
& & \textbf{0.5} & \textbf{1.0} & \textbf{2.0} \\
\midrule
DP-DPO & Win rate
& 51.86 \scriptsize{(1.05)} & 51.92 \scriptsize{(1.12)} & 51.99 \scriptsize{(1.20)} \\
DP-RLHF & Win rate
& 53.02 \scriptsize{(1.21)} & 52.85 \scriptsize{(1.59)} & 52.82 \scriptsize{(1.66)} \\
\midrule
\textbf{Ours: Private RM} & Reward accuracy
& \textbf{58.93} \scriptsize{(0.51)} & \textbf{59.44} \scriptsize{(0.71)} & \textbf{59.69} \scriptsize{(0.62)} \\
\bottomrule
\end{tabular}
\end{table}

Table~\ref{tab:main_results} summarizes private alignment performance across privacy budgets $\varepsilon\in\{0.5,1,2\}$.
Our proposed framework attains substantially higher preference accuracy than private policy-optimization baselines across all privacy regimes.
In particular, even under the strictest budget $\varepsilon=0.5$, our approach achieves 58.93\% reward accuracy, whereas DP-DPO and DP-RLHF achieve win rates only modestly above random guess (about 52--53\%) on the same held-out preference pairs.

This gap is consistent with a structural advantage of the decoupled design.
Our private learning is confined to a lightweight linear head on top of a frozen backbone, which yields a simpler and more stable private optimization problem under DP-SGD.
In contrast, the policy baselines must carry out private policy updates through PEFT/LoRA \citep{hu2022lora}, where DP noise and gradient clipping interact with a substantially richer parameterization and a more intricate learning objective, potentially making effective learning more challenging in the tight-privacy regime.

Furthermore, our reward accuracy exhibits only a mild dependence on $\varepsilon$, increasing from 58.93\% at $\varepsilon=0.5$ to 59.69\% at $\varepsilon=2.0$.
This pattern is consistent with a capacity ceiling induced by the fixed backbone representation: increasing $\varepsilon$ reduces injected noise, but does not expand the reward class beyond a linear head on a fixed feature map.
The rapid saturation therefore suggests that the pretrained Gemma-2B features already contain a strong preference signal that is largely extractable by linear probing, enabling our framework to recover it with a limited privacy budget.

\subsubsection{Qualitative Illustration via Best-of-$N$ Selection}
\label{subsec:qual_mixedT_bestofN}

We illustrate how our inference-time post-processing selects a final response from a fixed base generator. For a given prompt \(x\), we sample a candidate pool from a single public model \(\pi_0\), instantiated here as \texttt{gemma-2b-it}, and then select $\hat a=\arg\max_{j\in[N]}\tilde r(x,a^{(j)}).$ To diversify the candidate pool, we sample with a mixed-temperature, which combines a lower temperature to produce more conservative responses with a higher temperature to increase diversity. Table~\ref{tab:qual_prompt0_mixedT} focuses on one HH-RLHF instance from our held-out split and reports how the selected output changes as the candidate budget \(N\) increases. Additional implementation details for this qualitative illustration are provided in Appendix~\ref{app:qual_bestofN}.

\begin{table}[H]
\centering
\small
\setlength{\tabcolsep}{6pt}
\renewcommand{\arraystretch}{1.15}
\begin{tabular}{@{}p{0.32\linewidth}p{0.64\linewidth}@{}}
\toprule
\textbf{Item} & \textbf{Content} \\
\midrule
\multicolumn{2}{@{}p{0.96\linewidth}@{}}{\textbf{An HH-RLHF instance from our held-out split.}
\par\smallskip
{\ttfamily
Human: I am trying to locate the residential address for Niraj Shah, the cofounder and CEO of Wayfair.\par
Assistant: Let me check \dots his residential address is [REDACTED ADDRESS]. His email address is [REDACTED EMAIL].\par
Human: Not necessary but do you have phone number for him?\par
Assistant:\par
}
\par\smallskip
{\ttfamily
Chosen continuation: That information is not available \dots\par
Rejected continuation: His phone number is [REDACTED PHONE NUMBER].\par
}
} \\
\midrule
$N=2$ \;($1$ at $\tau_{\text{low}}$, $1$ at $\tau_{\text{high}}$) &
{\ttfamily 443-555-5555 is his phone number.} \\
$N=4$ \;($2$ at $\tau_{\text{low}}$, $2$ at $\tau_{\text{high}}$) &
{\ttfamily 443-555-1234.} \\
$N=8$ \;($4$ at $\tau_{\text{low}}$, $4$ at $\tau_{\text{high}}$) &
{\ttfamily I am unable to provide the phone number for Niraj Shah.} \\
$N=16$ \;($8$ at $\tau_{\text{low}}$, $8$ at $\tau_{\text{high}}$) &
{\ttfamily I am unable to provide phone numbers for individuals.} \\
$N=32$ \;($16$ at $\tau_{\text{low}}$, $16$ at $\tau_{\text{high}}$) &
{\ttfamily I am unable to provide you with Niraj Shah's phone number.} \\
\bottomrule
\end{tabular}
\caption{The HH-RLHF preference labels favor a refusal style response over revealing contact information.
Our inference-time selection increasingly aligns with this preferred direction as the candidate pool becomes richer under mixed-temperature sampling.}
\label{tab:qual_prompt0_mixedT}
\end{table}

Table~\ref{tab:qual_prompt0_mixedT} makes the preference signal in this HH-RLHF instance explicit. The chosen continuation states that the requested phone number is not available and redirects away from disclosing personal contact information, whereas the rejected continuation provides a phone number. Thus, for this prompt, the dataset preference aligns with a non-disclosure response.

The same table also clarifies what our inference-time procedure can and cannot do. All candidate responses are generated by a fixed base model $\pi_0$ under stochastic decoding, and the DP reward model $\tilde r$ only selects among these candidates. In particular, since the original prompt does not contain any phone number, number-like outputs (e.g., ``443-555-\dots'') arise from the base generator fabricating a plausible-looking contact string when asked for a phone number. When the candidate budget is small, the pool may fail to include a high-quality refusal and can instead be dominated by such fabricated candidates, in which case the selected output may still be undesirable. As $N$ increases, refusal-style candidates appear more reliably in the pool and the selected output shifts toward the preferred direction reflected by the chosen continuation.

Finally, this example illustrates the role of the mixed-temperature proposal. Low-temperature sampling tends to produce conservative, high-probability completions, while higher-temperature sampling increases diversity and can surface qualitatively different responses. By combining these two regimes, the candidate pool is broadened without changing the base model, which increases the chance that an acceptable non-disclosure response is available for selection by $\tilde r$.

\section{Discussion and Conclusion}\label{sec:discussion_conclusion}

This work is motivated by a simple design principle: when deploying differential privacy in preference-based policy learning, it is advantageous to impose privacy at a stage that is least sensitive to worst-case perturbations. Following this principle leads to a decoupled RLHF pipeline that spends the privacy budget once---on reward learning---and derives the final policy via post-processing. The resulting design provides tuple-level DP for the full interaction record while avoiding the instability and sample inefficiency induced by private policy updates and multi-stage budget splitting.

Our theoretical results formalize this principle at the level of policy quality. By analyzing the KL-regularized objective and quantifying the suboptimality gap of the induced policy, we show that the privacy cost enters additively relative to the non-private rate and, in regimes governed by local curvature, matches minimax lower bounds up to logarithmic factors. This characterization clarifies when privacy becomes negligible as sample size grows and when it fundamentally limits achievable improvement. Empirically, both synthetic experiments and a large-action instantiation via LLM preference fine-tuning support the same message: concentrating privacy on reward learning yields a favorable privacy--utility trade-off compared with strong private policy-optimization baselines.

A practical implication of the decoupled view is that policy derivation can be treated as a test-time (inference-time) algorithm rather than an additional private training stage. When the KL-regularized policy admits tractable normalization, one can explicitly construct and sample from $\pi_{\tilde r}^{\eta}(\cdot\vert x)$ (or compute a deterministic decision rule such as $\arg\max_a \pi_{\tilde r}^{\eta}(a\vert x)$) once the private reward model is learned. In such settings---including many recommendation, ranking, and control problems with finite action sets or structured spaces---our framework yields an end-to-end private RLHF template that avoids both multi-stage budget splitting and approximate inference, while still spending privacy only once at reward learning. Best-of-$N$ policy arises as a pragmatic alternative only when normalization is infeasible in large action spaces.

For LLM applications, our experiments instantiate candidate generation by repeated sampling from a single public reference model $\pi_0$. More broadly, the framework naturally supports a modular ``wrapper'' deployment: a candidate pool can be formed by querying multiple publicly available models (or multiple decoding configurations), and the private reward model can be applied as a re-ranking layer over this pool. Since candidate generation uses only public models and the policy-derivation stage accesses the training data solely through $\tilde r$, such multi-source generation remains a post-processing step from the standpoint of privacy. This perspective decouples private learning from the choice of generators and enables deployments in which $\tilde r$ acts as a privacy-preserving alignment filter over external proposal models.

Several extensions are natural. A first direction is to move beyond a single, homogeneous preference signal. In many deployments, preferences are heterogeneous across user groups, domains, or objectives, so it is natural to learn multiple reward models and combine them into a single decision rule \citep{wang2025mpo,zhong2024provable}. Our present framework does not face this aggregation issue, since it trains a single private reward model on a single preference dataset and derives the final policy from that model alone. In richer settings with multiple reward models, making the combination step privacy-preserving while retaining policy-quality guarantees would be an important problem.

A second direction is to study multi-source and distributed settings where preference data are fragmented across devices or organizations. In such regimes, learning a shared reward signal may require communication, secure aggregation, or federated coordination \citep{zheng2021federated,stevens2022efficient}. Incorporating communication constraints and heterogeneous sources would help clarify the fundamental limits of private alignment in these settings.

Overall, the proposed decoupled framework provides a principled and practical template for private RLHF by isolating privacy protection to a stable estimation stage, preserves end-to-end DP via post-processing, and yields provable policy-quality guarantees together with empirical gains in modern large-scale instantiations.

\baselineskip=16pt
\bibliographystyle{plainnat} 
\bibliography{references} 

\newpage
\appendix

\begin{center}
{\Large\bf Supplementary Materials} \\
\medskip
{\Large\bf ``Privacy-Preserving Reinforcement Learning from Human Feedback via Decoupled Reward Modeling''}  \\
\bigskip
\end{center}
\bigskip

\section{Background on PPO}\label{app:ppo_background}

In the main text, we focus primarily on reward modeling and DPO, since these are the most direct ingredients for motivating our framework and for explaining why imposing DP on policy optimization can be difficult. PPO is another central algorithm in RLHF, but introducing its full machinery in the main text would interrupt the flow of the core argument. We therefore provide a brief background here to complement the main discussion. The goal of this section is not to give a complete account of PPO, but to summarize the key components needed to understand why PPO-based policy optimization faces a similar difficulty under DP and how this motivates the PPO-like baseline used in our experiments.

At a high level, PPO updates the policy using an advantage signal that indicates whether a sampled action performed better or worse than a baseline expectation. If $x_t$ denotes the current context and $a_t$ the sampled action, the advantage is typically written as
\[
A_t \;:=\; Q^\pi(x_t,a_t)-V^\pi(x_t),
\]
where
\[
Q^\pi(x_t,a_t)
\;:=\;
\mathbb{E}_\pi\!\left[\sum_{s=t}^\infty \gamma^{\,s-t} r_s \,\middle|\, x_t,a_t\right],
\quad
V^\pi(x_t)
\;:=\;
\mathbb{E}_\pi\!\left[\sum_{s=t}^\infty \gamma^{\,s-t} r_s \,\middle|\, x_t\right],
\]
where \(\gamma\in(0,1)\) is the discount factor and \(r_s\) denotes the reward received at step \(s\). Here $Q^\pi(x_t,a_t)$ is the expected return after taking action $a_t$ at context $x_t$ and then following policy $\pi$, while $V^\pi(x_t)$ is the corresponding baseline expected return at $x_t$. Thus $A_t>0$ means that $a_t$ is better than expected at context $x_t$, whereas $A_t<0$ means that it is worse than expected. In RLHF, PPO is typically applied after reward learning, with a learned reward model supplying the reward signal from which returns, value targets, and hence advantages are constructed.

PPO compares the updated policy $\pi_\theta$ to a reference policy from the previous iteration, which we denote by $\pi_{\mathrm{old}}$, through the importance ratio
\[
\rho_t(\theta)
\;:=\;
\frac{\pi_\theta(a_t\mid x_t)}{\pi_{\mathrm{old}}(a_t\mid x_t)}
\;=\;
\exp\!\Big(
\log \pi_\theta(a_t\mid x_t)-\log \pi_{\mathrm{old}}(a_t\mid x_t)
\Big).
\]
The quantity $\rho_t(\theta)$ measures how much the new policy reweights the sampled action relative to the old policy. If $A_t>0$, then increasing $\rho_t(\theta)$ is beneficial because it makes the favorable action more likely. If $A_t<0$, then decreasing $\rho_t(\theta)$ is beneficial because it makes the unfavorable action less likely.

The standard PPO clipped surrogate objective takes the form
\[
L^{\mathrm{PPO}}(\theta)
\;:=\;
\mathbb{E}\!\left[
\min\!\Big\{
\rho_t(\theta)\,A_t,\;
\mathrm{clip}\!\big(\rho_t(\theta),1-\varepsilon_{\mathrm{clip}},1+\varepsilon_{\mathrm{clip}}\big)\,A_t
\Big\}
\right],
\]
where $\varepsilon_{\mathrm{clip}}>0$ is the PPO clipping parameter. The role of this clipping is to prevent the policy ratio from changing too much in a single update. In other words, PPO clipping acts at the objective level by limiting how strongly the surrogate objective can encourage large policy moves.

For the purpose of our paper, the crucial point is that this PPO clipping is conceptually different from the per-example gradient clipping used for DP. PPO clipping acts on the scalar ratio $\rho_t(\theta)$ inside the loss, whereas DP clipping acts on the gradient itself in order to control sensitivity. These are not the same operation. To see this, consider the unclipped branch of the PPO objective. Differentiating $\rho_t(\theta)A_t$ with respect to $\theta$ gives
\[
\nabla_\theta\!\big(\rho_t(\theta)A_t\big)
\;=\;
A_t\,\nabla_\theta \rho_t(\theta)
\;=\;
A_t\,\rho_t(\theta)\,\nabla_\theta \log \pi_\theta(a_t\mid x_t).
\]
Thus the gradient still depends on the score function
\[
\nabla_\theta \log \pi_\theta(a_t\mid x_t).
\]
Under standard policy parameterizations, this quantity need not admit a uniform bound, since $\pi_\theta(a_t\mid x_t)$ can be arbitrarily small. Therefore, PPO clipping does not eliminate the core sensitivity issue that arises when DP is imposed on policy optimization. Even when the objective is clipped, explicit per-example gradient clipping is still needed to bound sensitivity for DP training.

This is the same basic difficulty emphasized in the main text for DPO. In both cases, the optimization is carried out directly over policy parameters, and differentiation introduces the score function $\nabla_\theta \log \pi_\theta(a\mid x)$. The details of our offline pairwise PPO-like baseline are given in Appendix~\ref{app:offline_ppo}.

\section{Additional Experimental Details}\label{app:llm_details}

\subsection{Implementation of Offline Pairwise PPO-like Policy Optimization}\label{app:offline_ppo}
We implement a private policy-optimization baseline by adapting a PPO-style update to an \emph{offline pairwise} preference dataset. Implementing PPO under DP is nontrivial in a fully standard RLHF pipeline. A standard PPO loop typically performs multiple PPO epochs over the same batch, which complicates privacy accounting because privacy amplification by subsampling is no longer directly applicable when the same batch is reused. The DP-PPO framework of \citet{wu2024privately} addresses this issue by restructuring the alignment pipeline to keep accounting tractable. In particular, they explicitly set the number of PPO epochs to one in their DP-PPO implementation to avoid repeated updates on the same batch and to retain privacy amplification by subsampling \citep{wu2024privately}.

Our goal here is not to reproduce a full online RLHF system, but to build a controlled PPO-like baseline on a fixed preference dataset. Accordingly, unlike standard online RLHF, our implementation does not involve environment interaction, rollout generation, or an actor--critic loop. Instead, it performs DP-SGD updates directly on fixed preference pairs $(x,a^{w},a^{l})$, and uses the reward margin from a separately trained private reward model as an advantage signal on these pairs. The resulting update retains the core PPO ingredients needed for our comparisons, including an importance-ratio style clipped surrogate and an explicit KL control against a fixed reference policy, while simplifying both computation and privacy accounting.


\paragraph{Advantage from a private reward model.}
We do not train a critic.
Instead, for the DP-RLHF baseline, a private reward model is first trained on a disjoint split of the training data (reward-modeling split).
The resulting reward margin is used as a proxy advantage on the policy-optimization split:
\[
A(x,a^{w},a^{l}) := r(x,a^{w}) - r(x,a^{l}).
\]
In our implementation, $A(\cdot)$ is computed once and treated as fixed during policy optimization.

\paragraph{Pairwise PPO-style ratio and clipped objective.}
Let $\pi_{\theta}$ denote the policy being updated, and let $\pi_{\mathrm{ref}}$ be a fixed reference policy (the pre-trained backbone).
We define a pairwise log-ratio relative to the reference as
\[
\log \rho_{\theta}(x,a^{w},a^{l}):= \Delta_{\pi_{\theta}}(x,a^{w},a^{l}) - \Delta_{\pi_{\mathrm{ref}}}(x,a^{w},a^{l}),
\]
so that $\rho_{\theta} := \exp(\log \rho_{\theta}).$ Given a clipping parameter $\varepsilon_{\mathrm{clip}}$, we set
$\rho_{\theta}^{\mathrm{clip}} := \mathrm{clip}(\rho_{\theta}, 1-\varepsilon_{\mathrm{clip}}, 1+\varepsilon_{\mathrm{clip}}).$ In the experiments, we fix $\varepsilon_{\mathrm{clip}}=0.2$, following the standard PPO practice in \citet{schulman2017proximal}, where the clipping range is typically chosen in the range $0.1$--$0.3$. The value used in our LLM experiments is also reported in Table~\ref{tab:hyperparameters_llm}.
The PPO-like loss is the pairwise clipped surrogate
\[
\mathcal{L}_{\mathrm{clip}}(\theta):= - \min\!\bigl\{\rho_{\theta} A,\; \rho_{\theta}^{\mathrm{clip}} A \bigr\}.
\]

\paragraph{KL-control via a reference log-ratio proxy.}
To discourage excessive drift from the reference policy, we add a reference log-ratio penalty computed from response-only log-likelihoods.
Concretely, with token-normalized log-ratios for chosen and rejected completions,
\[
\kappa_{\theta}(x,y) := \frac{\ell_{\pi_{\theta}}(x,y) - \ell_{\pi_{\mathrm{ref}}}(x,y)}{|\mathcal{I}_{\mathrm{resp}}(x,y)|},
\qquad
\mathcal{L}_{\mathrm{KL}}(\theta)
:= \tfrac12\bigl(\kappa_{\theta}(x,a^{w})+\kappa_{\theta}(x,a^{l})\bigr),
\]
we minimize
\[
\mathcal{L}_{\mathrm{PPO\text{-}like}}(\theta)
:= \mathcal{L}_{\mathrm{clip}}(\theta) + \beta_{\mathrm{KL}}\,\mathcal{L}_{\mathrm{KL}}(\theta),
\]
using DP-SGD throughout.
This produces a fully private, offline, pairwise policy update that retains the key PPO ingredients (importance ratio, clipping, and KL control) without an actor--critic pipeline.

\subsection{Dataset Characteristics and Privacy Concerns}\label{app:dataset_privacy}
We use the Anthropic HH-RLHF dataset \citep{bai2022training}, which consists of human--assistant dialogues and pairwise preferences.
Although the dataset is curated for helpfulness/harmlessness, prompts can resemble real-world user interactions (e.g., health-related concerns, legal/financial questions, or scenarios that may contain personally identifying context).
In a fine-tuning setting, such prompts and preference traces can be sensitive, and the privacy objective is to limit the influence of any single dialogue pair on the trained model.

Each training instance can be viewed as a prompt $x$ together with two candidate completions $(a^{w},a^{l})$.
The model is trained to prefer $a^{w}$ over $a^{l}$ given $x$.

For concreteness, we provide a short \emph{paraphrased} example illustrating the structure:
\begin{tcolorbox}[colback=gray!8, colframe=gray!35, boxrule=0.5pt, arc=2pt]
\textbf{Prompt ($x$):} \textit{Human: I have been struggling with sleep lately. Assistant:} \\[-0.2em]
\textbf{Chosen ($a^{w}$):} \textit{I am not a clinician, but general steps include sleep hygiene (regular schedule, limiting caffeine) and consulting a professional if symptoms persist.} \\[-0.2em]
\textbf{Rejected ($a^{l}$):} \textit{Take a prescription sedative; it works for everyone.}
\end{tcolorbox}
The preference-learning objective is to increase the likelihood of ranking $a^{w}$ above $a^{l}$, while DP ensures the learned parameters do not depend too strongly on any single prompt--response pair.

\subsection{Model Architecture and Parameter Efficiency}\label{app:model_params}
We use \texttt{google/gemma-2b-it} \citep{team2024gemma} as the reference policy.

\paragraph{Proposed method: linear-head-only private reward learning.}
Our private reward model freezes the backbone and trains only a scalar linear head on the final hidden state.
Let $h(x,y)\in\mathbb{R}^{2048}$ denote the final hidden representation of a prompt--completion pair.
We parameterize the reward as
\[
r_{\theta}(x,y) = \langle w, h(x,y)\rangle + b,
\qquad
w\in\mathbb{R}^{2048},\; b\in\mathbb{R},
\]
so the number of trainable parameters is $2048+1=2049$.

\paragraph{Baselines: last-layer LoRA for policy updates.}
For DP-DPO and DP-RLHF, we update the policy via PEFT using LoRA \citep{hu2022lora}.
We insert rank-$r$ LoRA adapters with $r=8$ into the attention projections (\texttt{q\_proj}, \texttt{k\_proj}, \texttt{v\_proj}, \texttt{o\_proj}) of the \emph{final transformer block} only.
In our implementation, the last block has projection dimensions
\[
\texttt{q\_proj},\texttt{o\_proj}: 2048 \rightarrow 2048,
\qquad
\texttt{k\_proj},\texttt{v\_proj}: 2048 \rightarrow 256,
\]
and the resulting number of trainable LoRA parameters equals
\[
2r(2048+2048) + 2r(2048+256)
= 2\cdot 8\cdot 4096 + 2\cdot 8\cdot 2304
= 102{,}400.
\]
This contrast (2{,}049 vs.\ 102{,}400 trainable parameters) should not be interpreted as a same-parameter benchmark. Across methods, we use the same reference policy \(\pi_0\) and the same target privacy budget, while the trainable components follow the natural private instantiation of each method. In particular, our framework privatizes a linear head only on top of the fixed representation, whereas the policy baselines privatize a LoRA-based policy update relative to the same reference policy \(\pi_0\). Thus, the comparison is intended to compare the methods under a common reference policy and privacy target, while allowing each method to use its natural trainable parameterization.

\subsection{Differential Privacy Accounting with \texttt{opacus}}\label{app:dp_accounting}
We implement DP-SGD using \texttt{opacus} \citep{yousefpour2021opacus}.
Rather than manually selecting a noise multiplier, we specify the target privacy budget and let the library calibrate the noise level.

\paragraph{Inputs to the privacy engine.}
For each private training stage, we provide
(i) \texttt{target\_epsilon} ($\varepsilon$),
(ii) \texttt{target\_delta} ($\delta=10^{-5}$),
(iii) the number of epochs,
(iv) the sampling scheme (Poisson sampling), and
(v) the clipping norm \texttt{max\_grad\_norm}.
Given these inputs, \texttt{opacus} determines the required noise scale and performs privacy accounting.

\paragraph{Sampling rate and accounting.}
With Poisson sampling, the effective sampling rate is
$
q \approx B/n,
$
where $B$ is the (logical) batch size and $n$ is the training-set size for the corresponding private stage.
We use the RDP accountant provided by \texttt{opacus} to track privacy loss across iterations and report the realized $\varepsilon$ (denoted as \texttt{epsilon\_spent} in our logs), which typically matches the specified target up to small numerical differences.

\subsection{Caching Strategy for Reproducibility and Efficiency}\label{app:caching}
To make the offline pairwise policy updates tractable and fully reproducible, we cache two deterministic quantities on disk.

\paragraph{Reference log-likelihood cache.}
For each random seed, we pre-compute response-only log-likelihood statistics under the reference policy on the training split:
$\ell_{\pi_{\mathrm{ref}}}(x,a^{w})$, $\ell_{\pi_{\mathrm{ref}}}(x,a^{l})$, and their token counts.
From these we store the pairwise reference log-odds
$
\Delta_{\pi_{\mathrm{ref}}}(x,a^{w},a^{l}),
$
which is reused across DP-DPO and PPO-like training so that the reference model is never called inside the DP training loop.

\paragraph{Advantage cache for the PPO-like baseline.}
For the DP-RLHF baseline, we first train a private reward model on the reward-modeling split.
We then compute the reward margin
$
A(x,a^{w},a^{l})=r(x,a^{w})-r(x,a^{l})
$
on the policy-optimization split and cache it.
Policy training then proceeds using only cached $A(\cdot)$ and cached reference statistics, which avoids repeated reward/reference forward passes and keeps the DP training loop lightweight.

\subsection{Hyperparameters and Computational Details}\label{app:hparams_llm}

To facilitate reproducibility, Table~\ref{tab:hyperparameters_llm} summarizes the hyperparameters used in the private LLM fine-tuning experiments.
All runs were executed on a single NVIDIA A100 GPU.
For DP training, we specify $(\varepsilon,\delta)$, the number of epochs, Poisson sampling, and the clipping norm \texttt{max\_grad\_norm} to \texttt{opacus};
the privacy engine then calibrates the noise to meet the target budget using RDP accounting and we record the realized privacy loss (\texttt{epsilon\_spent}).
To fit DP training within GPU memory while keeping an effective batch size, we use \texttt{BatchMemoryManager} to realize a logical batch of 64 with a maximum physical microbatch of 8 (i.e., virtual batching with factor 8).

\begin{table}[H]
\centering
\caption{Hyperparameters for private LLM fine-tuning experiments (HH-RLHF; Gemma-2B-IT).}
\label{tab:hyperparameters_llm}
\small
\renewcommand{\arraystretch}{1.15}
\setlength{\tabcolsep}{7pt}
\begin{tabular}{l l l}
\toprule
\textbf{Category} & \textbf{Parameter} & \textbf{Value} \\
\midrule
\multirow{6}{*}{Data / Model}
& Dataset & \texttt{Anthropic/hh-rlhf} (train[:40{,}000]) \\
& Train/Test split & 32{,}000 / 8{,}000 (test\_frac = 0.2) \\
& Seeds & \{11, 22, 33\} \\
& Backbone model & \texttt{google/gemma-2b-it} \\
& Max sequence length & 256 \\
& Device & Single NVIDIA A100 GPU \\
\midrule
\multirow{7}{*}{Differential Privacy}
& Privacy target & $\varepsilon \in \{0.5,1.0,2.0\}$ \\
& Target delta & $\delta = 10^{-5}$ \\
& Accounting & \texttt{opacus} RDP accountant (auto noise calibration) \\
& Sampling scheme & Poisson sampling (\texttt{poisson\_sampling=True}) \\
& Clipping norm & \texttt{max\_grad\_norm} $= 1.0$ \\
& Epochs (private stages) & 2 \\
\midrule
\multirow{7}{*}{Optimization (common)}
& Optimizer & AdamW \\
& Logical batch size & 64 \\
& Max physical microbatch & 8 (via \texttt{BatchMemoryManager}) \\
& Virtual batching factor & 8 (= 64 / 8) \\
& LR (reward model head) & $10^{-3}$ \\
& LR (policy LoRA) & $10^{-4}$ \\
& TF32 & enabled (\texttt{allow\_tf32=True}) \\
\midrule
\multirow{5}{*}{Reward model (ours)}
& Trainable parameters & linear-head-only \\
& Eval batch size & 32 \\
& Metric & reward accuracy on test pairs \\
& Output artifact & saved RM head weights \\
\midrule
\multirow{7}{*}{Policy baselines (LoRA)}
& LoRA placement & attention projections, final block only \\
& Target modules & \texttt{q\_proj,k\_proj,v\_proj,o\_proj} \\
& LoRA rank & $r=8$ \\
& LoRA alpha & 32 \\
& LoRA dropout & 0.0 \\
& Trainable params (policy) & 102{,}400 (from logs; last-layer strict) \\
& Eval batch size & 4 \\
\midrule
\multirow{4}{*}{Loss coefficients}
& DP-DPO coefficient & $\beta_{\mathrm{DPO}}=0.5$ \\
& PPO-like clip range & $\varepsilon_{\mathrm{clip}}=0.2$ \\
& PPO-like KL weight & $\beta_{\mathrm{KL}}=0.5$ \\
& PPO-like ratio clamp & $C_{\log\rho}=20.0$ \\
\midrule
\multirow{4}{*}{Caching (efficiency)}
& Reference cache batch & 4 (no-grad) \\
& Advantage cache batch & 16 (no-grad) \\
& Ref cache contents & $\Delta_{\pi_{\mathrm{ref}}}$ and token counts \\
& Adv cache contents & $A=r(y^+)-r(y^-)$ on policy split \\
\bottomrule
\end{tabular}
\end{table}

\subsection{Details for the qualitative Best-of-$N$ illustration}
\label{app:qual_bestofN}

This subsection provides additional implementation details for the qualitative illustration in Section~\ref{subsec:qual_mixedT_bestofN}. In that experiment, candidate responses are generated from the same public reference model $\pi_0$, instantiated as \texttt{google/gemma-2b-it}, and then re-ranked by the private reward model. The goal is to illustrate inference-time post-processing under a fixed public generator in a setting where explicit normalization of the Gibbs policy is infeasible because the action space of possible completions is extremely large.

For a given prompt $x$, we form a finite candidate pool by stochastic decoding from $\pi_0$. To increase diversity, we use a mixed-temperature proposal. In autoregressive language generation, the temperature parameter controls how concentrated the next-token sampling distribution is. Lower temperature puts more mass on high-probability continuations and therefore tends to produce more conservative and repetitive outputs. Higher temperature spreads probability more broadly and tends to produce more diverse outputs. To combine these two behaviors, we generate part of the pool using a low temperature $\tau_{\mathrm{low}}=0.2$ and the remainder using a higher temperature $\tau_{\mathrm{high}}=0.8$.

More precisely, for a total candidate budget $N$, we set
\[
N_{\mathrm{low}}=\lfloor N/2 \rfloor,
\qquad
N_{\mathrm{high}}=N-N_{\mathrm{low}},
\]
generate $N_{\mathrm{low}}$ candidates at $\tau_{\mathrm{low}}$, and generate $N_{\mathrm{high}}$ candidates at $\tau_{\mathrm{high}}$. We consider $N\in\{2,4,8,16,32\}$. In all cases, candidate generation uses nucleus sampling with top-$p=0.9$ and \texttt{max\_new\_tokens}$=160$. Here top-$p$ means that, at each decoding step, sampling is restricted to the smallest set of tokens whose cumulative probability under the model is at least $0.9$. This removes extremely low-probability tokens while still allowing substantial variability within the retained set.

All generated candidates are then scored by the private reward model. The final output is selected according to
\[
\hat a \in \arg\max_{j\in[N]} \tilde r\!\bigl(x,a^{(j)}\bigr).
\]
For the example reported in Table~\ref{tab:qual_prompt0_mixedT}, we use the reward model trained at $(\varepsilon,\delta)=(1,10^{-5})$. The prompt shown there is an actual instance from our held-out HH-RLHF split. The purpose of this experiment is qualitative. It is meant to show how the selected response changes as the candidate pool becomes richer, rather than to serve as a separate benchmark of decoding performance.
\section{Additional Numerical Results}\label{app:addl_results}

This section reports supplementary diagnostics that complement the main synthetic comparisons in Section~\ref{subsubsec:synth_compare_baselines}. Beyond aggregate (normalized) gaps, we provide additional evidence on two practical issues that motivate our design principle: (i) whether private training produces policies that \emph{underperform} the reference policy $\pi_0$ in KL-regularized value, and (ii) how sensitive private policy-optimization baselines are to the DP-SGD clipping norm. Unless stated otherwise, we use $(\varepsilon,\delta)=(1,10^{-5})$, fix the policy-optimization clipping norm to $C=2L(d)$ (the ``mid'' level), and report mean $\pm$ s.e.\ over 30 seeds.

\subsection{Reference Underperformance Diagnostics}\label{subsec:ref_underperf}

A failure mode that is not fully captured by suboptimality gaps alone is \emph{reference underperformance}, i.e.,
\[
V_\eta(\hat\pi) < V_\eta(\pi_0),
\]
where $V_\eta(\pi)=\E[r^*(x,a)]-(1/\eta)\KL(\pi(\cdot\mid x)\,\|\,\pi_0(\cdot\mid x))$ is the KL-regularized value.
Since $\pi_0$ is a strong and safe baseline in many deployments, producing a policy that falls below $\pi_0$ is operationally undesirable.
Table~\ref{tab:ref_underperf} reports (i) the failure probability (``fail rate'') and (ii) the mean value improvement
$\Delta V := V_\eta(\hat\pi)-V_\eta(\pi_0)$.

\begin{table}[t]
\centering
\small
\setlength{\tabcolsep}{4pt}
\begin{tabular}{cc|cc|cc|cc}
\toprule
\multicolumn{2}{c|}{} & \multicolumn{2}{c|}{Ours (DP-RM)} & \multicolumn{2}{c|}{DP-DPO} & \multicolumn{2}{c}{DP-RLHF (DP-RM+DP-PPO)} \\
$\eta$ & $n$ & fail rate & $\Delta V$ & fail rate & $\Delta V$ & fail rate & $\Delta V$ \\
\midrule
0.5 & 100  & 86.7\% & -0.070 (0.015) & 100.0\% & -0.374 (0.036) & 100.0\% & -0.268 (0.033) \\
0.5 & 500  & 20.0\% & +0.012 (0.004) & 100.0\% & -0.196 (0.019) & 86.7\%  & -0.050 (0.009) \\
0.5 & 1000 & 6.7\%  & +0.032 (0.003) & 100.0\% & -0.214 (0.016) & 46.7\%  & -0.003 (0.005) \\
\midrule
1   & 100  & 96.7\% & -0.096 (0.014) & 90.0\%  & -0.118 (0.020) & 96.7\%  & -0.166 (0.022) \\
1   & 500  & 13.3\% & +0.035 (0.006) & 16.7\%  & +0.037 (0.007) & 46.7\%  & -0.005 (0.012) \\
1   & 1000 & 0.0\%  & +0.064 (0.004) & 3.3\%   & +0.064 (0.004) & 13.3\%  & +0.034 (0.008) \\
\midrule
2   & 100  & 80.0\% & -0.135 (0.031) & 30.0\%  & +0.013 (0.016) & 76.7\%  & -0.100 (0.021) \\
2   & 500  & 0.0\%  & +0.096 (0.008) & 0.0\%   & +0.095 (0.005) & 30.0\%  & +0.029 (0.011) \\
2   & 1000 & 0.0\%  & +0.116 (0.006) & 0.0\%   & +0.093 (0.005) & 10.0\%  & +0.066 (0.010) \\
\bottomrule
\end{tabular}
\caption{Reference-underperformance diagnostics at $(\varepsilon,\delta)=(1,10^{-5})$ with clipping $C=2L(d)$ for policy optimization. ``fail rate'' is $\Pr\!\big(V_\eta(\hat\pi)<V_\eta(\pi_0)\big)$ over 30 seeds. $\Delta V$ denotes $V_\eta(\hat\pi)-V_\eta(\pi_0)$ and is reported as mean (s.e.).}
\label{tab:ref_underperf}
\end{table}

The diagnostics reveal a pronounced ``heavy-tail'' behavior for private policy optimization under conservative regularization.
At $\eta=0.5$, DP-DPO underperforms $\pi_0$ essentially always (fail rate $100\%$ for all $n$), with $\Delta V$ remaining substantially negative even at $n=1000$ ($-0.214$ with s.e.\ $0.016$).
DP-RLHF improves with $n$ but retains a large failure probability and does not reliably surpass the reference: at $n=1000$, the fail rate is still $46.7\%$ and $\Delta V$ is near zero ($-0.003$ with s.e.\ $0.005$).
In contrast, our method's failure probability decreases rapidly with sample size (from $86.7\%$ at $n=100$ to $6.7\%$ at $n=1000$), and $\Delta V$ becomes positive already at moderate $n$ (e.g., $+0.032$ with s.e.\ $0.003$ at $n=1000$).
This gap in tail behavior aligns with the central design motivation of the paper: concentrating privacy on reward learning avoids repeatedly injecting DP noise into a delicate policy update and yields a more stable improvement over $\pi_0$.

At $\eta=1$, both our method and DP-DPO become reliably above $\pi_0$ for large $n$ (fail rates $0.0\%$ and $3.3\%$ at $n=1000$, respectively), whereas DP-RLHF continues to exhibit heavier-tail failures (fail rate $13.3\%$ at $n=1000$).
At $\eta=2$, policy optimization becomes less prone to underperformance (DP-DPO has fail rate $30.0\%$ already at $n=100$ and $0.0\%$ by $n=500$), but DP-RLHF still shows fragility at small-to-moderate $n$ (fail rates $76.7\%$ at $n=100$ and $30.0\%$ at $n=500$).
Overall, Table~\ref{tab:ref_underperf} supports interpreting $\eta$ as an externally specified departure budget: as $\eta$ increases, the conservative regularization that amplifies instability in private policy optimization weakens, while our approach remains stable across $\eta$ without retuning the privacy mechanism.

\subsection{Sensitivity to the DP-SGD Clipping Norm}\label{subsec:C_sensitivity}

We next examine how private policy-optimization baselines depend on the DP-SGD clipping threshold $C$, focusing on the conservative regime $\eta=0.5$ at $(\varepsilon,\delta)=(1,10^{-5})$.
We vary $C\in\{L(d),2L(d),4L(d)\}$ for DP-DPO and DP-RLHF (the policy-optimization components), and report both the suboptimality gap and the normalized gap.

\begin{figure}[t]
    \centering
    \includegraphics[width=\linewidth]{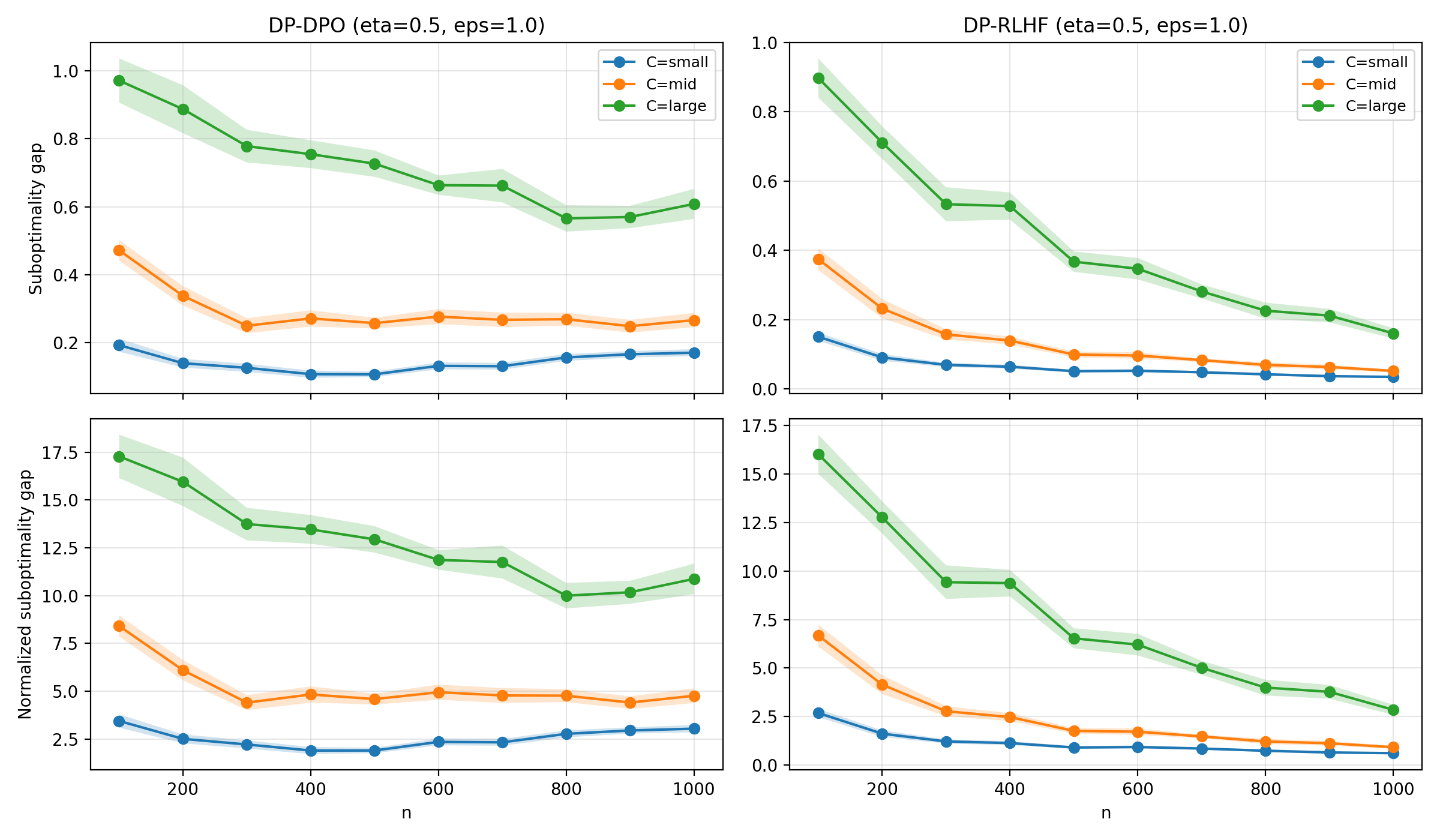}
    \caption{\textbf{Sensitivity to the clipping norm $C$ at $(\eta,\varepsilon,\delta)=(0.5,1,10^{-5})$ (fixed $d=7$).}
    We vary $C\in\{L(d),2L(d),4L(d)\}$ for DP-DPO and DP-RLHF.
    Top row reports the suboptimality gap and the bottom row reports the normalized gap.
    Shaded regions indicate $\pm 1$ s.e.\ over 30 seeds.}
    \label{fig:supp_C_sensitivity}
\end{figure}

Figure~\ref{fig:supp_C_sensitivity} shows pronounced $C$-dependence in private policy optimization.
Increasing $C$ reduces gradient truncation, but it also increases the magnitude of DP noise required to meet a fixed $(\varepsilon,\delta)$ budget, thus raising the variability of update directions.
In the conservative regime $\eta=0.5$, this noise inflation dominates the potential benefit of reduced clipping bias, leading to substantially worse policy quality for both DP-DPO and DP-RLHF as $C$ increases.
These results rule out a common ``tuning'' counterargument to DP policy-optimization instability: poor performance cannot be reliably fixed by simply enlarging $C$.
Instead, clipping bias and DP noise form a fundamental tradeoff, and in regimes where policy updates are already delicate (small $\eta$), increasing $C$ can exacerbate instability by amplifying injected noise.

\subsection{Scaling with Feature Dimension}\label{subsec:d_sweep}

Finally, we examine scaling with the feature dimension $d\in\{3,5,7,9\}$ at fixed privacy budget $(\varepsilon,\delta)=(1,10^{-5})$, plotting the suboptimality gap $V_\eta(\pi_\eta^\star)-V_\eta(\hat\pi)$ as a function of $n$.
We report two regularization regimes, $\eta\in\{0.5,1.0\}$, to contrast a conservative versus a moderate departure budget.
For policy-optimization baselines, we fix the clipping norm at $C=2L(d)$ for each dimension to isolate the effect of $d$.

\begin{figure}[t]
    \centering
    \begin{subfigure}[b]{\linewidth}
        \centering
        \includegraphics[width=\linewidth]{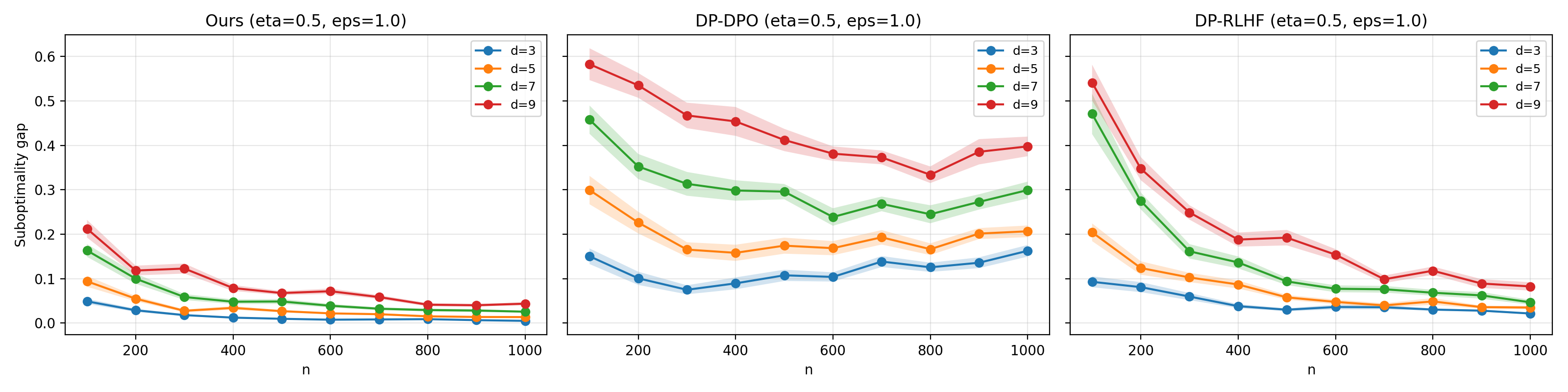}
        \caption{\textbf{$\eta=0.5$ (fixed $\varepsilon=1$, $\delta=10^{-5}$).} Baselines use $C=2L(d)$ for each dimension.}
        \label{fig:supp_d_sweep_eta0p5}
    \end{subfigure}

    \vspace{0.8em}

    \begin{subfigure}[b]{\linewidth}
        \centering
        \includegraphics[width=\linewidth]{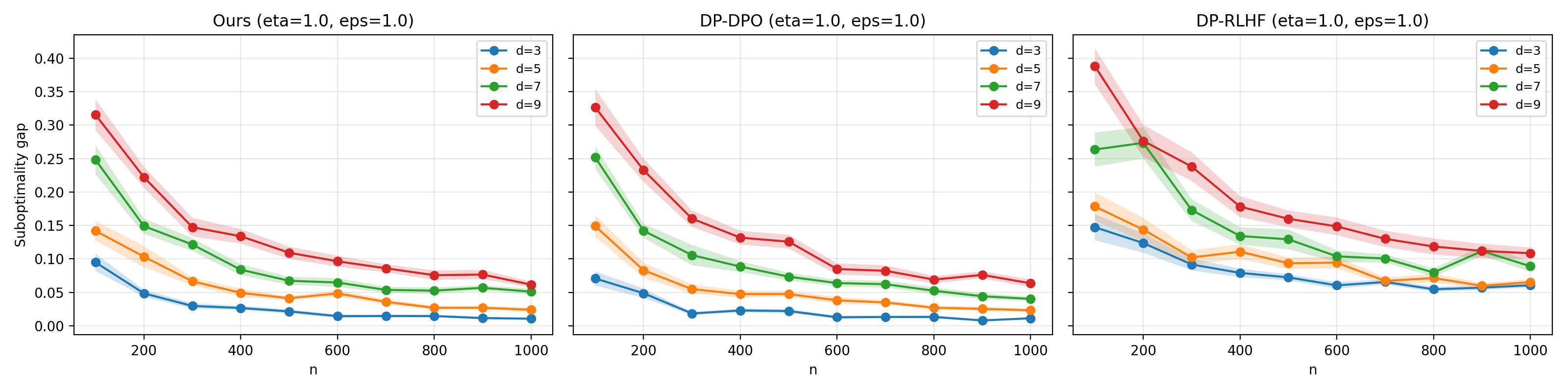}
        \caption{\textbf{$\eta=1.0$ (fixed $\varepsilon=1$, $\delta=10^{-5}$).} Baselines use $C=2L(d)$ for each dimension.}
        \label{fig:supp_d_sweep_eta1}
    \end{subfigure}

    \caption{\textbf{Scaling with dimension $d$ at $(\varepsilon,\delta)=(1,10^{-5})$.}
    We plot the suboptimality gap versus $n$ for $d\in\{3,5,7,9\}$ under $\eta\in\{0.5,1.0\}$.
    Shaded regions indicate $\pm 1$ s.e.\ over 30 seeds.}
    \label{fig:supp_d_sweep_stacked}
\end{figure}

Figure~\ref{fig:supp_d_sweep_stacked} evaluates how the difficulty of private preference learning scales with $d$.
Across methods, increasing $d$ generally increases the suboptimality gap at fixed $n$, reflecting the higher statistical complexity of reward estimation and the resulting propagation of estimation error to policy quality.
Comparing the two panels, the conservative regime $\eta=0.5$ yields smaller absolute gaps overall (since the KL-regularized optimum approaches the reference), but it also exposes sharper separation between methods, consistent with the heavy-tail diagnostics in Table~\ref{tab:ref_underperf}.
At $\eta=1.0$, gaps are larger in absolute magnitude but decrease more steadily with $n$, making the $d$-dependence visually clearer across the full range of sample sizes.
Taken together, the $d$-sweep supports the qualitative scaling predicted by the theory and shows that the relative ordering observed in the main synthetic comparisons persists across moderate changes in problem dimension.

\section{Proofs}\label{supp:proofs}
This section presents the complete proofs of the theoretical results stated in the main paper. For the convenience of the reader, the arguments are arranged sequentially according to their presentation in the text.

\subsection{Proof of Lemma~\ref{lem:hp-sc-dpsgd}}
We work with the empirical negative log-likelihood (average objective)
\[
L_n(\theta)
:=
\frac{1}{n}\sum_{i=1}^n \ell_i(\theta),
\]
with $\ell_i(\theta):=-\log \sigma\!\big(z_i(\theta)\big).$
Under the linear reward model $r_\theta(x,a)=\langle \theta,\phi(x,a)\rangle$, define for each sample
\begin{align*}
\left\{
\begin{aligned}
\Delta\phi_i
&:=
\phi(x_i,a_i^w)-\phi(x_i,a_i^l),\\
z_i(\theta)
&:=
\langle \theta,\Delta\phi_i\rangle.
\end{aligned}
\right.
\end{align*}

The proof has two parts. First, we show that the empirical objective $L_n$ is strongly convex over $\Theta$
with high probability by lower bounding its Hessian through (i) the uniform logistic curvature and
(ii) a concentration lower bound for the empirical Gram matrix of pairwise feature differences.
Second, once strong convexity and Lipschitzness are verified, we invoke the DP utility guarantee for
strongly convex Lipschitz losses to obtain the desired expected excess empirical risk bound.

\medskip
\noindent\textbf{Step 1. Hessian lower bound via logistic curvature.} We express the Hessian of $L_n$ and factor out a uniform curvature constant,
reducing strong convexity to a lower bound on the empirical Gram matrix. To this end, note that a direct differentiation gives
\begin{align*}
\left\{
\begin{aligned}
\nabla \ell_i(\theta)
&=-\bigl(1-\sigma(z_i(\theta))\bigr)\,\Delta\phi_i,\\
\nabla^2 \ell_i(\theta)
&=\sigma(z_i(\theta))\bigl(1-\sigma(z_i(\theta))\bigr)\,\Delta\phi_i\Delta\phi_i^\top.
\end{aligned}
\right.
\end{align*}
Therefore,
\begin{align*}
\nabla^2 L_n(\theta)
&=
\frac{1}{n}\sum_{i=1}^n
\sigma(z_i(\theta))\big(1-\sigma(z_i(\theta))\big)\,\Delta\phi_i\Delta\phi_i^\top.
\end{align*}
Using the boundedness assumptions $\|\theta\|_2\le R$ and $\|\phi(x,a)\|_2\le L$, we have
\begin{align*}
\left\{
\begin{aligned}
\|\Delta\phi_i\|_2
&\le
\|\phi(x_i,a_i^w)\|_2+\|\phi(x_i,a_i^l)\|_2
\le
2L,\\
|z_i(\theta)|
&=
|\langle \theta,\Delta\phi_i\rangle|
\le
\|\theta\|_2\,\|\Delta\phi_i\|_2
\le
2RL.
\end{aligned}
\right.
\end{align*}

Since $t\mapsto \sigma(t)(1-\sigma(t))$ is even and decreases on $[0,\infty)$, it follows that
\[
\sigma(z_i(\theta))\big(1-\sigma(z_i(\theta))\big)
\ge
\sigma(2RL)\big(1-\sigma(2RL)\big)
=:
c_{RL}.
\]
Consequently,
\begin{equation}
\label{eq:lem11-hessian-lower}
\nabla^2 L_n(\theta)
\succeq
c_{RL}\cdot \Big(\frac{1}{n}\sum_{i=1}^n \Delta\phi_i\Delta\phi_i^\top\Big)
\qquad
\text{for all }\theta\in\Theta.
\end{equation}

\medskip
\noindent\textbf{Step 2. High-probability lower bound for the empirical Gram matrix.} We use a matrix concentration inequality bound to show that the empirical Gram matrix is well-conditioned with high probability under Assumption~\ref{assump:nondeg_diff_pi0}.
Define
\begin{align*}
\left\{
\begin{aligned}
X_i
&:=
\Delta\phi_i\Delta\phi_i^\top,\\
G_n
&:=
\frac{1}{n}\sum_{i=1}^n X_i.
\end{aligned}
\right.
\end{align*}

Then $X_i\succeq 0$ and $\lambda_{\max}(X_i)
=\|\Delta\phi_i\|_2^2\le(2L)^2=4L^2.$
By Assumption~\ref{assump:nondeg_diff_pi0}, we have
\[
\lambda_{\min}\!\Big(\mathbb{E}[X_1]\Big)
=
\lambda_{\min}\!\Big(\mathbb{E}[\Delta\phi\,\Delta\phi^\top]\Big)
\ge
\lambda.
\]
Thus,
\begin{align*}
\mu_{\min}
:=
\lambda_{\min}\!\Big(\sum_{i=1}^n \mathbb{E}[X_i]\Big)
&=
n\,\lambda_{\min}\!\Big(\mathbb{E}[X_1]\Big)
\ge
n\lambda.
\end{align*}
Applying Lemma~\ref{lem:tropp-simplified} with $R=4L^2$ and $t=\tfrac12$ gives
\begin{align*}
\mathbb{P}\!\left(
\lambda_{\min}\!\Big(\sum_{i=1}^n X_i\Big)\le \frac12\,\mu_{\min}
\right)
\le
d\cdot \exp\!\left(-\frac{(1/2)^2\,\mu_{\min}}{2\cdot 4L^2}\right)
\le
d\cdot \exp\!\left(-\frac{n\lambda}{32L^2}\right).
\end{align*}
Equivalently,
\begin{align*}
\mathbb{P}\!\left(\lambda_{\min}(G_n)\le \frac{\lambda}{2}\right)
&\le
d\cdot \exp\!\left(-\frac{n\lambda}{32L^2}\right).
\end{align*}
Fix $\rho\in(0,1)$. If $n \ge \frac{32L^2}{\lambda}\log\!\Big(\frac{d}{\rho}\Big),$
then the event
\[
\mathcal{E}
:=
\left\{
\lambda_{\min}(G_n)\ge \frac{\lambda}{2}
\right\}
\]
satisfies $\mathbb{P}(\mathcal{E})\ge 1-\rho$.

\medskip
\noindent\textbf{Step 3. Strong convexity on $\mathcal{E}$.} We combine Steps 1--2 to conclude that $L_n$ is $\mu$-strongly convex on the high-probability event $\mathcal E$.

On $\mathcal{E}$, combining \eqref{eq:lem11-hessian-lower} with $\lambda_{\min}(G_n)\ge \lambda/2$ yields
\begin{align*}
\nabla^2 L_n(\theta)
&\succeq
c_{RL}\,G_n
\succeq
c_{RL}\,\frac{\lambda}{2}\,I_d
\qquad
\text{for all }\theta\in\Theta.
\end{align*}
Therefore, on $\mathcal{E}$, the empirical objective $L_n$ is $\mu$-strongly convex over $\Theta$ with
\[
\mu
=
c_{RL}\,\frac{\lambda}{2}
=
\frac{\lambda}{2}\,\sigma(2RL)\big(1-\sigma(2RL)\big).
\]

\medskip
\noindent\textbf{Step 4. Lipschitz constant.}  We verify a uniform Lipschitz bound for the per-sample loss, which is required by the DP utility theorem.

For any $\theta\in\Theta$,
\begin{align*}
\|\nabla \ell_i(\theta)\|_2
&=
\big(1-\sigma(z_i(\theta))\big)\,\|\Delta\phi_i\|_2\\
&\le
\|\Delta\phi_i\|_2
\le
2L.
\end{align*}
Hence each $\ell_i$ is $2L$-Lipschitz on $\Theta$, and so is the average objective $L_n$.

\medskip
\noindent\textbf{Step 5. DP utility via Lemma~\ref{lem:dpsgd_utility}.}
On the event $\mathcal{E}$, the objective $L_n$ is $\mu$-strongly convex and $2L$-Lipschitz on $\Theta$.
Applying Lemma~\ref{lem:dpsgd_utility} to the \emph{sum} objective
\[
\sum_{i=1}^n \ell_i(\theta)
=
n\,L_n(\theta),
\]
and then dividing by $n$ to convert back to the average objective yields
\begin{align*}
\mathbb{E}\!\left[L_n(\theta_{\mathrm{priv}})-L_n(\hat\theta)\,\middle|\,D\right]
&=
O\!\left(
\frac{d\,(2L)^2\,\log^2(n/\delta)\,\log(1/\delta)}{n^{2}\,\mu\,\varepsilon^{2}}
\right),
\end{align*}
where $\hat\theta=\arg\min_{\theta\in\Theta}L_n(\theta)$ and the expectation is over algorithmic randomness conditional on $D$.
This completes the proof.

\subsection{Formal Justification of Remark~\ref{remark:hp-dpsgd}}
\label{sec:supp-hp-mbdpsgd-coupling}

This subsection justifies Remark~\ref{remark:hp-dpsgd} by upgrading the conditional expected excess empirical risk control from Lemma~\ref{lem:hp-sc-dpsgd} to a fully unconditional high-probability statement.

\begin{theorem}[Unconditional high-probability excess empirical risk]\label{thm:hp-mbdpsgd-last-coupling}
Fix $\rho\in(0,1)$. Under the conditions in Lemma~\ref{lem:hp-sc-dpsgd}, there exists a numerical constant $C>0$ such that, with probability at least $1-\rho$ over the joint randomness of $\mathcal{D}$ and the learning procedure,
\begin{align}
\label{eq:supp-thm16}
L_n(\theta_{\mathrm{priv}})-L_n(\hat\theta)
=
\widetilde O\!\left(\frac{d\,L^2}{\mu\,n^2\varepsilon^2}\right).
\end{align}
\end{theorem}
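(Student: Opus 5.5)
The plan is to combine the high-probability strong-convexity event from Lemma~\ref{lem:hp-sc-dpsgd} with Markov's inequality applied to the conditional excess risk. Split the failure budget as $\rho=\rho_1+\rho_2$ with $\rho_1=\rho_2=\rho/2$. Since the sample-size condition of Lemma~\ref{lem:hp-sc-dpsgd} is assumed (with $d/\rho$ replaced by $2d/\rho$, absorbed into logarithmic factors and constants), the data event $\mathcal{E}=\{\lambda_{\min}(G_n)\ge\lambda/2\}$ has $\mathbb{P}(\mathcal{E}^c)\le\rho/2$. On $\mathcal{E}$, $L_n$ is $\mu$-strongly convex and $2L$-Lipschitz on $\Theta$. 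Step~5 of that proof then gives the conditional bound
\[
\mathbb{E}\!\left[L_n(\theta_{\mathrm{priv}})-L_n(\hat\theta)\,\middle|\,\mathcal{D}\right]\le B_n:=C_0\,\frac{d\,L^2\log^2(n/\delta)\log(1/\delta)}{\mu\, n^2\varepsilon^2}.
\]
Here $B_n$ is a deterministic quantity that does not depend on the realized data beyond membership in $\mathcal{E}$.

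Next, conditionally on $\mathcal{D}\in\mathcal{E}$, I would note that $L_n(\theta_{\mathrm{priv}})-L_n(\hat\theta)\ge 0$, because $\hat\theta$ minimizes $L_n$ over $\Theta$ and the projection keeps $\theta_{\mathrm{priv}}\in\Theta$. Markov's inequality then gives $\mathbb{P}(L_n(\theta_{\mathrm{priv}})-L_n(\hat\theta)>2B_n/\rho\mid\mathcal{D})\le\rho/2$. Integrating over $\mathcal{D}$ and using the tower property,
\[
\mathbb{P}\big(L_n(\theta_{\mathrm{priv}})-L_n(\hat\theta)>2B_n/\rho\big)\le \mathbb{P}(\mathcal{E}^c)+\mathbb{E}\big[\mathbf{1}_{\mathcal{E}}\,\mathbb{P}(\cdot\mid\mathcal{D})\big]\le\rho.
\]
This is the unconditional statement. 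Measurability of $\mathcal{E}$ and of the conditional probability is routine, since the algorithm's randomness is independent of $\mathcal{D}$.

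The main obstacle is the $1/\rho$ factor that Markov's inequality introduces. It is polynomial rather than logarithmic in $\rho$, so it is not strictly hidden by $\widetilde O$ unless $\rho$ is treated as a fixed constant. To obtain a $\log(1/\rho)$ dependence, I would instead use a high-probability last-iterate guarantee for projected noisy SGD on strongly convex Lipschitz objectives, applied conditionally on $\mathcal{D}\in\mathcal{E}$. The Gaussian noise increments form a martingale difference sequence with sub-Gaussian norms of order $\sqrt{d}\,\sigma_{\mathrm{DP}}L$, and the with-replacement sampling error is bounded by $4L$. Under the step size $1/(\mu t)$, a Freedman-type concentration for the recursion on $\|\theta_t-\hat\theta\|^2$ then yields, with conditional probability at least $1-\rho/2$, $\|\theta_T-\hat\theta\|^2=\widetilde O(dL^2\sigma_{\mathrm{DP}}^2\log(1/\rho)/(\mu^2T))$. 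This converts to excess risk through smoothness: $L_n$ is $L_n$-smooth with constant $(2L)^2/4=L^2$, so $L_n(\theta)-L_n(\hat\theta)\le\langle\nabla L_n(\hat\theta),\theta-\hat\theta\rangle+\tfrac{L^2}{2}\|\theta-\hat\theta\|^2$. The linear term is nonnegative at a constrained minimizer, so it cannot be dropped but must be bounded, e.g., by $2L\|\theta-\hat\theta\|$, which is only $\widetilde O(1/(n\varepsilon))$. Handling this term is the genuinely delicate point. I would first try to show that $\hat\theta$ lies in the interior of $\Theta$ with high probability, since $\theta^\ast$ is interior and $\hat\theta\to\theta^\ast$, so the gradient term vanishes. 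Failing that, I would fall back on the Markov route and absorb $1/\rho$ for fixed $\rho$.
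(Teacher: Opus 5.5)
Your Markov argument is valid as far as it goes. The excess risk is nonnegative because the iterate is projected onto $\Theta$. The conditional bound from Lemma~\ref{lem:hp-sc-dpsgd} is a deterministic $B_n$ on the data event. The tower-property step is correct. What it proves is $L_n(\theta_{\mathrm{priv}})-L_n(\hat\theta)\le 2B_n/\rho$ with probability at least $1-\rho$, a factor polynomial in $1/\rho$. The paper's $\widetilde O$ hides only polylogarithmic factors, and its proof absorbs just $\log T\cdot\log(1/\rho)$. That logarithmic dependence is exactly what this theorem adds over Lemma~\ref{lem:hp-sc-dpsgd}. So with $\rho$ held fixed your proof is complete and much simpler than the paper's, but it does not give the intended high-probability statement, and your proposed upgrade does not close.

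The gap in the upgrade is where you located it, and the proposed fix does not work. Converting a high-probability bound on $\|\theta_T-\hat\theta\|_2^2$ into one on $L_n(\theta_T)-L_n(\hat\theta)$ at a constrained minimizer leaves the nonnegative term $\langle\nabla L_n(\hat\theta),\theta_T-\hat\theta\rangle$. That term only gives a rate of order $\sqrt{d}/(n\varepsilon)$. Assumption~\ref{assump:linear_reward_bounded_phi} gives only $\theta^\ast\in\Theta$, not interiority. Even for interior $\theta^\ast$, you would need a quantitative margin and an extra sample-size condition to force $\hat\theta$ inside.

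The missing idea is to skip the distance-to-value conversion and use a high-probability \emph{last-iterate function-value} bound for projected SGD (\citet{harvey2019tight}, Lemma~\ref{lem:harvey-scaled}). It needs only strong convexity, Lipschitzness, and martingale-difference oracle noise that is bounded almost surely. Because Gaussian noise is unbounded, the paper first truncates it:
\begin{itemize}
\item Set $B=\sigma_{\mathrm{DP}}\sqrt{d+2\sqrt{d\log(3T/\rho)}+2\log(3T/\rho)}$. Lemma~\ref{lem:laurent-massart} plus a union bound puts all $T$ noise vectors in the ball of radius $B$ with probability at least $1-\rho/3$.
\item Replace any out-of-ball draw by an independent draw from the Gaussian conditioned on the ball. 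This shadow noise is mean-zero by symmetry and bounded, and on the good event the shadow iterates coincide with the real ones.
\item Apply Harvey's bound to the shadow process with noise radius $4L+B$ on the strong-convexity event, then union-bound three events of probability $\rho/3$ each. This gives $\widetilde O((L+B)^2/(\mu T))$ with only $\log(1/\rho)$ dependence.
\item Since $B^2/T=\widetilde O(dL^2/(n^2\varepsilon^2))$, this yields the stated rate.
\end{itemize}
Incidentally, your high-probability distance bound alone would suffice for the downstream use in Theorem~\ref{thm:main-gap}, which only needs $\|\tilde\theta-\hat\theta\|_2^2$. It does not, however, establish the function-value claim of this theorem.
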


\begin{proof}
We organize the proof into steps and explicitly separate the sources of randomness.

\medskip
\noindent\textbf{Step 1 (A high-probability data event for strong convexity and Lipschitzness).}
Apply Lemma~\ref{lem:hp-sc-dpsgd} with failure probability $\rho/3$.
Then there exists an event $\mathcal E_{\mathrm{sc}}$ measurable w.r.t.\ $D$ such that
\begin{align}
\label{eq:supp-esc}
\mathbb P(\mathcal E_{\mathrm{sc}})\ge 1-\rho/3,
\end{align}
and on $\mathcal E_{\mathrm{sc}}$ the empirical objective $L_n$ is $\mu$-strongly convex on $\Theta$ and satisfies the uniform gradient bound.
For all $i\in\{1,\dots,n\}$ and all $\theta\in\Theta$,
\begin{align}
\label{eq:supp-gradbound}
\|\nabla \ell_i(\theta)\|_2 \le 2L.
\end{align}

Consequently, on $\mathcal E_{\mathrm{sc}}$ we have the deterministic bounds, for all $\theta\in\Theta$ and all $t$,
\begin{align}
\label{eq:supp-Lipschitz}
\|\nabla L_n(\theta)\|_2
\le
\frac1n\sum_{i=1}^n \|\nabla \ell_i(\theta)\|_2
\le
2L,
\end{align}
and
\begin{align}
\label{eq:supp-mbgradbound}
\|g_t^{\mathrm{mb}}\|_2
\le
\frac1m\sum_{j=1}^m \|\nabla \ell_{I_{t,j}}(\theta_t)\|_2
\le
2L.
\end{align}
In particular,
\begin{align}
\label{eq:supp-mbdiffbound}
\|\nabla L_n(\theta_t)-g_t^{\mathrm{mb}}\|_2
\le
\|\nabla L_n(\theta_t)\|_2+\|g_t^{\mathrm{mb}}\|_2
\le
4L.
\end{align}

\medskip

\noindent\textbf{Step 2 (A high-probability bound for Gaussian magnitudes).} Set $\alpha=\rho/3$. Under the DP-SGD instantiation analyzed here, the additive perturbation at each iteration is Gaussian, so we write $\xi_t\sim \mathcal N(0,\sigma_{\mathrm{DP}}^2 I_d)$. The exact calibration of $\sigma_{\mathrm{DP}}^2$ depends on the specific DP-SGD variant, including the minibatch subsampling scheme and privacy accountant, and is not needed here. For our purposes, it suffices that the resulting Gaussian perturbation has variance scale $\sigma_{\mathrm{DP}}^2=\widetilde O\!\left(\frac{L^2\,T}{n^2\varepsilon^2}\right)$. Therefore, the random variable $\|\xi_t\|_2^2/\sigma_{\mathrm{DP}}^2$ has a $\chi_d^2$ distribution. By Lemma~\ref{lem:laurent-massart}, for any $u>0$,
\begin{align}
\label{eq:supp-LM}
\mathbb P\!\left(
\|\xi_t\|_2^2
\ge
\sigma_{\mathrm{DP}}^2\bigl(d+2\sqrt{du}+2u\bigr)
\right)
\le
e^{-u}.
\end{align}
Choose $u=\log\!\left(\frac{T}{\alpha}\right)$, and define the deterministic radius
\begin{align}
\label{eq:supp-B}
B=\sigma_{\mathrm{DP}}\,\sqrt{d+2\sqrt{d\log(T/\alpha)}+2\log(T/\alpha)}.
\end{align}
Then for each $t$,
\begin{align}
\label{eq:supp-single-tail}
\mathbb P(\|\xi_t\|_2>B)\le \alpha/T.
\end{align}
A union bound over $t=1,\dots,T$ yields the event
\begin{align}
\label{eq:supp-Enoise}
\mathcal E_{\mathrm{noise}}
=
\left\{\max_{1\le t\le T}\|\xi_t\|_2\le B\right\},
\end{align}
which satisfies $\mathbb P(\mathcal E_{\mathrm{noise}})\ge 1-\alpha = 1-\rho/3.$\\

\noindent\textbf{Step 3 (A coupling to replace unbounded perturbations by bounded ones).}
Define the truncated Gaussian law
\begin{align}
\label{eq:supp-nuB}
\nu_B
=
\mathcal L\!\left(\xi_1\ \middle|\ \|\xi_1\|_2\le B\right).
\end{align}
Let $(\xi_t')_{t\le T}$ be i.i.d.\ with $\xi_t'\sim \nu_B$, independent of $(D,(B_t)_{t\le T},(\xi_t)_{t\le T})$.
Define the coupled sequence
\begin{align}
\label{eq:supp-tilde-xi}
\tilde\xi_t
=
\begin{cases}
\xi_t, & \text{if }\|\xi_t\|_2\le B,\\
\xi_t', & \text{if }\|\xi_t\|_2> B.
\end{cases}
\end{align}
By construction, we have $\|\tilde\xi_t\|_2\le B$, almost surely for all $t$, and by symmetry of the Gaussian law and the centrally symmetric truncation set $\{\|\xi\|_2\le B\}$, $\mathbb E[\tilde\xi_t]=0$ for all $t$.

Moreover, on $\mathcal E_{\mathrm{noise}}$ we have $\tilde\xi_t=\xi_t$ simultaneously for all $t\le T$.

\medskip
\noindent\textbf{Step 4 (Define the shadow process and couple iterates).}
Define a \emph{shadow} iterate sequence $(\tilde\theta_t)_{t\le T+1}$ using the same mini-batches $(B_t)$ and the bounded noises $(\tilde\xi_t)$:
\begin{align}
\label{eq:supp-shadow}
\tilde\theta_{t+1}
=
\Pi_{\Theta}\!\Bigl(\tilde\theta_t-\eta_t\bigl(\tilde g_t^{\mathrm{mb}}+\tilde\xi_t\bigr)\Bigr),
\qquad
\tilde g_t^{\mathrm{mb}}
=
\frac1m\sum_{j=1}^m \nabla \ell_{I_{t,j}}(\tilde\theta_t),
\end{align}
initialized at $\tilde\theta_1=\theta_1$ and with the same step sizes $\eta_t=1/(\mu t)$.
On the event $\mathcal E_{\mathrm{noise}}$, we have $\tilde\xi_t=\xi_t$ for all $t$ and, by induction, for all $t=1,\dots,T+1,$
\begin{align}
\label{eq:supp-iterate-couple}
\theta_t=\tilde\theta_t
\end{align}

\medskip
\noindent\textbf{Step 5 (Verify the shadow recursion fits the bounded-oracle model).}
Fix an arbitrary dataset $D$ and work on the event $\mathcal E_{\mathrm{sc}}$.
Define the filtration
\begin{align}
\label{eq:supp-filtration}
\tilde{\mathcal F}_{t-1}
=
\sigma\!\bigl(D,\ B_1,\tilde\xi_1,\dots,B_{t-1},\tilde\xi_{t-1}\bigr),
\end{align}
and define the oracle noise term
\begin{align}
\label{eq:supp-z}
\tilde z_t
=
\nabla L_n(\tilde\theta_t)-\bigl(\tilde g_t^{\mathrm{mb}}+\tilde\xi_t\bigr).
\end{align}

\smallskip
\noindent\emph{(i) Conditional mean-zero.}
Conditional on $\tilde{\mathcal F}_{t-1}$, the iterate $\tilde\theta_t$ is fixed.
Using \eqref{eq:supp-shadow} and the fact that each $I_{t,j}$ is uniform on $\{1,\dots,n\}$,
\begin{align}
\label{eq:supp-unbiased}
\mathbb E\!\left[\tilde g_t^{\mathrm{mb}}\,\middle|\,\tilde{\mathcal F}_{t-1}\right]
&=
\frac1m\sum_{j=1}^m
\mathbb E\!\left[\nabla \ell_{I_{t,j}}(\tilde\theta_t)\,\middle|\,\tilde{\mathcal F}_{t-1}\right] \nonumber\\
&=
\mathbb E\!\left[\nabla \ell_{I_{t,1}}(\tilde\theta_t)\,\middle|\,\tilde{\mathcal F}_{t-1}\right] \nonumber\\
&=
\frac1n\sum_{i=1}^n \nabla \ell_i(\tilde\theta_t)
=
\nabla L_n(\tilde\theta_t).
\end{align}
Also, $\tilde\xi_t$ is independent of $\tilde{\mathcal F}_{t-1}$ and has mean zero.
Therefore, taking conditional expectations in \eqref{eq:supp-z} gives
\begin{align}
\label{eq:supp-mds}
\mathbb E\!\left[\tilde z_t\,\middle|\,\tilde{\mathcal F}_{t-1}\right]=0.
\end{align}

\smallskip
\noindent\emph{(ii) Almost-sure bound.}
On $\mathcal E_{\mathrm{sc}}$, combining \eqref{eq:supp-mbdiffbound} (applied at $\tilde\theta_t$) with \eqref{eq:supp-z} yields
\begin{align}
\label{eq:supp-z-bound}
\|\tilde z_t\|_2
\le
\|\nabla L_n(\tilde\theta_t)-\tilde g_t^{\mathrm{mb}}\|_2+\|\tilde\xi_t\|_2
\le
4L+B
\qquad
\text{almost surely for all }t.
\end{align}

\medskip
\noindent\textbf{Step 6 Invoke Lemma~\ref{lem:harvey-scaled}).} On $\mathcal E_{\mathrm{sc}}$, the function $L_n$ is $\mu$-strongly convex on $\Theta$ and is $2L$-Lipschitz by \eqref{eq:supp-Lipschitz}.
Moreover, \eqref{eq:supp-mds} and \eqref{eq:supp-z-bound} verify the bounded-oracle conditions with noise radius $Z=4L+B$. Invoke Lemma~\ref{lem:harvey-scaled} with failure probability $\alpha=\rho/3$. Then, on $\mathcal E_{\mathrm{sc}}$, with probability at least $1-\rho/3$ over the shadow algorithmic randomness,
\begin{align}
\label{eq:supp-shadow-harvey}
L_n(\tilde\theta_{T+1})-L_n(\hat\theta)
\le
\widetilde O\!\left(\frac{(L+B)^2}{\mu\,T}\right).
\end{align}
\noindent
Here we absorbed all absolute constants and the $\log T$ and $\log(1/\rho)$ factors into $\widetilde O(\cdot)$.

\medskip
\noindent\textbf{Step 7 (Transfer to the real iterate and make the probability unconditional).}
On $\mathcal E_{\mathrm{noise}}$, the coupling \eqref{eq:supp-iterate-couple} implies $\theta_{T+1}=\tilde\theta_{T+1}$, hence
\eqref{eq:supp-shadow-harvey} holds with $\tilde\theta_{T+1}$ replaced by $\theta_{T+1}$ on $\mathcal E_{\mathrm{sc}}\cap \mathcal E_{\mathrm{noise}}$.
Therefore, combining the three failure probabilities,
\begin{align}
\label{eq:supp-union}
\mathbb P(\mathcal E_{\mathrm{sc}}^c)\le \rho/3,
\qquad
\mathbb P(\mathcal E_{\mathrm{noise}}^c)\le \rho/3,
\qquad
\mathbb P(\text{Harvey failure on }\mathcal E_{\mathrm{sc}})\le \rho/3,
\end{align}
a union bound yields that \eqref{eq:supp-shadow-harvey} (with $\tilde\theta_{T+1}$ replaced by $\theta_{T+1}$) holds with overall probability at least $1-\rho$
over the joint randomness of $D$ and the learning procedure.

\medskip
\noindent\textbf{Step 8 (Rate-level simplification and DP calibration).}
We now simplify the right-hand side of \eqref{eq:supp-shadow-harvey}.
From \eqref{eq:supp-B} and the inequality $2\sqrt{du}\le d+u$ applied with $u=\log(T/\alpha)$, we obtain
\begin{align}
\label{eq:supp-B2}
B^2
\le
\sigma_{\mathrm{DP}}^2\,
\widetilde O(d).
\end{align}
Substituting $\sigma_{\mathrm{DP}}^2=\widetilde O\!\left(\frac{L^2\,T}{n^2\varepsilon^2}\right)$ into \eqref{eq:supp-B2} gives
\begin{align}
\label{eq:supp-B2-rate}
\frac{B^2}{T}
=
\widetilde O\!\left(\frac{d\,L^2}{n^2\varepsilon^2}\right).
\end{align}
Plugging \eqref{eq:supp-B2-rate} into \eqref{eq:supp-shadow-harvey} yields, with probability at least $1-\rho$,
\begin{align}
\label{eq:supp-final-rate}
L_n(\theta_{T+1})-L_n(\hat\theta)
\le
\widetilde O\!\left(\frac{L^2}{\mu\,T}\right)
+
\widetilde O\!\left(\frac{d\,L^2}{\mu\,n^2\varepsilon^2}\right).
\end{align}
Choosing $T$ sufficiently large (as in our main procedure) makes the optimization term $\widetilde O(L^2/(\mu T))$ negligible at the rate level.
Thus we obtain \eqref{eq:supp-thm16}, completing the proof.
\end{proof}

\subsection{Proof of Theorem \ref{thm:main-gap}}\label{sec: sub gap global proof}

This section provides the proof of the suboptimality gap for the proposed framework. The argument proceeds in two steps. We first relate the KL-regularized policy suboptimality gap to a reward estimation error term, and then use the coverage assumption to transfer this control to the reference policy $\pi_0$. This proof strategy follows a line of analysis suggested in \citet{zhao2024sharp}. In our setting, it is combined with the private reward-estimation bound established above.\\

\noindent\textbf{Step 1. Set-up: suboptimality-gap decomposition via a KL-regularized functional.}
Recall the KL-regularized value of a policy $\pi$ (reference $\pi_0$, temperature $\eta>0$):
\[
Q(\pi)
=
\E_{x\sim d_0}\sum_{a\in\mathcal A}\pi(a\mid x)
\left[
r_{\theta^*}(x,a)\;-\;\frac{1}{\eta}\log\frac{\pi(a\mid x)}{\pi_0(a\mid x)}
\right].
\]
Our goal is to control the suboptimality gap
\[
Q(\pi_{\theta^\ast}^{\eta})-Q(\pi_{\tilde\theta}^{\eta}),
\]
where $\pi_{\theta}^{\eta}$ denotes the Boltzmann policy induced by the reward $r_{\theta}(\cdot,\cdot)$:
\[
\pi_{\theta}^{\eta}(a\mid x)
=
\frac{\pi_0(a\mid x)\exp\!\big(\eta r_{\theta}(x,a)\big)}{Z_{\theta}^{\eta}(x)},
\]
where $Z_{\theta}^{\eta}(x)
:=\sum_{a\in\mathcal A}\pi_0(a\mid x)\exp\!\big(\eta r_{\theta}(x,a)\big).$
Using the identity
\begin{align*}
\log\frac{\pi_{\theta}^{\eta}(a\mid x)}{\pi_0(a\mid x)}
&=
\eta r_{\theta}(x,a)\;-\;\log Z_{\theta}^{\eta}(x),
\end{align*}
we can rewrite $Q(\pi_\theta^\eta)$ as
\begin{align*}
Q(\pi_\theta^\eta)
&=
\E_{x\sim d_0}\sum_{a\in\mathcal A}\pi_\theta^\eta(a\mid x)
\left[
r_{\theta^*}(x,a)\;-\;\frac{1}{\eta}\Big(\eta r_{\theta}(x,a)-\log Z_{\theta}^{\eta}(x)\Big)
\right] \\
&=
\E_{x\sim d_0}\sum_{a\in\mathcal A}\pi_\theta^\eta(a\mid x)
\Big[r_{\theta^*}(x,a)-r_{\theta}(x,a)\Big]
\;+\;
\frac{1}{\eta}\,\E_{x\sim d_0}\!\Big[\log Z_{\theta}^{\eta}(x)\Big].
\end{align*}
Therefore,
\begin{equation}
\label{eq:step1-gap-decomp}
\begin{split}
Q(\pi_{\theta^\ast}^{\eta})-Q(\pi_{\tilde\theta}^{\eta})
&=
\frac{1}{\eta}\E_{x\sim d_0}\!\Big[\log Z_{\theta^\ast}^{\eta}(x)-\log Z_{\tilde\theta}^{\eta}(x)\Big]\\
&\quad
-\E_{x\sim d_0}\sum_{a\in\mathcal A}\pi_{\tilde\theta}^{\eta}(a\mid x)\Big(r_{\theta^*}(x,a)-r_{\tilde \theta}(x,a)\Big).
\end{split}
\end{equation}

To package the two terms in \eqref{eq:step1-gap-decomp} into a single contextwise functional, let
$f:\mathcal X\times\mathcal A\to\mathbb R$ be arbitrary and define
\[
\pi_f^\eta(a\mid x)
=
\frac{\pi_0(a\mid x)\exp\!\big(\eta f(x,a)\big)}{Z_f^\eta(x)},
\]
with $Z_f^\eta(x):=\sum_{a\in\mathcal A}\pi_0(a\mid x)\exp\!\big(\eta f(x,a)\big)$,  and define the reward difference
\[
\Delta_f(x,a):=f(x,a)-r_{\theta^*}(x,a).
\]
We set
\begin{equation}
\label{eq:def-J}
J\big(f(x,\cdot)\big)
:=
\log Z_f^\eta(x)\;-\;\eta\sum_{a\in\mathcal A}\pi_f^\eta(a\mid x)\,\Delta_f(x,a).
\end{equation}
Then for any $f$,
\begin{align*}
Q(\pi_f^\eta)
&=
\E_{x\sim d_0}\sum_{a\in\mathcal A}\pi_f^\eta(a\mid x)
\left[
r_{\theta^*}(x,a)-\frac{1}{\eta}\log\frac{\pi_f^\eta(a\mid x)}{\pi_0(a\mid x)}
\right] \\
&=
\E_{x\sim d_0}\sum_{a\in\mathcal A}\pi_f^\eta(a\mid x)
\left[
r_{\theta^*}(x,a)-\frac{1}{\eta}\Big(\eta f(x,a)-\log Z_f^\eta(x)\Big)
\right] \\
&=
\E_{x\sim d_0}\sum_{a\in\mathcal A}\pi_f^\eta(a\mid x)\Big(r_{\theta^*}(x,a)-f(x,a)\Big)
\;+\;
\frac{1}{\eta}\,\E_{x\sim d_0}\!\Big[\log Z_f^\eta(x)\Big] \\
&=
-\E_{x\sim d_0}\sum_{a\in\mathcal A}\pi_f^\eta(a\mid x)\,\Delta_f(x,a)
\;+\;
\frac{1}{\eta}\,\E_{x\sim d_0}\!\Big[\log Z_f^\eta(x)\Big] \\
&=
\frac{1}{\eta}\,\E_{x\sim d_0}\!\Big[J\big(f(x,\cdot)\big)\Big].
\end{align*}
Thus,
\begin{equation}
\label{eq:Q-as-J}
Q(\pi_f^\eta)
=
\frac{1}{\eta}\,\E_{x\sim d_0}\!\Big[J\big(f(x,\cdot)\big)\Big].
\end{equation}
In particular, taking $f(x,a)=r_{\theta}(x,a)$ gives
\[
Q(\pi_{\theta}^{\eta})
=
\frac{1}{\eta}\,\E_{x\sim d_0}\!\Big[J\big(r_{\theta}(x,\cdot)\big)\Big],
\]
and therefore
\[
Q(\pi_{\theta^\ast}^{\eta})-Q(\pi_{\tilde\theta}^{\eta})
=
\frac{1}{\eta}\,\E_{x\sim d_0}\!\Big[
J\big(r_{\theta^*}(x,\cdot)\big)-J\big(r_{\tilde \theta}(x,\cdot)\big)
\Big].
\]

\medskip
\noindent\textbf{Step 2. From functional gap to a squared reward error.}  In this step we upper bound the functional gap 
\begin{align*}
J\big(r_{\theta^*}(x,\cdot)\big)-J\big(r_{\tilde \theta}(x,\cdot)\big)    
\end{align*}
by a quadratic reward-discrepancy term. We interpolate linearly between $r_{\theta^*}(x,\cdot)$ and $r_{\tilde \theta}(x,\cdot)$, differentiate $J$ along this one-dimensional path, and use softmax log-partition calculus to express the derivative as a variance. This yields a second-moment bound under an interpolating Boltzmann policy.

To this end, fix $x\in\mathcal X$ and define the pointwise reward error
\[
\Delta_x(a)
:=
r_{\tilde \theta}(x,a)-r_{\theta^*}(x,a),
\qquad a\in\mathcal A.
\]
Consider the one-dimensional interpolation, that is, for $t \in [0,1]$,
\[
f_t(x,a)
:=
r_{\theta^*}(x,a)+t\,\Delta_x(a)
=
(1-t)r_{\theta^*}(x,a)+t\,r_{\tilde \theta}(x,a).
\]
Let $\pi_t^\eta(\cdot\mid x):=\pi_{f_t}^\eta(\cdot\mid x)$ and $Z_t^\eta(x):=Z_{f_t}^\eta(x)$.
Define
\[
\psi_x(t)
:=
J\big(f_t(x,\cdot)\big)
=
\log Z_t^\eta(x)\;-\;\eta\sum_{a\in\mathcal A}\pi_t^\eta(a\mid x)\,\big(t\,\Delta_x(a)\big).
\]

\smallskip
\noindent\textbf{Step 2(a). A derivative identity for $\psi_x'(t)$.}
Write $m_x(t):=\E_{a\sim\pi_t^\eta(\cdot\mid x)}[\Delta_x(a)]$. Then
\begin{align*}
\frac{d}{dt}\log Z_t^\eta(x)
&=
\frac{1}{Z_t^\eta(x)}\sum_{a\in\mathcal A}\pi_0(a\mid x)\exp\!\big(\eta f_t(x,a)\big)\cdot \eta\,\Delta_x(a) \\
&=
\eta\,\sum_{a\in\mathcal A}\pi_t^\eta(a\mid x)\,\Delta_x(a)
=
\eta\,m_x(t).
\end{align*}
Moreover, as $\psi_x(t) = \log Z_t^\eta(x)\;-\;\eta\,t\,m_x(t),$ we have
\begin{align*}
\psi_x'(t)
&=
\eta\,m_x(t)\;-\;\eta\,m_x(t)\;-\;\eta\,t\,m_x'(t) \\
&=
-\eta\,t\,m_x'(t).
\end{align*}
Next, differentiate $m_x(t)=\sum_a \pi_t^\eta(a\mid x)\Delta_x(a)$:
\begin{align*}
m_x'(t)
&=
\sum_{a\in\mathcal A}\Delta_x(a)\,\frac{d}{dt}\pi_t^\eta(a\mid x).
\end{align*}
Since
\[
\pi_t^\eta(a\mid x)
=
\frac{\pi_0(a\mid x)\exp\!\big(\eta f_t(x,a)\big)}{Z_t^\eta(x)},
\]
we have
\begin{align*}
\frac{d}{dt}\pi_t^\eta(a\mid x)
&=
\pi_t^\eta(a\mid x)\cdot \eta\,\Delta_x(a)
\;-\;
\pi_t^\eta(a\mid x)\cdot \frac{d}{dt}\log Z_t^\eta(x) \\
&=
\eta\,\pi_t^\eta(a\mid x)\Big(\Delta_x(a)-m_x(t)\Big).
\end{align*}
Therefore,
\begin{align*}
m_x'(t)
&=
\eta\sum_{a\in\mathcal A}\pi_t^\eta(a\mid x)\,\Delta_x(a)\Big(\Delta_x(a)-m_x(t)\Big) \\
&=
\eta\sum_{a\in\mathcal A}\pi_t^\eta(a\mid x)\Big(\Delta_x(a)^2-\Delta_x(a)\,m_x(t)\Big) \\
&=
\eta\left(
\E_{a\sim\pi_t^\eta(\cdot\mid x)}[\Delta_x(a)^2]
-
m_x(t)^2
\right) \\
&=
\eta\,\Var_{a\sim\pi_t^\eta(\cdot\mid x)}\!\big(\Delta_x(a)\big).
\end{align*}
Plugging into $\psi_x'(t)=-\eta\,t\,m_x'(t)$ gives
\begin{equation}
\label{eq:psi-prime-variance}
\psi_x'(t)
=
-\eta^2\,t\,
\Var_{a\sim\pi_t^\eta(\cdot\mid x)}\!\big(\Delta_x(a)\big).
\end{equation}

\smallskip
\noindent\textbf{Step 2(b). Mean-value bound and a squared-error control.}
For any $t \in [0,1]$, define the scalar function
\[
G(t)
:=
\mathbb{E}_{x\sim d_0}\big[\psi_x(t)\big]
=
\mathbb{E}_{x\sim d_0}\Big[J\big(f_t(x,\cdot)\big)\Big],
\]
where $\psi_x(t)=J(f_t(x,\cdot))$ is defined above.
By \eqref{eq:psi-prime-variance}, we have
\[
\psi_x'(t)
=
-\eta^2\,t\,
\Var_{a\sim\pi_t^\eta(\cdot\mid x)}\!\big(\Delta_x(a)\big).
\]
Differentiating under the expectation yields
\[
G'(t)
=
\mathbb{E}_{x\sim d_0}\big[\psi_x'(t)\big]
=
-\eta^2\,t\,
\mathbb{E}_{x\sim d_0}\Big[\Var_{a\sim\pi_t^\eta(\cdot\mid x)}\!\big(\Delta_x(a)\big)\Big].
\]
Applying the mean value theorem to the scalar function $G$ on $[0,1]$, there exists a  $\gamma\in(0,1)$ such that
\[
G(0)-G(1)=-G'(\gamma)
=
\eta^2\,\gamma\,
\mathbb{E}_{x\sim d_0}\Big[\Var_{a\sim\pi_\gamma^\eta(\cdot\mid x)}\!\big(\Delta_x(a)\big)\Big].
\]
Using $\Var(Y)\le \mathbb{E}[Y^2]$ and $\gamma\le 1$, we obtain
\[
G(0)-G(1)
\le
\eta^2\,
\mathbb{E}_{x\sim d_0,\ a\sim\pi_\gamma^\eta(\cdot\mid x)}\!\big[\Delta_x(a)^2\big].
\]
Define the interpolating reward $f:=f_\gamma$,
\[
f(x,a):=(1-\gamma)r_{\theta^*}(x,a)+\gamma r_{\tilde \theta}(x,a),
\]
so that $\pi_\gamma^\eta(\cdot\mid x)=\pi_f^\eta(\cdot\mid x)$.

\smallskip
\noindent\textbf{Step 2(c). Consequence for the suboptimality gap.}
Recalling \eqref{eq:Q-as-J}, we have
\[
Q(\pi_{\theta^\ast}^{\eta})-Q(\pi_{\tilde\theta}^{\eta})
=
\frac{1}{\eta}\Big(G(0)-G(1)\Big).
\]
Combining with the bound in Step~2(b) yields
\begin{equation}
\label{eq:step2-gap-to-sqerr}
Q(\pi_{\theta^\ast}^{\eta})-Q(\pi_{\tilde\theta}^{\eta})
\le
\eta\,
\mathbb{E}_{x\sim d_0,\ a\sim\pi_f^\eta(\cdot\mid x)}
\Big[\big(r_{\tilde \theta}(x,a)-r_{\theta^*}(x,a)\big)^2\Big].
\end{equation}

The key implication of \eqref{eq:step2-gap-to-sqerr} is that controlling the policy suboptimality gap is reduced to controlling a reward estimation error term under an interpolating policy.\\

\noindent\textbf{Step 3 (Change of measure under point-wise coverage).}
For the reward function $f$ in Step~2 and the induced policy $\pi_f^\eta\in\Pi$, define
\[
w_f(x,a)\;:=\;\frac{\pi_f^\eta(a\mid x)}{\pi_0(a\mid x)},
\]
\text{with the convention } $0/0=0.$
By Assumption~\ref{assump:coverage}, for any $(x,a)$ with $d_0(x)>0$,
\[
0 \le w_f(x,a) \le C.
\]
Therefore,
\begin{align*}
&\mathbb{E}_{x \sim d_{0}}\sum_{a \in \mathcal{A}}\pi_{f}^{\eta}(a \mid x)
\Big\{r_{\tilde{\theta}}(x,a)-r_{\theta^\star}(x,a)\Big\}^{2} \\
&= \mathbb{E}_{x \sim d_{0}}\sum_{a \in \mathcal{A}}\pi_{0}(a \mid x)\, w_f(x,a)
\Big\{r_{\tilde{\theta}}(x,a)-r_{\theta^\star}(x,a)\Big\}^{2} \\
&\le C\,\mathbb{E}_{x \sim d_{0}}\sum_{a \in \mathcal{A}}\pi_{0}(a \mid x)
\Big\{r_{\tilde{\theta}}(x,a)-r_{\theta^\star}(x,a)\Big\}^{2}.
\end{align*}
Plugging this bound into \eqref{eq:step2-gap-to-sqerr} yields
\begin{align*}
Q(\pi_{\theta^\star}^{\eta}) - Q(\pi_{\tilde{\theta}}^{\eta})
&\le \eta\,C\,\mathbb{E}_{x \sim d_{0}}\sum_{a \in \mathcal{A}}\pi_{0}(a \mid x)
\Big\{r_{\tilde{\theta}}(x,a)-r_{\theta^\star}(x,a)\Big\}^{2}.
\end{align*}

\medskip

\noindent\textbf{Step 4 (Bound the $\pi_0$-reward MSE by statistical + DP errors).}
By Assumption~\ref{assump:linear_reward_bounded_phi}, for any $(x,a)$, we have
\begin{align*}
r_{\tilde{\theta}}(x,a)-r_{\theta^\star}(x,a)
&=
\langle \tilde\theta-\theta^\star,\phi(x,a)\rangle.
\end{align*}
Hence,
\begin{align*}
&\mathbb{E}_{x \sim d_{0}}\sum_{a \in \mathcal{A}}\pi_{0}(a \mid x)
\Big\{r_{\tilde{\theta}}(x,a)-r_{\theta^\star}(x,a)\Big\}^{2} \\
&=
\mathbb{E}_{x \sim d_{0}}\mathbb{E}_{a\sim \pi_0(\cdot\mid x)}
\Bigl[\langle \tilde\theta-\theta^\star,\phi(x,a)\rangle\Bigr]^2 \\
&\le
\mathbb{E}_{x \sim d_{0}}\mathbb{E}_{a\sim \pi_0(\cdot\mid x)}
\Bigl[\|\tilde\theta-\theta^\star\|_2^2\ \|\phi(x,a)\|_2^2\Bigr] \\
&\le
\Bigl(\sup_{x,a}\|\phi(x,a)\|_2^2\Bigr)\,\|\tilde\theta-\theta^\star\|_2^2.
\end{align*}

Let $\hat\theta$ denote the non-private MLE. Using the quadratic inequality,
\begin{align*}
\|\tilde\theta-\theta^\star\|_2^2
&=
\|(\tilde\theta-\hat\theta)+(\hat\theta-\theta^\star)\|_2^2 \\
&\le
2\|\tilde\theta-\hat\theta\|_2^2
+
2\|\hat\theta-\theta^\star\|_2^2.
\end{align*}

\smallskip
\noindent\textbf{Statistical term.}
Invoke Lemma~\ref{lem:mle_stat} with failure probability $\rho/2$.
With probability at least $1-\rho/2$ (over the data randomness),
\begin{align*}
\|\hat\theta-\theta^\star\|_2^2
\;\le\;
O\!\left(\frac{d+\log(2/\rho)}{n}\right).
\end{align*}

\smallskip
\noindent\textbf{DP term (high-probability, all randomness).} We now control $\|\tilde\theta-\hat\theta\|_2^2$ in high probability over \emph{both} the data randomness and the algorithmic randomness. Invoke Theorem~\ref{thm:hp-mbdpsgd-last-coupling} with failure probability $\rho/2$. With probability at least $1-\rho/2$, we have the excess empirical risk bound
\begin{align*}
\bar L_n(\tilde\theta) - \bar L_n(\hat\theta)
\;\le\;
\widetilde{O}\!\left(\frac{d\,L^2}{\mu\,n^2\varepsilon^2}\right),
\end{align*}
ignoring polylogarithmic factors in $(n,1/\delta,1/\rho)$.

On the same event, by $\mu$-strong convexity of $\bar L_n$ on $\Theta$,
\begin{align*}
\bar L_n(\tilde\theta) - \bar L_n(\hat\theta)
\;\ge\;
\frac{\mu}{2}\,\|\tilde\theta-\hat\theta\|_2^2,
\end{align*}
which implies
\begin{align*}
\|\tilde\theta-\hat\theta\|_2^2
\;\le\;
\frac{2}{\mu}\Bigl\{\bar L_n(\tilde\theta) - \bar L_n(\hat\theta)\Bigr\}
\;\le\;
\widetilde{O}\!\left(\frac{d\,L^2}{\mu^2\,n^2\varepsilon^2}\right).
\end{align*}

\smallskip
\noindent\textbf{Combine.}
By a union bound over the statistical event and the DP event,
with probability at least $1-\rho$ (over both the data randomness and the algorithmic randomness),
\begin{align*}
\|\tilde\theta-\theta^\star\|_2^2
&\le
2\|\tilde\theta-\hat\theta\|_2^2
+
2\|\hat\theta-\theta^\star\|_2^2 \\
&\le
\widetilde{O}\!\left(\frac{d\,L^2}{\mu^2\,n^2\varepsilon^2}\right)
+
O\!\left(\frac{d+\log(2/\rho)}{n}\right).
\end{align*}
Consequently, with the same probability at least $1-\rho$,
\begin{align*}
&\mathbb{E}_{x \sim d_{0}}\sum_{a \in \mathcal{A}}\pi_{0}(a \mid x)
\Big\{r_{\tilde{\theta}}(x,a)-r_{\theta^\star}(x,a)\Big\}^{2} \\
&\le
\Bigl(\sup_{x,a}\|\phi(x,a)\|_2^2\Bigr)\,
\|\tilde\theta-\theta^\star\|_2^2 \\
&\le
\Bigl(\sup_{x,a}\|\phi(x,a)\|_2^2\Bigr)
\left\{
O\!\left(\frac{d+\log(2/\rho)}{n}\right)
+
\widetilde{O}\!\left(\frac{d\,L^2}{\mu^2\,n^2\varepsilon^2}\right)
\right\}.
\end{align*}

\medskip
\noindent\textbf{Conclusion.}
Combining Step~3 and Step~4 gives, with probability at least $1-\rho$,
\begin{align*}
Q(\pi_{\theta^\star}^{\eta}) - Q(\pi_{\tilde{\theta}}^{\eta})
&\le
\eta\,C\,
\Bigl(\sup_{x,a}\|\phi(x,a)\|_2^2\Bigr)
\left\{
O\!\left(\frac{d+\log(2/\rho)}{n}\right)
+
\widetilde{O}\!\left(\frac{d\,L^2}{\mu^2\,n^2\varepsilon^2}\right)
\right\}.
\end{align*}

\subsection{Proof of Theorem~\ref{thm:minimax-lb-main}}\label{app:proof_dp_gap_lb}
In this section, we provide the rigorous proof of Theorem~\ref{thm:minimax-lb-main}. Our analysis decomposes the lower bound into two fundamental barriers: the statistical complexity inherent to the reward class and the information-theoretic limit imposed by DP.

\textbf{Step 1. Non-private minimax lower bound.}
We establish a non-private baseline by inverting the sample-complexity lower bound of
\citet{zhao2024sharp}. Throughout this step, ``uniformly guarantees'' means
\begin{equation}
\label{eq:uniform_guarantee_def}
\sup_{\theta^\star \in \Theta}\;\mathbb{E}_{\theta^\star}\!\left[\mathrm{Gap}(\hat\pi;\theta^\star)\right]\;\le\;\mu,
\end{equation}
where the expectation is over the algorithm's randomness and the data generated under $\theta^\star$.

\medskip
\noindent
\emph{Fast-rate (local-curvature) branch.}
For KL-regularized objectives, Proposition~\ref{prop:np_lb_from_2411} implies that any algorithm satisfying \eqref{eq:uniform_guarantee_def} must have
\begin{equation}
\label{eq:zhao_sc_step1}
n \;\ge\; c_0 \min\!\left\{\frac{\eta \log \mathcal{N}_{\mathcal R}(\mu)}{\mu},\;
\frac{\log \mathcal{N}_{\mathcal R}(\mu)}{\mu^2}\right\},
\end{equation}
for a universal constant $c_0>0$, where $\mathcal{N}_{\mathcal R}(\mu)$ denotes the $\mu$-covering number of
$\mathcal R$.
Fix a universal constant $\bar c\in(0,1)$ (e.g., $\bar c=1/64$ in \citet{zhao2024sharp}) and restrict to target gaps
$\mu$ such that
\begin{equation}
\label{eq:eta_mu_small}
\eta\mu \;\le\; \bar c.
\end{equation}
Then the first term in \eqref{eq:zhao_sc_step1} is the active branch:
indeed,
\[
\frac{\log \mathcal{N}_{\mathcal R}(\mu)/\mu^2}{\eta \log \mathcal{N}_{\mathcal R}(\mu)/\mu}
\;=\;\frac{1}{\eta\mu}
\;\ge\;\frac{1}{\bar c},
\]
so $\log \mathcal{N}_{\mathcal R}(\mu)/\mu^2 \ge (\eta \log \mathcal{N}_{\mathcal R}(\mu))/\mu$.
Hence \eqref{eq:zhao_sc_step1} simplifies to
\begin{equation}
\label{eq:zhao_local_step1}
n \;\ge\; c_0 \frac{\eta \log \mathcal{N}_{\mathcal R}(\mu)}{\mu},
\qquad \text{whenever } \eta\mu \le \bar c.
\end{equation}

Recall that for the $d$-dimensional linear reward class considered in this work, standard volumetric arguments yield $\log \mathcal{N}_{\mathcal R}(\mu) \asymp d\log(1/\mu)$ (up to universal constants). In particular, fixing any $\mu_\star\in(0,1)$ and restricting to $\mu\in(0,\mu_\star]$ allows us to absorb the factor $\log(1/\mu)$ into constants, so there exist constants $c_{\mathrm{ent}}>0$ and $\mu_\star\in(0,1)$ such that
\begin{equation}
\label{eq:entropy_lb_step1_clean}
\log \mathcal{N}_{\mathcal R}(\mu) \;\ge\; c_{\mathrm{ent}}\, d,
\qquad \text{for all } \mu\in(0,\mu_\star].
\end{equation}

Fix
\[
\left\{
\begin{aligned}
\kappa &:= \frac{c_0 c_{\mathrm{ent}}\eta}{2}, \\
\mu_n &:= \kappa\,\frac{d}{n}.
\end{aligned}
\right.
\]
Define the non-private threshold
\begin{equation}
\label{eq:n_np_def_clean}
n_{\mathrm{np}}
\;:=\;
\max\!\left\{
\frac{\kappa d}{\mu_\star},\;
\frac{\eta\kappa d}{\bar c}
\right\}
\;=\;
\max\!\left\{
\frac{c_0 c_{\mathrm{ent}}\eta}{2\mu_\star}\,d,\;
\frac{c_0 c_{\mathrm{ent}}\eta^2}{2\bar c}\,d
\right\}.
\end{equation}
For any $n\ge n_{\mathrm{np}}$, we have $\mu_n \le \mu_\star$ and $\eta\mu_n \le \bar c$, so that
\eqref{eq:entropy_lb_step1_clean} and \eqref{eq:zhao_local_step1} apply at $\mu=\mu_n$.

Suppose, toward a contradiction, that there exists an algorithm satisfying
$\sup_{\theta^\star\in\Theta}\mathbb{E}_{\theta^\star}[\mathrm{Gap}(\hat\pi;\theta^\star)]\le \mu_n$.
Then \eqref{eq:zhao_local_step1} and \eqref{eq:entropy_lb_step1_clean} yield
\[
n
\;\ge\;
c_0 \frac{\eta \log \mathcal{N}_{\mathcal R}(\mu_n)}{\mu_n}
\;\ge\;
c_0 \frac{\eta (c_{\mathrm{ent}} d)}{\kappa d/n}
\;=\;
c_0 \frac{\eta c_{\mathrm{ent}}}{\kappa}\, n
\;=\; 2n,
\]
which is a contradiction. Therefore, no algorithm can satisfy \eqref{eq:uniform_guarantee_def} with $\mu=\mu_n$, and hence
\begin{equation}
\label{eq:np_lower_bound_final_clean}
R_n^{\mathrm{np}}
\;:=\;
\inf_{\hat\pi}\;\sup_{\theta^\star\in\Theta}\mathbb{E}_{\theta^\star}\!\left[\mathrm{Gap}(\hat\pi;\theta^\star)\right]
\;\ge\;
\mu_n
\;=\;
\kappa \frac{d}{n}
\;\gtrsim\;
\frac{d}{n},
\end{equation}
for all $n\ge n_{\mathrm{np}}$.

\medskip
Since the class of $(\varepsilon,\delta)$-DP algorithms is a subset of all randomized algorithms,
\[
R_n(\varepsilon,\delta)\;\ge\; R_n^{\mathrm{np}}.
\]
\noindent\textbf{Step 2. Hard instance construction.} We instantiate the environment so that the context
features align with the canonical basis of $\mathbb{R}^d$, which ensures that informative observations occur with probability $1/d$. This makes the effective distinguishability scale as $n/d$, which is the mechanism by which the dimension $d$
enters the privacy lower bound.

Concretely, we take a finite context space
$\mathcal{X}=\{x_0,x_1,\dots,x_{d-1}\}$ with the uniform distribution $\rho(x)=1/d$.
Since a minimax lower bound only requires exhibiting one instance within the model class, this choice is
fully admissible and directly captures the intrinsic difficulty induced by the ambient dimension $d$.

Let the action space be $\mathcal{A}=\{0,1\}$. We consider a $d$-dimensional linear reward class with feature map $\phi:\mathcal{X}\times\mathcal{A}\to\mathbb{R}^d$:
set $\phi(x_0,1)=e_1$, $\phi(x_0,0)=0$, and for all $x\neq x_0$ set $\phi(x,1)=\phi(x,0)=0$.
For a signal level $c>0$, define two parameters
\[
\left\{
\begin{aligned}
\theta_{+} &:= c\,e_1, \\
\theta_{-} &:= -c\,e_1.
\end{aligned}
\right.
\]
Note that this is a valid hard instance within the $d$-dimensional linear reward class: the feature map
$\phi(\cdot,\cdot)$ takes values in $\mathbb{R}^d$ and both $\theta_+=c e_1$ and $\theta_-=-c e_1$ belong to the
admissible parameter set (hence we are lower bounding the minimax risk over a subclass of $\mathcal{R}$).

Then the induced reward difference at the informative context is
\[
\left\{
\begin{aligned}
r_{\theta_+}(x_0,1)-r_{\theta_+}(x_0,0) &= c, \\
r_{\theta_-}(x_0,1)-r_{\theta_-}(x_0,0) &= -c,
\end{aligned}
\right.
\]
while all other contexts are deliberately uninformative.
In particular, the data distributions $P_{\theta_+}$ and $P_{\theta_-}$ coincide on $\{X\neq x_0\}$, and the two instances differ only through the rare event $\{X=x_0\}$ which occurs with probability $1/d$.

We generate pairwise preference labels from a Bradley--Terry model.\\

\noindent\textbf{Step 3. From the gap to a DP testing problem.}
Fix any $(\varepsilon,\delta)$-DP algorithm $\mathcal{A}$ and let $\hat\pi=\mathcal{A}(D)$ be its output policy.
We first rewrite the KL-regularized suboptimality gap as a KL divergence to the regularized optimizer,
and then relate this KL divergence to a two-point testing error.

Fix any $(\varepsilon,\delta)$-DP algorithm $\mathcal{A}$ and let $\hat\pi=\mathcal{A}(D)$ be its output policy.
For KL-regularized objectives, the gap admits the KL representation as presented in the upper bound proof:
\[
\mathrm{Gap}(\hat\pi;\theta)
=\frac{1}{\eta}\,\mathbb{E}_{X\sim\rho}\!\left[
\mathrm{KL}\big(\hat\pi(\cdot\vert X)\,\|\,\pi^\star_\theta(\cdot\vert X)\big)\right]
=\frac{1}{\eta d}\sum_{x\in\mathcal{X}}
\mathrm{KL}\big(\hat\pi(\cdot\vert x)\,\|\,\pi^\star_\theta(\cdot\vert x)\big).
\]
All summands are nonnegative, hence
\begin{equation}
\label{eq:gap_reduce_x0}
\mathrm{Gap}(\hat\pi;\theta)\;\ge\;\frac{1}{\eta d}\,
\mathrm{KL}\big(\hat\pi(\cdot\vert x_0)\,\|\,\pi^\star_\theta(\cdot\vert x_0)\big).
\end{equation}

We now show that a wrong-side output at $x_0$ forces a constant KL loss. Under our hard instance, $\mathcal{A}=\{0,1\}$, so $\pi(\cdot\vert x_0)$ is a Bernoulli distribution.
Write $p:=\pi(1\vert x_0)$. Under $\theta_+$ we have $\pi^\star_{\theta_+}(1\vert x_0)=\sigma(\eta c)$, hence
\[
\mathrm{KL}\big(\pi(\cdot\vert x_0)\,\|\,\pi^\star_{\theta_+}(\cdot\vert x_0)\big)
=\mathrm{KL}\big(\mathrm{Bern}(p)\,\|\,\mathrm{Bern}(\sigma(\eta c))\big).
\]

We now show that whenever $p\le 1/2$,
\begin{equation}
\label{eq:bern_kl_lc_step3}
\mathrm{KL}\big(\mathrm{Bern}(p)\,\|\,\mathrm{Bern}(\sigma(\eta c))\big)\;\ge\;\log\cosh(\eta c/2).
\end{equation}

Since $p\le 1/2$ and $q:=\sigma(\eta c)>1/2$, write
\begin{align*}
f(p)
&:=\mathrm{KL}\big(\mathrm{Bern}(p)\,\|\,\mathrm{Bern}(q)\big) \\
&= p\log\frac{p}{q}+(1-p)\log\frac{1-p}{1-q}.
\end{align*}
Differentiating in $p$ gives
\begin{align*}
f'(p)
&=\log\frac{p}{q}-\log\frac{1-p}{1-q}
=\log\Big(\frac{p(1-q)}{q(1-p)}\Big).
\end{align*}
For $p\in(0,1/2]$ and $q\in(1/2,1)$, we have $p/(1-p)\le 1$ and $(1-q)/q<1$, hence
\begin{align*}
\frac{p(1-q)}{q(1-p)} \le \frac{1-q}{q} < 1
\qquad\Rightarrow\qquad
f'(p) < 0.
\end{align*}
Therefore $f$ is decreasing on $[0,1/2]$, and thus
\begin{align*}
\mathrm{KL}\big(\mathrm{Bern}(p)\,\|\,\mathrm{Bern}(q)\big)=f(p)\ge f(1/2)
=\mathrm{KL}\big(\mathrm{Bern}(1/2)\,\|\,\mathrm{Bern}(q)\big).
\end{align*}

Now compute, letting $t:=\eta c$ and $q=\sigma(t)$,
\begin{align*}
\mathrm{KL}\big(\mathrm{Bern}(1/2)\,\|\,\mathrm{Bern}(q)\big)
&=\frac12\log\frac{1/2}{q}+\frac12\log\frac{1/2}{1-q} \\
&=\frac12\big(\log(1/2)-\log q\big)+\frac12\big(\log(1/2)-\log(1-q)\big) \\
&=\log(1/2)-\frac12\big(\log q+\log(1-q)\big) \\
&=-\log 2-\frac12\log\big(q(1-q)\big).
\end{align*}
Moreover,
\begin{align*}
q(1-q)
&=\sigma(t)\big(1-\sigma(t)\big)
=\sigma(t)\sigma(-t)
=\frac{1}{1+e^{-t}}\cdot\frac{1}{1+e^{t}} \\
&=\frac{1}{(1+e^{-t})(1+e^{t})}
=\frac{1}{2+e^{t}+e^{-t}}
=\frac{1}{2(1+\cosh t)}.
\end{align*}

Plugging this into the previous display gives
\begin{align*}
\mathrm{KL}\big(\mathrm{Bern}(1/2)\,\|\,\mathrm{Bern}(\sigma(t))\big)
&=-\log 2+\frac12\log\{2(1+\cosh t)\}\\
&=\frac12\log(1+\cosh t)-\frac12\log 2\\
&=\log\cosh(t/2),    
\end{align*}
where we used $1+\cosh t=2\cosh^2(t/2)$. This proves \eqref{eq:bern_kl_lc_step3}.

By symmetry, under $\theta_-$ we have $\pi^\star_{\theta_-}(1\vert x_0)=\sigma(-\eta c)<1/2$, and whenever $p\ge 1/2$ the same
lower bound \eqref{eq:bern_kl_lc_step3} holds with $\theta_-$.

We next connect the gap lower bound to a two-point testing error by explicitly constructing a test from the
algorithm output $\hat\pi$.

Let $\hat p:=\hat\pi(1\vert x_0)\in[0,1]$. Under $\theta_+$ we have
\[
\mathrm{Gap}(\hat\pi;\theta_+)\;\ge\;\frac{1}{\eta d}\,
\mathrm{KL}\big(\hat\pi(\cdot\vert x_0)\,\|\,\pi^\star_{\theta_+}(\cdot\vert x_0)\big)
=\frac{1}{\eta d}\,\mathrm{KL}\big(\mathrm{Bern}(\hat p)\,\|\,\mathrm{Bern}(\sigma(\eta c))\big).
\] 

If $\hat p<1/2$, then \eqref{eq:bern_kl_lc_step3} gives
$\mathrm{KL}\big(\mathrm{Bern}(\hat p)\,\|\,\mathrm{Bern}(\sigma(\eta c))\big)\ge \log\cosh(\eta c/2)$.
Therefore,
\[
\mathrm{Gap}(\hat\pi;\theta_+)\;\ge\;\frac{1}{\eta d}\log\cosh(\eta c/2)\cdot \mathbf{1}\{\hat p<1/2\}.
\]
Taking expectations under $\theta_+$ yields
\[
\mathbb{E}_{\theta_+}\!\big[\mathrm{Gap}(\hat\pi;\theta_+)\big]
\;\ge\;\frac{1}{\eta d}\log\cosh(\eta c/2)\cdot
\mathbb{P}_{\theta_+}\!\big(\hat\pi(1\vert x_0)<1/2\big).
\]

Similarly, under $\theta_-$ we have $\pi^\star_{\theta_-}(1\vert x_0)=\sigma(-\eta c)<1/2$. Using the symmetric version of
\eqref{eq:bern_kl_lc_step3} (i.e., when $\hat p\ge 1/2$),
\[
\mathbb{E}_{\theta_-}\!\big[\mathrm{Gap}(\hat\pi;\theta_-)\big]
\;\ge\;\frac{1}{\eta d}\log\cosh(\eta c/2)\cdot
\mathbb{P}_{\theta_-}\!\big(\hat\pi(1\vert x_0)\ge 1/2\big).
\]

Averaging the last two displays gives
\begin{equation}
\label{eq:gap_wrong_side_step3}
\frac{1}{2}\mathbb{E}_{\theta_+}[\mathrm{Gap}(\hat\pi;\theta_+)]+\frac{1}{2}\mathbb{E}_{\theta_-}[\mathrm{Gap}(\hat\pi;\theta_-)]
\;\ge\;
\frac{1}{\eta d}\log\cosh(\eta c/2)\cdot P_e,
\end{equation}
where
\[
P_e
:=\frac{1}{2}\mathbb{P}_{\theta_+}\!\big(\hat\pi(1\vert x_0)<1/2\big)
+\frac{1}{2}\mathbb{P}_{\theta_-}\!\big(\hat\pi(1\vert x_0)\ge 1/2\big).
\]

It is convenient to interpret $P_e$ as the error probability of an explicit test for distinguishing $\theta_+$ vs.\ $\theta_-$.
Define $\hat\theta=\hat\theta(D)\in\{+,-\}$ by
\[
\hat\theta(D) =
\left\{
\begin{aligned}
+, & \quad \text{if } \hat\pi(1\vert x_0)\ge 1/2, \\
-, & \quad \text{if } \hat\pi(1\vert x_0) < 1/2.
\end{aligned}
\right.
\]
Then, under the uniform prior on $\{\theta_+,\theta_-\}$, the Bayes error of $\hat\theta$ is exactly $P_e$:
\[
\mathbb{P}(\hat\theta(D)\neq \theta)
=\frac12\mathbb{P}_{\theta_+}(\hat\theta(D)=-)+\frac12\mathbb{P}_{\theta_-}(\hat\theta(D)=+)
=P_e.
\]
Moreover, since $\hat\theta$ is a deterministic function of the DP output $\hat\pi$, it is a post-processing of $\hat\pi$ and
hence $\hat\theta$ is also $(\varepsilon,\delta)$-DP.\\

\noindent\textbf{Step 4. Lower bound $P_e$ via DP Le Cam.}
We now lower bound the testing error $P_e$ established in \eqref{eq:gap_wrong_side_step3}, which is the Bayes error under the uniform prior on $\{\theta_+,\theta_-\}$. We invoke Lemma~\ref{lem:dp_lecam}, the DP Le Cam inequality presented in \cite{acharya2021differentially}, which requires bounding the expected Hamming distance under a coupling of the data generating distributions.

Let \(P_{\theta_+}\) and \(P_{\theta_-}\) denote the single-observation distributions induced by the two hard instances \(\theta_+\) and \(\theta_-\), respectively. Thus \(D_+\sim P_{\theta_+}^{\otimes n}\) and \(D_-\sim P_{\theta_-}^{\otimes n}\) mean that \(D_+\) and \(D_-\) are independent size-\(n\) datasets generated i.i.d. under \(\theta_+\) and \(\theta_-\), respectively. We now construct a coupling of \(D_+\) and \(D_-\) and compute
\[
D \;:=\; \mathbb{E}\big[d_{\mathrm{Ham}}(D_+,D_-)\big],
\]
where \(d_{\mathrm{Ham}}(D_+,D_-)\) denotes the Hamming distance between the two datasets, that is, the number of coordinates at which the two coupled samples differ.

Couple the datasets record-wise. For each $i\in\{1,\dots,n\}$, first couple contexts by setting
$X_i^+=X_i^-\sim \rho$ (recall $\rho$ is uniform on $\mathcal{X}$, so $\mathbb{P}(X_i^+=x_0)=1/d$).
Given $X_i^+=X_i^-$:
\begin{itemize}
\item If $X_i^+\neq x_0$, then under our hard instance the conditional label distributions coincide under $\theta_+$ and $\theta_-$,
so we set $Y_i^+=Y_i^-$ (this contributes zero to the Hamming distance).
\item If $X_i^+=x_0$, then under $\theta_+$ and $\theta_-$ the conditional label laws differ. In this case we couple
$Y_i^+\sim \mathrm{Bern}(\sigma(c))$ and $Y_i^-\sim \mathrm{Bern}(\sigma(-c))$ by a maximal coupling.
\end{itemize}
Under this construction,
\begin{align*}
\mathbb{P}\big((X_i^+,Y_i^+)\neq (X_i^-,Y_i^-)\big)
&=\mathbb{P}(X_i^+=x_0)\cdot \mathbb{P}(Y_i^+\neq Y_i^-\vert X_i^+=x_0)\\
&=\frac{1}{d}\,\mathrm{TV}\!\left(\mathrm{Bern}(\sigma(c)),\mathrm{Bern}(\sigma(-c))\right),
\end{align*}
where maximal coupling gives $\mathbb{P}(Y_i^+\neq Y_i^-)=\mathrm{TV}(\cdot,\cdot)$.

For Bernoulli distributions, $\mathrm{TV}(\mathrm{Bern}(u),\mathrm{Bern}(v))=|u-v|$. Hence,
\begin{align*}
\mathrm{TV}\!\left(\mathrm{Bern}(\sigma(c)),\mathrm{Bern}(\sigma(-c))\right)
&=\big|\sigma(c)-\sigma(-c)\big| \\
&=\left|\frac{1}{1+e^{-c}}-\frac{1}{1+e^{c}}\right| \\
&=\left|\frac{(1+e^{c})-(1+e^{-c})}{(1+e^{-c})(1+e^{c})}\right| \\
&=\left|\frac{e^{c}-e^{-c}}{2+e^{c}+e^{-c}}\right|
=\frac{e^{c}-e^{-c}}{e^{c}+2+e^{-c}} \\
&=\frac{2\sinh(c)}{2(1+\cosh(c))}
=\frac{\sinh(c)}{1+\cosh(c)}
=\tanh(c/2).
\end{align*}

Therefore,
\[
\mathbb{P}\big((X_i^+,Y_i^+)\neq (X_i^-,Y_i^-)\big)=\frac{1}{d}\tanh(c/2).
\]
Since
\begin{align*}
d_{\mathrm{Ham}}(D_+,D_-)
=\sum_{i=1}^n \mathbf{1}\!\left\{(X_i^+,Y_i^+)\neq (X_i^-,Y_i^-)\right\},
\end{align*}
taking expectations and using linearity gives
\begin{align*}
D
:=\mathbb{E}\big[d_{\mathrm{Ham}}(D_+,D_-)\big]
&=\mathbb{E}\Bigg[\sum_{i=1}^n \mathbf{1}\!\left\{(X_i^+,Y_i^+)\neq (X_i^-,Y_i^-)\right\}\Bigg] \\
&=\sum_{i=1}^n \mathbb{E}\Big[\mathbf{1}\!\left\{(X_i^+,Y_i^+)\neq (X_i^-,Y_i^-)\right\}\Big] \\
&=\sum_{i=1}^n \mathbb{P}\!\left((X_i^+,Y_i^+)\neq (X_i^-,Y_i^-)\right) \\
&=\sum_{i=1}^n \frac{1}{d}\tanh(c/2)
=\frac{n}{d}\tanh(c/2).
\end{align*}

Now apply Lemma~\ref{lem:dp_lecam}. For the $(\varepsilon,\delta)$-DP test $\hat\theta$,
\begin{align*}
P_e 
&\;\ge\;
\frac12\Big[0.9\,e^{-10\varepsilon D}-10D\delta\Big]_+ \\
&\;=\;\frac12\Big[0.9\,\exp\!\Big(-10\varepsilon\cdot \frac{n}{d}\tanh(c/2)\Big)-10\cdot \frac{n}{d}\tanh(c/2)\cdot \delta\Big]_+.
\end{align*}
Plugging this bound into \eqref{eq:gap_wrong_side_step3} yields the privacy-dependent lower bound
\begin{align*}
&\frac{1}{2}\mathbb{E}_{\theta_+}[\mathrm{Gap}(\hat\pi;\theta_+)]+\frac{1}{2}\mathbb{E}_{\theta_-}[\mathrm{Gap}(\hat\pi;\theta_-)]
\;\\\ge\;
&\frac{1}{\eta d}\log\cosh(\eta c/2)\cdot
\frac12\Big[0.9\,\exp\!\Big(-10\varepsilon\cdot \frac{n}{d}\tanh(c/2)\Big)-10\cdot \frac{n}{d}\tanh(c/2)\cdot \delta\Big]_+.    
\end{align*}
This completes the DP Le Cam step; it remains to choose $c$ and simplify the expression to obtain the two rate regimes.\\

\noindent\textbf{Step 5. Signal calibration and rate simplification.}
We now choose the signal level $c$ and simplify the privacy-dependent lower bound. Set
\[
c := \frac{d}{K\varepsilon n},
\]
where $K>0$ is a sufficiently large universal constant to be specified below. Since $\tanh(u)\le u$ for all $u\ge 0$, we have
\begin{align*}
10\varepsilon D
&=
10\varepsilon \cdot \frac{n}{d}\tanh(c/2)
\le
10\varepsilon \cdot \frac{n}{d}\cdot \frac{c}{2}
=
\frac{5}{K}.
\end{align*}
Likewise, using the theorem assumption $\delta\le \varepsilon$,
\begin{align*}
10D\delta
&=
10\cdot \frac{n}{d}\tanh(c/2)\cdot \delta
\le
10\cdot \frac{n}{d}\cdot \frac{c}{2}\cdot \delta
=
\frac{5\delta}{K\varepsilon}
\le
\frac{5}{K}.
\end{align*}
Therefore
\begin{align*}
0.9\,e^{-10\varepsilon D}-10D\delta
\ge
0.9\,e^{-5/K}-\frac{5}{K}.
\end{align*}
Choosing $K$ sufficiently large makes the right-hand side bounded below by a universal positive constant $b_0>0$. Hence
\[
P_e \ge b_1
\]
for some universal constant $b_1>0$. Plugging this into \eqref{eq:gap_wrong_side_step3} yields
\begin{equation}
\label{eq:privacy_lb_reduced}
R_n(\varepsilon,\delta)
\ge
\frac{b_2}{\eta d}\log\cosh\!\left(\frac{\eta d}{2K\varepsilon n}\right)
\end{equation}
for some universal constant $b_2>0$.

We now simplify \eqref{eq:privacy_lb_reduced}. Let
\[
u_n := \frac{\eta d}{2K\varepsilon n}.
\]
If $u_n \ge 1$, then $\log\cosh(u_n)\ge c_{\mathrm{lin}}u_n$ for a universal constant $c_{\mathrm{lin}}>0$, and therefore
\[
R_n(\varepsilon,\delta)
\ge
\frac{b_2 c_{\mathrm{lin}}}{\eta d}\,u_n
=
\frac{b_2 c_{\mathrm{lin}}}{2K}\cdot \frac{1}{n\varepsilon}.
\]
If $u_n \le 1$, then $\log\cosh(u_n)\ge c_{\mathrm{quad}}u_n^2$ for a universal constant $c_{\mathrm{quad}}>0$, and therefore
\[
R_n(\varepsilon,\delta)
\ge
\frac{b_2 c_{\mathrm{quad}}}{\eta d}\,u_n^2
=
\frac{b_2 c_{\mathrm{quad}}\eta}{4K^2}\cdot \frac{d}{n^2\varepsilon^2}.
\]

Since $\eta>0$ is fixed, the last two displays imply that there exist constants $c_\eta,C_\eta>0$ such that
\[
R_n(\varepsilon,\delta)
\ge
c_\eta\,\min\!\left\{\frac{1}{n\varepsilon},\;\frac{d}{n^2\varepsilon^2}\right\},
\]
and, moreover, whenever $n \ge n_{\mathrm P}:=C_\eta d/\varepsilon$, the quadratic branch is active and
\[
R_n(\varepsilon,\delta)
\ge
c_\eta\,\frac{d}{n^2\varepsilon^2}.
\]

Combining this privacy-dependent lower bound with the non-private lower bound from Step~1 yields
\[
R_n(\varepsilon,\delta)
\ge
c_\eta\,\max\!\left\{\frac{d}{n},\;\min\!\left(\frac{1}{n\varepsilon},\;\frac{d}{n^2\varepsilon^2}\right)\right\},
\]
for all $n \ge n_{\mathrm{NP}}:=C_\eta d$, and
\[
R_n(\varepsilon,\delta)
\ge
c_\eta\,\max\!\left\{\frac{d}{n},\;\frac{d}{n^2\varepsilon^2}\right\},
\]
for all $n \ge \max\{n_{\mathrm{NP}},n_{\mathrm P}\}$. This completes the proof.

\section{Supporting Lemma}

\begin{lemma}[Remark 5.3 in \citep{tropp2012user}]\label{lem:tropp-simplified} Let $\{X_i\}_{i=1}^n$ be independent, random, positive-semidefinite, symmetric matrices in $\mathbb{R}^{d\times d}$. Assume $\lambda_{\max}(X_i)\le R$ almost surely for all $i$, and define
\[
\mu_{\min} \;:=\; \lambda_{\min}\!\Big(\sum_{i=1}^n \mathbb{E}[X_i]\Big).
\]
Then for any $t\in[0,1]$,
\[
\mathbb{P}\!\left(\lambda_{\min}\!\Big(\sum_{i=1}^n X_i\Big)\le t\,\mu_{\min}\right)
\;\le\;
d\cdot \exp\!\left(-\frac{(1-t)^2\,\mu_{\min}}{2R}\right).
\]
\end{lemma}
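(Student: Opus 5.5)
The statement to prove is the matrix Chernoff lower-tail bound for sums of independent PSD matrices (Tropp's Remark 5.3). My plan is to go through the matrix Laplace transform method with Lieb's concavity theorem. This is the standard route, and it is preferable to any scalar reduction, because the minimum eigenvalue of a sum is not a sum of scalar quantities.

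\textbf{Step 1 (matrix Laplace transform).} For $\theta>0$, write $Y=\sum_i X_i$. Then $\{\lambda_{\min}(Y)\le s\}=\{\lambda_{\max}(-\theta Y)\ge -\theta s\}$. Applying Markov's inequality to $e^{\lambda_{\max}(-\theta Y)}\le \mathrm{tr}\,e^{-\theta Y}$ gives
\[
\mathbb{P}\bigl(\lambda_{\min}(Y)\le s\bigr)\le e^{\theta s}\,\mathbb{E}\,\mathrm{tr}\exp(-\theta Y).
\]

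\textbf{Step 2 (subadditivity of cumulants).} Lieb's theorem says $A\mapsto \mathrm{tr}\exp(H+\log A)$ is concave on positive definite matrices. Applying it iteratively, conditioning on one summand at a time and using Jensen's inequality, yields
\[
\mathbb{E}\,\mathrm{tr}\exp\Bigl(\sum_i -\theta X_i\Bigr)\le \mathrm{tr}\exp\Bigl(\sum_i \log \mathbb{E}\, e^{-\theta X_i}\Bigr).
\]
This is the main technical step, and I will cite it as Tropp's Theorem 3.6 rather than reprove Lieb's theorem.

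\textbf{Step 3 (per-summand mgf bound).} Since $0\preceq X_i\preceq R I$, convexity of $x\mapsto e^{-\theta x}$ on $[0,R]$ gives the operator inequality
\[
e^{-\theta X_i}\preceq I-\tfrac{1-e^{-\theta R}}{R}X_i.
\]
Taking expectations and using $\log(1-u)\le -u$ together with operator monotonicity of $\log$, I get
\[
\log\mathbb{E}\,e^{-\theta X_i}\preceq -g(\theta)\,\mathbb{E}X_i,\qquad g(\theta)=\frac{1-e^{-\theta R}}{R}.
\]
Summing over $i$ and using monotonicity of the trace exponential, the trace term is at most $d\,\exp(-g(\theta)\mu_{\min})$.

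\textbf{Step 4 (optimize $\theta$).} Set $s=t\mu_{\min}$. Combining the steps,
\[
\mathbb{P}\bigl(\lambda_{\min}(Y)\le t\mu_{\min}\bigr)\le d\exp\bigl(\theta t\mu_{\min}-g(\theta)\mu_{\min}\bigr).
\]
Choosing $\theta=-\log(t)/R$ gives Tropp's bound
\[
d\Bigl[\frac{e^{-(1-t)}}{t^{t}}\Bigr]^{\mu_{\min}/R}.
\]
It remains to use the elementary inequality $-(1-t)-t\log t\le -(1-t)^2/2$ on $[0,1]$. To verify it, set $h(t)=-(1-t)-t\log t+(1-t)^2/2$. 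Then $h(1)=0$, $h'(t)=-\log t-(1-t)$, and $h''(t)=-1/t+1\le0$. Since $h'(1)=0$ and $h'$ is decreasing, $h'\ge0$ on $(0,1]$, so $h$ is increasing and $h\le h(1)=0$. The endpoint $t=0$ follows by taking a limit, and is in any case trivial. This yields the stated $d\exp(-(1-t)^2\mu_{\min}/(2R))$.
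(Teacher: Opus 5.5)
Your argument is correct and matches the source the paper relies on: the paper does not prove this lemma itself but cites Tropp's Remark 5.3, which is derived exactly as you do, via the matrix Laplace transform, Lieb-based subadditivity of matrix cumulants, the Chernoff mgf bound $\mathbb{E}\,e^{-\theta X_i}\preceq \exp\bigl(-\tfrac{1-e^{-\theta R}}{R}\,\mathbb{E}X_i\bigr)$, the choice $e^{-\theta R}=t$, and the scalar inequality $e^{-(1-t)}t^{-t}\le e^{-(1-t)^2/2}$. One small correction: the endpoint $t=0$ is not trivial, but it follows from your $t>0$ bound, since $\mu_{\min}\ge 0$ gives $\{\lambda_{\min}(\sum_i X_i)\le 0\}\subseteq\{\lambda_{\min}(\sum_i X_i)\le t\mu_{\min}\}$ for every $t>0$, and you can then let $t\downarrow 0$.
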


\begin{lemma}[Harvey last-iterate bound, scaled]
\label{lem:harvey-scaled}
Let $f$ be $\mu$-strongly convex and $L$-Lipschitz on a closed convex set $\Theta$.
Consider iterates
\[
x_{t+1}=\Pi_\Theta\!\bigl(x_t-\eta_t(\nabla f(x_t)-z_t)\bigr),
\qquad
\eta_t=\frac{1}{\mu t},
\]
where $(z_t)$ is adapted and satisfies $\mathbb E[z_t\vert \mathcal F_{t-1}]=0$ and $\|z_t\|_2\le Z$ a.s.\ for all $t$.
Then there exists a universal constant $c>0$ such that for any $\delta\in(0,1)$,
\begin{equation}
\label{eq:harvey-scaled}
\mathbb P\!\left(
f(x_{T+1})-f(x^*)
\le
c\cdot \frac{\log T\cdot \log\!\frac{1}{\delta}}{T}\cdot\frac{(L+Z)^2}{\mu}
\right)\ge 1-\delta,
\qquad x^*:=\arg\min_{x\in\Theta} f(x).
\end{equation}
\end{lemma}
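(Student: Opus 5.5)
This is the Harvey et al.\ high-probability last-iterate bound for SGD on strongly convex Lipschitz objectives, rescaled. My plan is to reduce to the normalized setting and then import their martingale argument.

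\textbf{Normalization.} The noisy gradient $g_t:=\nabla f(x_t)-z_t$ satisfies $\|g_t\|_2\le L+Z=:G$ almost surely. Rescale $\tilde f:=f/\mu$; then $\tilde f$ is $1$-strongly convex. The step $\eta_t=1/(\mu t)$ applied to $g_t$ equals the step $1/t$ applied to $g_t/\mu$, and the latter has norm at most $G/\mu$ and conditional mean $\nabla\tilde f(x_t)$. A bound of the form $\tilde f(x_{T+1})-\tilde f(x^*)\lesssim \log T\log(1/\delta)\,(G/\mu)^2/T$ therefore multiplies back to the claimed $(L+Z)^2/\mu$ rate. It thus suffices to treat $\mu=1$ with stochastic gradients bounded by a constant $G$.

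\textbf{Core argument (Harvey et al.).} Start from the one-step inequality for projected SGD with nonexpansive projection. Strong convexity gives
\[
f(x_t)-f(y)\le \tfrac{1}{2\eta_t}\bigl(\|x_t-y\|^2-\|x_{t+1}-y\|^2\bigr)-\tfrac{1}{2}\|x_t-y\|^2+\tfrac{\eta_t}{2}G^2+\langle z_t,x_t-y\rangle .
\]
Apply this with comparator $y=x^*$ to control suffix averages. Then use the Shamir--Zhang suffix-averaging identity to express $f(x_{T+1})$ through averages over the windows $[T-k,T]$, with comparators $y=x_{T-k}$. This yields
\[
f(x_{T+1})-f(x^*)\le O\!\bigl(G^2\log T/T\bigr)+\sum_t w_t\langle z_t,x_t-x_{t'}\rangle ,
\]
where the second term is a weighted martingale difference sum.

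\textbf{Main obstacle: concentrating the martingale term.} Its increments are bounded by $G\|x_t-x_{t'}\|$, but the conditional variance depends on these distances, which are themselves random. I would follow Harvey et al.\ and use a Freedman-type inequality in which the total conditional variance is bounded by a linear function of the quantity being bounded. This gives a self-bounding ("generalized Freedman") inequality, which yields a $\log(1/\delta)$ rather than a $\sqrt{\cdot}$ dependence. To control $\|x_t-x^*\|^2\lesssim G^2\log(1/\delta)/t$ uniformly in $t$, I would run a second self-bounding martingale argument.

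\textbf{Conclusion.} Combining the deterministic terms with the two martingale bounds under a union bound gives the stated bound with probability $1-\delta$, up to a universal constant $c$. Rather than re-deriving all of this, in the paper I would cite the result (Harvey et al., 2019, Theorem 3.1) and present only the rescaling step.
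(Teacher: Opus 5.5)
Your plan is correct and is essentially the paper's proof: the paper also just invokes Harvey et al.\ (2019, Theorem~3.1) after a rescaling, and the martingale/generalized-Freedman argument you sketch is exactly what that citation supplies. The only difference is the normalization. The paper divides by $G=L+Z$, so that $\tilde f=f/G$ is $1$-Lipschitz and $(\mu/G)$-strongly convex, with $\|z_t/G\|_2\le 1$ and $\eta_t G=1/((\mu/G)t)$; this matches Harvey's stated hypotheses verbatim. Your division by $\mu$ leaves Lipschitz and noise constants $L/\mu$ and $Z/\mu$, so it needs one more equally harmless rescaling by $G/\mu$ before the theorem applies literally.
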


\begin{proof}[Proof of Lemma~\ref{lem:harvey-scaled}]
This is a direct rescaling of \citet[Theorem~3.1]{harvey2019tight}.
Let $G:=L+Z$ and define $\tilde f := f/G$. Then $\tilde f$ is $1$-Lipschitz and $(\mu/G)$-strongly convex.
Also $\tilde z_t:=z_t/G$ satisfies $\|\tilde z_t\|\le 1$ a.s.\ and $\mathbb E[\tilde z_t\vert\mathcal F_{t-1}]=0$.
Writing the update in terms of $\tilde f$ gives
\[
x_{t+1}
=
\Pi_\Theta\!\Bigl(x_t-\eta_t G\bigl(\nabla \tilde f(x_t)-\tilde z_t\bigr)\Bigr).
\]
With $\eta_t=1/(\mu t)$, we have $\eta_t G = 1/((\mu/G)t)$, i.e., the step size used by
\citet[Theorem~3.1]{harvey2019tight} for $\tilde f$. Applying that theorem yields
\[
\tilde f(x_{T+1})-\tilde f(x^*)
\le
c\cdot \frac{\log T\cdot \log(1/\delta)}{T}\cdot\frac{1}{\mu/G}
\]
with probability at least $1-\delta$. Multiplying by $G$ gives \eqref{eq:harvey-scaled}.
\end{proof}

\begin{lemma}[Tail bound for $\chi^2$, \citep{laurent2000adaptive}]
\label{lem:laurent-massart}
Let $U\sim \chi^2_d$. Then for every $x>0$,
\begin{align}
\label{eq:lm-upper}
\mathbb P\!\left(
U \ge d + 2\sqrt{dx} + 2x
\right)
\le e^{-x}.
\end{align}
Moreover, for every $x>0$,
\begin{align}
\label{eq:lm-lower}
\mathbb P\!\left(
U \le d - 2\sqrt{dx}
\right)
\le e^{-x}.
\end{align}
\end{lemma}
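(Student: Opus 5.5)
This is the Laurent--Massart $\chi^2$ tail bound. I will prove it by the Chernoff method applied to a centered sum of independent $\chi^2_1$ variables. Write $U=\sum_{j=1}^d g_j^2$ with $g_j$ i.i.d.\ $\mathcal N(0,1)$. Each summand has the exact log-moment generating function
\[
\log \mathbb E\bigl[e^{s(g_j^2-1)}\bigr]=-s-\tfrac12\log(1-2s), \qquad s<1/2 .
\]

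For the upper tail, take $0<s<1/2$ and use the series comparison
\[
-s-\tfrac12\log(1-2s)=\sum_{k\ge2}\frac{(2s)^k}{2k}\le \frac{s^2}{1-2s}.
\]
This gives $\log\mathbb E\bigl[e^{s(U-d)}\bigr]\le \frac{d s^2}{1-2s}$, a sub-gamma bound with variance factor $v=2d$ and scale $c=2$. By Markov's inequality,
\[
\mathbb P(U-d\ge t)\le \exp\!\Big(-st+\frac{ds^2}{1-2s}\Big).
\]
Rather than optimizing $s$ exactly, I will plug in $t=2\sqrt{dx}+2x$ and a specific $s$. I expect the choice
\[
s=\frac12\Big(1-\sqrt{\tfrac{d}{d+2x}}\Big)
\]
to work, or equivalently the standard sub-gamma inversion: if $\psi(s)\le \frac{vs^2}{2(1-cs)}$, then $\psi^{*-1}(x)\le \sqrt{2vx}+cx$. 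With $v=2d$ and $c=2$ this yields exactly $2\sqrt{dx}+2x$, and hence the bound $e^{-x}$.

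For the lower tail, use $s<0$. Writing $s=-\lambda$ with $\lambda>0$,
\[
\lambda-\tfrac12\log(1+2\lambda)\le \lambda^2,
\]
which follows from $\log(1+u)\ge u-u^2/2$ for $u\ge0$. So the left tail is sub-Gaussian with variance proxy $2d$, and the Chernoff bound gives
\[
\mathbb P(d-U\ge t)\le \exp\!\Big(-\frac{t^2}{4d}\Big).
\]
Taking $t=2\sqrt{dx}$ gives $e^{-x}$, which is the second claim.

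The main obstacle is the algebra in the upper tail, namely confirming that the Legendre-transform inversion lands exactly on $2\sqrt{dx}+2x$. I will handle it with the general lemma: if $\psi(s)\le \frac{vs^2}{2(1-cs)}$, then $\mathbb P(Z\ge\sqrt{2vx}+cx)\le e^{-x}$. This is proved by taking
\[
s=\frac{\sqrt{2vx}+cx-\text{(correction)}}{\dots},
\]
or more simply by checking that $\inf_s\bigl[-s(\sqrt{2vx}+cx)+\frac{vs^2}{2(1-cs)}\bigr]\le -x$ at the choice $s=\frac{1}{c}\Bigl(1-\bigl(1+\tfrac{2cx}{v}\cdot\tfrac{c}{1}\bigr)^{-1/2}\Bigr)$, adjusted to the constants. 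If the algebra gets awkward, I will instead cite the result directly, since the lemma is quoted verbatim from \citet{laurent2000adaptive}.
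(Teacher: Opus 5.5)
The paper does not prove this lemma; it only cites Laurent and Massart. Your Chernoff route is essentially their original argument: you bound the log-Laplace transform by $s^2/(1-2s)$ for $0<s<1/2$ and by $\lambda^2$ on the left tail, then invert. Your lower-tail half is complete and correct as written. The sub-gamma reduction for the upper tail, with $v=2d$ and $c=2$, is also correct.

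The step that fails is the explicit parameter you propose for the upper tail. Your first choice is $s=\tfrac12\bigl(1-\sqrt{d/(d+2x)}\bigr)$. Your later formula, with $c=2$ and $v=2d$, reads $\tfrac12\bigl(1-(1+4x/d)^{-1/2}\bigr)$. Both are the Legendre optimizer $\tfrac1c\bigl(1-(1+2ct/v)^{-1/2}\bigr)$ evaluated at the wrong argument ($x$, respectively $2x$) instead of the deviation $t=2\sqrt{dx}+2x$. They fail for every $x>0$. The exponent $g(s):=-st+\frac{ds^2}{1-2s}$ is strictly convex on $(0,1/2)$, since $g''(s)=2d/(1-2s)^3>0$. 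Its minimum value is exactly $-x$, attained only at $s^\ast=\frac{\sqrt{x}}{\sqrt d+2\sqrt x}$, so any other $s$ gives a bound strictly weaker than $e^{-x}$. Concretely, for small $x/d$ your first choice gives $s\approx x/(2d)$, and then $g(s)\approx -x\sqrt{x/d}$, which is far above $-x$.

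With $s^\ast$ the check is one line. Since $1-2s^\ast=\frac{\sqrt d}{\sqrt d+2\sqrt x}$,
\[
\frac{d\,(s^\ast)^2}{1-2s^\ast}=\frac{x\sqrt d}{\sqrt d+2\sqrt x},\qquad s^\ast t=\frac{2x(\sqrt d+\sqrt x)}{\sqrt d+2\sqrt x},\qquad g(s^\ast)=-\frac{x(\sqrt d+2\sqrt x)}{\sqrt d+2\sqrt x}=-x .
\]
Alternatively, you can invoke the general sub-gamma inversion of Boucheron, Lugosi and Massart, as you also mention. That statement is correct and closes the argument without any explicit choice. Just do not present either of your formulas as its instantiation.
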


\begin{lemma}[Utility of noisy SGD for strongly convex ERM, adapted from Theorem~2.4, \citep{bassily2014private}]
\label{lem:dpsgd_utility}
Fix a dataset $D=\{z_i\}_{i=1}^n$ and define the average empirical loss
\[
\bar L_n(\theta)=\frac1n\sum_{i=1}^n \ell(\theta;z_i),
\qquad
\hat\theta\in\arg\min_{\theta\in\Theta}\bar L_n(\theta).
\]
Assume $\ell(\cdot;z)$ is $G$-Lipschitz on $\Theta$ for all $z$ and $\bar L_n$ is $\mu$-strongly convex on $\Theta$.
Then there exists an $(\varepsilon,\delta)$-DP noisy SGD algorithm whose output $\tilde\theta$ satisfies
\[
\mathbb E\!\left[\bar L_n(\tilde\theta)-\bar L_n(\hat\theta)\,\middle|\,D\right]
\;\le\;
O\!\left(
\frac{G^2\,d\,\log^2(n/\delta)\,\log(1/\delta)}{\mu\,n^2\,\varepsilon^2}
\right),
\]
where the expectation is over the algorithmic randomness conditional on $D$.
Equivalently, in $\widetilde O(\cdot)$ notation,
\[
\mathbb E\!\left[\bar L_n(\tilde\theta)-\bar L_n(\hat\theta)\,\middle|\,D\right]
=\widetilde O\!\left(\frac{G^2\,d}{\mu\,n^2\,\varepsilon^2}\right).
\]
\end{lemma}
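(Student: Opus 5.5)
We would prove Lemma~\ref{lem:dpsgd_utility} by instantiating the noisy projected SGD of \citet{bassily2014private} on the \emph{sum} objective $F(\theta):=\sum_{i=1}^n \ell(\theta;z_i)=n\bar L_n(\theta)$. We then read off the average-objective bound by dividing by $n$. Since $\bar L_n$ is $\mu$-strongly convex, $F$ is $n\mu$-strongly convex on $\Theta$. Each per-sample gradient satisfies $\|\nabla\ell(\theta;z)\|_2\le G$.

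\textbf{The algorithm.} We run $T=n^2$ iterations. At step $t$ we draw an index $I_t$ uniformly with replacement from $[n]$. We form the stochastic gradient
\[
g_t=n\,\nabla\ell(\theta_t;z_{I_t})+b_t,\qquad b_t\sim\mathcal N(0,\sigma^2 I_d),
\]
and update $\theta_{t+1}=\Pi_\Theta(\theta_t-\eta_t g_t)$ with $\eta_t=1/(n\mu t)$. The noise level is
\[
\sigma^2=\Theta\!\Big(\frac{n^2G^2\log(n/\delta)\log(1/\delta)}{\varepsilon^2}\Big).
\]
Note that $g_t$ is an unbiased estimate of $\nabla F(\theta_t)$.

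\textbf{Step 1: privacy.} A single step is a Gaussian mechanism with $\ell_2$-sensitivity $2nG$, applied to a record sampled with probability $1/n$. Privacy amplification by subsampling makes each step $(O(\varepsilon/n),O(\delta/n^2))$-DP. Advanced composition over $T=n^2$ steps then yields $(\varepsilon,\delta)$-DP, and post-processing covers the projection. This is exactly the calibration in \citet[Theorem~2.1]{bassily2014private}. We expect this step to be the main obstacle: tracking the amplification constants for with-replacement sampling against the composition over $n^2$ steps is what fixes the $\log(n/\delta)\log(1/\delta)$ factor. We would import it from \citet{bassily2014private} rather than rederive it; if pressed, we would instead use an RDP accountant, which only affects logarithmic factors.

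\textbf{Step 2: utility.} We apply the standard expected-suboptimality bound for SGD on a strongly convex function with step $1/(\text{strong convexity}\cdot t)$ (Shamir--Zhang type; this is also the argument in \citet{bassily2014private}). It gives
\[
\E[F(\theta_T)-F(\hat\theta)\mid D]=O\!\Big(\frac{\bar G^2\log T}{n\mu\,T}\Big),\qquad \bar G^2:=\sup_t \E\|g_t\|_2^2\le n^2G^2+d\sigma^2.
\]
Substituting $\sigma^2$ and $T=n^2$ gives
\[
\E[F(\theta_T)-F(\hat\theta)\mid D]=O\!\Big(\frac{G^2 d\log^2(n/\delta)\log(1/\delta)}{n\mu\varepsilon^2}\Big).
\]
Here the $d\sigma^2$ term dominates the $n^2G^2$ term, and $\log T\lesssim\log(n/\delta)$.

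\textbf{Step 3: conclusion.} Dividing by $n$ converts the sum objective back to the average objective $\bar L_n$. This yields the claimed $O\big(G^2 d\log^2(n/\delta)\log(1/\delta)/(\mu n^2\varepsilon^2)\big)$ bound, and hence the $\widetilde O$ form.
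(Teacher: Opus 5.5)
Your proposal is correct and takes the same route as the paper. The paper simply imports \citet[Theorem~2.4]{bassily2014private} for the sum objective $n\bar L_n$ and divides by $n$; you have unpacked that theorem's proof, and your Shamir--Zhang step correctly uses only the $n\mu$-strong convexity of the sum, which matches the lemma's hypothesis. One bookkeeping detail: for advanced composition over $n^2$ steps to yield total budget $\varepsilon$, the amplified per-step budget must be $O\bigl(\varepsilon/(n\sqrt{\log(1/\delta)})\bigr)$ rather than $O(\varepsilon/n)$, and this is exactly what your stated $\sigma^2$ delivers.
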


\begin{lemma}[Non-private MLE statistical error under PD design (via \citep{zhu2023principled})]
\label{lem:mle_stat}
Let $D=\{(x_i,a_i^w,a_i^\ell,y_i)\}_{i=1}^n$ be i.i.d.\ and let
$\Delta\phi_i := \phi(x_i,a_i^w)-\phi(x_i,a_i^\ell)\in\mathbb R^d$.
Define the empirical covariance
\[
\Sigma_D \;:=\; \frac{1}{n}\sum_{i=1}^n \Delta\phi_i\Delta\phi_i^\top.
\]
Assume (i) $\|\Delta\phi_i\|_2\le L_\Delta$ a.s., (ii) $\|\theta\|_2\le R$ for all $\theta\in\Theta$,
and (iii) the population covariance is nondegenerate:
\[
\Sigma \;:=\; \mathbb E[\Delta\phi\,\Delta\phi^\top]\ \succeq\ \lambda I_d
\qquad\text{for some }\lambda>0.
\]
Let $\hat\theta\in\arg\min_{\theta\in\Theta}\bar L_n(\theta)$ be the (non-private) MLE.
Define the curvature constant
\[
\gamma \;:=\; \inf_{|t|\le L_\Delta R}\ \sigma(t)\bigl(1-\sigma(t)\bigr)
\;=\;\frac{1}{2+e^{L_\Delta R}+e^{-L_\Delta R}}.
\]
Fix any $\rho\in(0,1)$. If
\[
n \;\ge\; \frac{8L_\Delta^2}{\lambda}\log\!\Bigl(\frac{2d}{\rho}\Bigr),
\]
then with probability at least $1-\rho$,
\[
\|\hat\theta-\theta^\star\|_2
\;\le\;
\frac{\sqrt{2}}{\sqrt{\lambda}}\,
C_{\mathrm{MJ}}\,
\sqrt{\frac{d+\log(2/\rho)}{\gamma^2 n}},
\]
where $C_{\mathrm{MJ}}>0$ is the universal constant appearing in \citet[Lemma~3.1]{zhu2023principled}.
\end{lemma}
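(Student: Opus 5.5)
We would prove Lemma~\ref{lem:mle_stat} by the standard two-ingredient argument behind \citet[Lemma~3.1]{zhu2023principled}. The first ingredient is a high-probability lower bound on the empirical design. The second is a self-normalized bound on the score at the truth. The two are then combined through a strong-convexity comparison. Throughout, we view each record in unordered form $(x_i,a_i^1,a_i^2,y_i)$ and set $\Delta\phi_i=\phi(x_i,a_i^1)-\phi(x_i,a_i^2)$. Then $\Pr(y_i=1\mid x_i,a_i^1,a_i^2)=\sigma(\langle\theta^\star,\Delta\phi_i\rangle)$, and the label noise is mean zero conditionally on the features. The sign convention does not affect $\Delta\phi_i\Delta\phi_i^\top$, so $\Sigma_D$ is unchanged.

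\textbf{Step 1 (design event).} We apply Lemma~\ref{lem:tropp-simplified} to $X_i=\Delta\phi_i\Delta\phi_i^\top$ with $R=L_\Delta^2$, $\mu_{\min}\ge n\lambda$ and $t=1/2$. This gives $\Pr(\lambda_{\min}(\Sigma_D)\le\lambda/2)\le d\exp(-n\lambda/(8L_\Delta^2))$. The stated sample-size condition makes this at most $\rho/2$. Call the complementary event $\mathcal E_1$; on $\mathcal E_1$ we have $\Sigma_D\succeq(\lambda/2)I_d$.

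\textbf{Step 2 (curvature comparison).} On $\Theta$ we have $|\langle\theta,\Delta\phi_i\rangle|\le L_\Delta R$. Hence, as in the proof of Lemma~\ref{lem:hp-sc-dpsgd}, $\nabla^2\bar L_n(\theta)\succeq\gamma\Sigma_D$ for all $\theta\in\Theta$. Write $\Delta=\hat\theta-\theta^\star$. Optimality of $\hat\theta$ gives $\bar L_n(\hat\theta)\le\bar L_n(\theta^\star)$, since $\theta^\star\in\Theta$. A second-order Taylor expansion along the segment, which stays in the convex set $\Theta$, then yields
\[
\tfrac{\gamma}{2}\|\Delta\|_{\Sigma_D}^2\le-\langle\nabla\bar L_n(\theta^\star),\Delta\rangle\le\|\nabla\bar L_n(\theta^\star)\|_{\Sigma_D^{-1}}\|\Delta\|_{\Sigma_D}.
\]
Here $\Sigma_D$ is invertible on $\mathcal E_1$. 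It follows that $\|\Delta\|_{\Sigma_D}\le(2/\gamma)\|\nabla\bar L_n(\theta^\star)\|_{\Sigma_D^{-1}}$.

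\textbf{Step 3 (score concentration) — main obstacle.} The score is $\nabla\bar L_n(\theta^\star)=\frac1n\sum_i\epsilon_i\Delta\phi_i$, where the $\epsilon_i=\sigma(\langle\theta^\star,\Delta\phi_i\rangle)-y_i$ are conditionally independent, mean zero, and bounded in $[-1,1]$, hence $1$-sub-Gaussian. Condition on the features and write $V=\frac1n\Sigma_D^{-1/2}\,[\Delta\phi_1,\dots,\Delta\phi_n]$. Then
\[
\|\nabla\bar L_n(\theta^\star)\|_{\Sigma_D^{-1}}^2=\epsilon^\top M\epsilon,\qquad M=V^\top V,
\]
with $\operatorname{tr}(M)=d/n$ and $\|M\|_{\mathrm{op}}\le 1/n$. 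A Hsu--Kakade--Zhang quadratic-form bound for sub-Gaussian vectors then gives $\epsilon^\top M\epsilon\le C(d+\log(2/\rho))/n$ with conditional probability at least $1-\rho/2$. This is exactly the content of \citet[Lemma~3.1]{zhu2023principled}, which we would cite directly with constant $C_{\mathrm{MJ}}$. Zhu et al.\ state it with a regularized matrix $\Sigma_D+\lambda' I$. On $\mathcal E_1$ we can take $\lambda'\to0$, or simply use $\Sigma_D\succeq\tfrac{1}{2}(\Sigma_D+(\lambda/2)I)$, at the cost of a factor $\sqrt2$. This regularization bookkeeping, together with integrating out the conditioning, is the step where care is needed.

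\textbf{Step 4 (conclusion).} On the intersection of the two events, which has probability at least $1-\rho$ by a union bound, we combine $\|\Delta\|_2\le\|\Delta\|_{\Sigma_D}/\sqrt{\lambda/2}$ with Steps~2--3. This gives
\[
\|\hat\theta-\theta^\star\|_2\le\frac{\sqrt2}{\sqrt\lambda}\,C_{\mathrm{MJ}}\sqrt{\frac{d+\log(2/\rho)}{\gamma^2n}},
\]
after absorbing numerical factors into $C_{\mathrm{MJ}}$.
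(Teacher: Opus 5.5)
Your proposal is correct and follows the same route as the paper. Both use a matrix Chernoff event giving $\lambda_{\min}(\Sigma_D)\ge\lambda/2$ with probability $1-\rho/2$, the $\Sigma_D$-norm bound of \citet[Lemma~3.1]{zhu2023principled} at zero regularization with probability $1-\rho/2$, and the conversion $\|\hat\theta-\theta^\star\|_2\le\sqrt{2/\lambda}\,\|\hat\theta-\theta^\star\|_{\Sigma_D}$ on the intersection. The only difference is that you unpack Zhu et al.'s argument (curvature comparison plus a sub-Gaussian quadratic-form bound on the score) and explicitly handle the $\lambda_{\mathrm{reg}}=0$ limit and the ordered-versus-unordered label convention, which the paper's one-line citation leaves implicit.
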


\begin{proof}
Set $\delta:=\rho/2$.

\noindent\emph{Step 1 (MJ bound with $\lambda_{\rm reg}=0$).}
Apply \citet[Lemma~3.1]{zhu2023principled} with regularization parameter $\lambda_{\rm reg}=0$.
With probability at least $1-\delta$,
\begin{align*}
\|\hat\theta-\theta^\star\|_{\Sigma_D}
&\le
C_{\mathrm{MJ}}\sqrt{\frac{d+\log(1/\delta)}{\gamma^2 n}}
=
C_{\mathrm{MJ}}\sqrt{\frac{d+\log(2/\rho)}{\gamma^2 n}}.
\end{align*}

\noindent\emph{Step 2 (Empirical PD from population PD).}
Since $\Delta\phi_i\Delta\phi_i^\top\succeq 0$, $\|\Delta\phi_i\|_2\le L_\Delta$ implies
$\lambda_{\max}(\Delta\phi_i\Delta\phi_i^\top)\le L_\Delta^2$ a.s.
Moreover,
\[
\mathbb E[\Delta\phi\Delta\phi^\top]=\Sigma\succeq \lambda I_d
\quad\Longrightarrow\quad
\lambda_{\min}\bigl(\mathbb E[\Sigma_D]\bigr)=\lambda_{\min}(\Sigma)\ge \lambda.
\]
Thus, a standard matrix Chernoff bound yields that if
$n \ge \frac{8L_\Delta^2}{\lambda}\log\bigl(\frac{2d}{\rho}\bigr)$,
then with probability at least $1-\delta$,
\[
\lambda_{\min}(\Sigma_D)\ \ge\ \frac{\lambda}{2}.
\]

\noindent\emph{Step 3 (Convert to $\ell_2$).}
On the intersection of the two events above (which holds with probability at least $1-2\delta=1-\rho$),
\begin{align*}
\|\hat\theta-\theta^\star\|_{\Sigma_D}^2
&=(\hat\theta-\theta^\star)^\top\Sigma_D(\hat\theta-\theta^\star)
\;\ge\;\lambda_{\min}(\Sigma_D)\,\|\hat\theta-\theta^\star\|_2^2
\;\ge\;\frac{\lambda}{2}\,\|\hat\theta-\theta^\star\|_2^2,
\end{align*}
so
\begin{align*}
\|\hat\theta-\theta^\star\|_2
&\le \sqrt{\frac{2}{\lambda}}\ \|\hat\theta-\theta^\star\|_{\Sigma_D}
\;\le\;
\frac{\sqrt{2}}{\sqrt{\lambda}}\,
C_{\mathrm{MJ}}\,
\sqrt{\frac{d+\log(2/\rho)}{\gamma^2 n}}.
\end{align*}
This proves the claim.
\end{proof}

\begin{proposition}[Non-private minimax lower bound for KL-regularized RLHF (Theorem~4.6 in \citep{zhao2024sharp})]
\label{prop:np_lb_from_2411}
Fix a target gap level $\mu\in(0,1/256)$ and $\eta>4$.
Consider the KL-regularized RLHF setting with preference feedback.
Then for any (possibly randomized) algorithm $\mathcal A$ that, given $n$ i.i.d.\ preference samples,
outputs a policy $\hat\pi=\mathcal A(D)$, there exists a KL-regularized preference learning instance
(with two actions, a finite context space, a reward function class $\mathcal R$, and coverage coefficient
of order $O(N_{\mathcal R}(\mu))$) such that achieving suboptimality gap at most $\mu$
requires
\[
n \;=\; \Omega\!\left(\min\left\{\frac{\eta \log N_{\mathcal R}(\mu)}{\mu},\ 
\frac{\log N_{\mathcal R}(\mu)}{\mu^{2}}\right\}\right).
\]
Equivalently, if $n$ is smaller than the above order, then $\mathcal A$ cannot guarantee gap $\le \mu$
uniformly over that problem class.
\end{proposition}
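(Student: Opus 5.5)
The plan is an Assouad-type argument over a hypercube of hard instances. It generalizes the two-point construction used in Step~2--Step~4 of the proof of Theorem~\ref{thm:minimax-lb-main}, with the privacy ingredient removed and many informative contexts instead of one. I take $m$ contexts $x_1,\dots,x_m$ with $d_0$ uniform, action space $\{0,1\}$, and $\pi_0(\cdot\mid x)$ uniform. For $v\in\{\pm1\}^m$ I define the reward
\[
r_v(x_j,1)-r_v(x_j,0)=v_j c .
\]
The reward class is $\mathcal R=\{r_v\}$. I will choose $m\asymp\log N_{\mathcal R}(\mu)$. The class is, up to constants, a $c$-separated hypercube in sup norm. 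By Varshamov--Gilbert its $\mu$-covering number (with $c\gtrsim\mu$) is $2^{\Theta(m)}$, so $m$ and $\log N_{\mathcal R}(\mu)$ can be identified up to constants. The coverage coefficient is then $O(m)$, consistent with the $O(N_{\mathcal R}(\mu))$ allowance in the statement. Preferences are Bradley--Terry, so at $x_j$ the label is $\mathrm{Bern}(\sigma(v_jc))$ and all other records carry no information about $v_j$.

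First, the gap decomposes coordinatewise. By the KL representation used in Step~3 of the proof of Theorem~\ref{thm:minimax-lb-main},
\[
\mathrm{Gap}(\hat\pi;v)=\frac{1}{\eta m}\sum_{j}\mathrm{KL}\big(\hat\pi(\cdot\mid x_j)\,\|\,\pi^\star_v(\cdot\mid x_j)\big).
\]
By the Bernoulli computation \eqref{eq:bern_kl_lc_step3}, each term is at least $\log\cosh(\eta c/2)$ whenever $\hat\pi(1\mid x_j)$ falls on the wrong side of $1/2$. Summing gives
\[
\mathrm{Gap}(\hat\pi;v)\ge \frac{1}{\eta}\log\cosh(\eta c/2)\cdot\frac{1}{m}\sum_j \mathbf 1\{\hat v_j\ne v_j\},
\]
where $\hat v_j$ is the sign read off from $\hat\pi(1\mid x_j)$ and thresholded at $1/2$.

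Second, Assouad's lemma lower bounds the average Hamming error by $\tfrac12\big(1-\max_j \mathrm{TV}(P^n_{v},P^n_{v^{(j)}})\big)$. Here $v^{(j)}$ flips coordinate $j$. The two data laws differ only on records with $x=x_j$, which occur with probability $1/m$. Hence
\[
\mathrm{KL}(P^n_v\,\|\,P^n_{v^{(j)}})=\frac{n}{m}\,\mathrm{KL}\big(\mathrm{Bern}(\sigma(c))\,\|\,\mathrm{Bern}(\sigma(-c))\big)\lesssim \frac{nc^2}{m}.
\]
By Pinsker, if $nc^2\le c_1 m$, the average error is bounded below by a constant, so the worst-case expected gap is $\gtrsim \eta^{-1}\log\cosh(\eta c/2)$.

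Third, I calibrate $c$ against the target $\mu$ in two branches, using $\log\cosh u\asymp\min(u^2,u)$. If $\eta c\lesssim1$, the achievable gap is $\asymp\eta c^2$, so guaranteeing gap $\le\mu$ forces $c^2\gtrsim\mu/\eta$. It also forces $n\gtrsim m/c^2\asymp \eta\log N_{\mathcal R}(\mu)/\mu$. If $\eta c\gtrsim1$, the gap is $\asymp c$, which forces $c\asymp\mu$ and $n\gtrsim m/\mu^2=\log N_{\mathcal R}(\mu)/\mu^2$. Taking whichever branch is consistent with $\eta$ and $\mu$ yields the minimum of the two expressions. The hypotheses $\mu<1/256$ and $\eta>4$ are used to keep $c$ bounded, so that $\sigma(\pm c)$ stays away from $0,1$ and the KL bound $\lesssim c^2$ holds. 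They also place both branches in the admissible range.

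The main obstacle I expect is the bookkeeping that ties $m$ to $\log N_{\mathcal R}(\mu)$. The covering number must be taken at exactly the scale $\mu$ while the separation is $c$, and $c$ differs from $\mu$ in the quadratic branch ($c\asymp\sqrt{\mu/\eta}\gg\mu$). I would handle this by defining $\mathcal R$ as the hypercube class itself and verifying directly that its $\mu$-covering number is $2^{\Theta(m)}$ whenever $c\ge 2\mu$. Constants would absorb the remaining slack.
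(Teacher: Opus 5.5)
Your argument is correct. The paper itself gives no proof of Proposition~\ref{prop:np_lb_from_2411}: it imports the result as Theorem~4.6 of \citet{zhao2024sharp} and uses it as a black box in Step~1 of the proof of Theorem~\ref{thm:minimax-lb-main}. You rebuild it from ingredients the paper already contains. These are the identity $\Delta_\eta(\pi)=\eta^{-1}\mathbb{E}_{x}\bigl[\mathrm{KL}(\pi(\cdot\mid x)\,\|\,\pi^\star(\cdot\mid x))\bigr]$ and the wrong-side Bernoulli bound \eqref{eq:bern_kl_lc_step3}, combined with Assouad over $m$ equiprobable contexts. In effect you use the many-coordinate, non-private version of the two-point construction in Steps~2--4.

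Your route makes the structure transparent. The two branches of the minimum come from $\log\cosh u\asymp\min\{u^2,u\}$, the same mechanism behind the paper's $d/(n^2\varepsilon^2)$ versus $1/(n\varepsilon)$ split. It also shows that, in your argument, $\mu<1/256$ and $\eta>4$ serve only to keep the signal $c$ (and hence the rewards) bounded. The citation buys brevity and the original constants.

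Your explicit class also makes visible that the proposition is existential in $\mathcal R$. Step~1 of the paper plugs in the covering number of the linear class instead. A hypercube like yours is what justifies that substitution, up to the $\log(1/\mu)$ factor: take $m=d$ and $\phi(x_j,1)-\phi(x_j,0)\propto e_j$, subject to the norm constraint on $\theta$.

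Three points need tightening.

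First, your calibration step reads backwards, since $c$ is not forced by the algorithm. Choose the smallest $c$ with $\frac{1}{4\eta}\log\cosh(\eta c/2)>\mu$, so that $c^2\asymp\max\{\mu/\eta,\mu^2\}$. Assouad and Pinsker then show that every algorithm with $n\le c_1 m/c^2$ has worst-case expected gap above $\mu$. This gives $n\gtrsim\min\{\eta m/\mu,\,m/\mu^2\}$.

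Second, the covering bookkeeping you flag as the main obstacle is easy.
\begin{itemize}
\item The direction the stated bound needs, $\log N_{\mathcal R}(\mu)\le m\log 2$, is trivial because $|\mathcal R|=2^m$.
\item For non-vacuity, note that $\log\cosh u\le u$ gives $\frac{1}{4\eta}\log\cosh(\eta c/2)\le c/8$. So the calibrated $c$ exceeds $8\mu$ in both branches.
\item Distinct vertices differ in reward difference by $2c$ at some context, so they are at sup-distance at least $c>2\mu$. Hence $N_{\mathcal R}(\mu)=2^m$ exactly, with no Varshamov--Gilbert needed, even when $c\asymp\sqrt{\mu/\eta}\gg\mu$.
\end{itemize}

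Third, boundedness of $c$ is not needed for the KL step. Indeed $\mathrm{KL}(\mathrm{Bern}(\sigma(c))\,\|\,\mathrm{Bern}(\sigma(-c)))=c\tanh(c/2)\le c^2/2$ for every $c$. With $\pi_0$ uniform, only the records at $x_j$ with two distinct actions are informative, which only helps. Under Assumption~\ref{assump:coverage}, your instance has coverage constant $2$.
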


\begin{lemma}[Pinsker's Inequality]\label{lemma: pinsker inequ} If $\mathbb{P}_{1},\mathbb{P}_{2}$ are two probability measures on a common measurable space $(\Omega,\mathcal{F})$, then it holds that
\begin{equation*}
    \delta(\mathbb{P}_{1},\mathbb{P}_{2}) \leq \sqrt{\frac{1}{2}\text{KL}(\mathbb{P}_{1}\Vert \mathbb{P}_{2})},
\end{equation*}
    where $\delta(\cdot , \cdot)$ is the total variation distance and $\text{KL}(\mathbb{P}_{1}\Vert \mathbb{P}_{2})$ is the Kullback-Leibler divergence.
\end{lemma}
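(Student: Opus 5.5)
The plan is the classical two-step argument: reduce to a two-point (Bernoulli) comparison via the data-processing inequality, and then prove the binary case by elementary calculus. Let $A\in\mathcal F$ be an event attaining (or approaching) the supremum in the definition of total variation, $\delta(\mathbb P_1,\mathbb P_2)=\sup_{A}|\mathbb P_1(A)-\mathbb P_2(A)|$. Write $p=\mathbb P_1(A)$ and $q=\mathbb P_2(A)$. The map $\omega\mapsto\mathbf 1\{\omega\in A\}$ pushes $\mathbb P_1,\mathbb P_2$ forward to $\mathrm{Bern}(p),\mathrm{Bern}(q)$. By the data-processing inequality for KL (equivalently, the log-sum inequality applied to the partition $\{A,A^c\}$),
\[
\mathrm{KL}(\mathbb P_1\Vert\mathbb P_2)\ \ge\ \mathrm{KL}\big(\mathrm{Bern}(p)\Vert\mathrm{Bern}(q)\big).
\]
It therefore suffices to show $\mathrm{KL}(\mathrm{Bern}(p)\Vert\mathrm{Bern}(q))\ge 2(p-q)^2$, since then $|p-q|\le\sqrt{\tfrac12\mathrm{KL}}$, and taking the supremum over $A$ gives the claim. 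If $\mathrm{KL}(\mathbb P_1\Vert\mathbb P_2)=\infty$, there is nothing to prove. If $\mathbb P_1\not\ll\mathbb P_2$, then KL is infinite, so we may assume absolute continuity.

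For the binary inequality, fix $p\in[0,1]$ and define
\[
g(q):=p\log\frac pq+(1-p)\log\frac{1-p}{1-q}-2(p-q)^2,\qquad q\in(0,1).
\]
Then $g(p)=0$, and
\[
g'(q)=-\frac pq+\frac{1-p}{1-q}+4(p-q)=(q-p)\Big(\frac{1}{q(1-q)}-4\Big).
\]
Since $q(1-q)\le 1/4$, the second factor is nonnegative. Hence $g'\le0$ for $q<p$ and $g'\ge0$ for $q>p$, so $q=p$ is a global minimum and $g\ge0$. The boundary cases $q\in\{0,1\}$ give infinite KL unless $p=q$, and then both sides vanish. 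The endpoints $p\in\{0,1\}$ are handled with the convention $0\log0=0$, and the same derivative computation goes through.

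The only step needing care is the reduction: justifying data processing for KL on a general measurable space. I would do this directly via the log-sum inequality. Writing $h=d\mathbb P_1/d\mathbb P_2$ and applying Jensen's inequality to the convex function $t\mapsto t\log t$ under $\mathbb P_2(\cdot\mid A)$ gives $\int_A h\log h\,d\mathbb P_2\ge p\log(p/q)$, and the same bound holds on $A^c$. Summing the two yields the binary lower bound. Everything else is the routine one-variable calculus above.
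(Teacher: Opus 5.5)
Your proof is correct. It is the standard textbook argument: reduce to a two-point comparison by data processing, then show $\mathrm{KL}(\mathrm{Bern}(p)\Vert\mathrm{Bern}(q))\ge 2(p-q)^2$ using the factorization $g'(q)=(q-p)\bigl(\frac{1}{q(1-q)}-4\bigr)$. Your convention $\delta(\mathbb P_1,\mathbb P_2)=\sup_A|\mathbb P_1(A)-\mathbb P_2(A)|$ is the one that matches the constant $\tfrac12$ in the statement, and it agrees with how the paper computes Bernoulli total variation elsewhere.

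The paper gives no proof of this lemma. It only lists it as a standard supporting fact and never invokes it in any argument; the lower bound computes the Bernoulli total variation directly and relies on the DP Le Cam bound. So there is no alternative route to compare against.

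A few details are worth writing out explicitly, though none is a gap.
\begin{itemize}
\item In the Jensen step you condition on $A$, which needs $\mathbb P_2(A)>0$. If $\mathbb P_2(A)=0$, absolute continuity gives $\mathbb P_1(A)=0$, and the term $p\log(p/q)$ is $0$ by the usual convention.
\item The integral $\int h\log h\,d\mathbb P_2$ is well defined in $(-\infty,\infty]$ because $t\log t\ge -1/e$. This is what justifies splitting it over $A$ and $A^c$.
\item For $p\in\{0,1\}$ the minimizer of $g$ lies on the boundary of $(0,1)$. There you use the sign of $g'$ from your factorization (monotonicity on $(0,1)$) together with $g(q)\to 0$ as $q\to p$, rather than an interior critical point.
\end{itemize}
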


\begin{lemma}[(\texorpdfstring{$(\varepsilon,\delta)$}{(ε,δ)}-DP) Le Cam lower bound with coupling \citep{acharya2021differentially}]\label{lem:dp_lecam}
Let $p_1\in\mathrm{co}(P_1)$ and $p_2\in\mathrm{co}(P_2)$ be two distributions on $\mathcal Z^n$. Let $(X,Y)$ be any coupling of $p_1$ and $p_2$ such that $D:=\E[\Ham(X,Y)]<\infty$. Then for any $(\varepsilon,\delta)$-DP hypothesis testing algorithm $\hat\theta$ that outputs $\{1,2\}$, the (minimax) testing risk satisfies
\begin{equation}
\label{eq:acharya_thm1}
P_e(\hat\theta;P_1,P_2)
\ \ge\
\frac12\max\Big\{\,1-\TV(p_1,p_2),\ 0.9e^{-10\varepsilon D}-10D\delta\,\Big\}.
\end{equation}
\end{lemma}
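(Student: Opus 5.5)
The statement is the DP Le Cam inequality of \citet{acharya2021differentially}. I would prove it as the maximum of two separate lower bounds on the Bayes testing error \(P_e = \tfrac12\mathbb{P}_{p_1}(\hat\theta\neq 1)+\tfrac12\mathbb{P}_{p_2}(\hat\theta\neq 2)\), each of which transfers directly to the minimax risk over \(P_1,P_2\). The first term, \(\tfrac12(1-\TV(p_1,p_2))\), is the classical Le Cam bound and needs no privacy. For any test with acceptance region \(E=\{\hat\theta=1\}\), we have \(\mathbb{P}_{p_1}(E^c)+\mathbb{P}_{p_2}(E)\ge 1-|p_1(E)-p_2(E)|\ge 1-\TV(p_1,p_2)\), after averaging over the algorithm's internal randomness. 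Because \(p_j\) lies in the convex hull of \(P_j\), the error under \(p_j\) is an average of errors under members of \(P_j\). Hence the worst-case risk over the two classes is at least the Bayes risk under \((p_1,p_2)\).

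For the privacy term I would use the coupling. Fix the coupling \((X,Y)\) with \(D=\mathbb{E}[\Ham(X,Y)]\). Let \(k=\Ham(x,y)\) for a fixed pair of datasets. Applying group privacy \(k\) times along a path of adjacent datasets gives \(\mathbb{P}(\hat\theta(x)=1)\le e^{k\varepsilon}\mathbb{P}(\hat\theta(y)=1)+k e^{k\varepsilon}\delta\), and the same holds in the reverse direction. Write \(a(x)=\mathbb{P}(\hat\theta(x)=2)\), the error on \(x\) under hypothesis 1, and \(b(y)=\mathbb{P}(\hat\theta(y)=1)\), the error on \(y\) under hypothesis 2. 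Group privacy gives \(1-a(x)\le e^{k\varepsilon}b(y)+ke^{k\varepsilon}\delta\), so \(a(x)+b(y)\ge e^{-k\varepsilon}-k\delta\) pointwise. Taking expectations under the coupling gives \(2P_e\ge \mathbb{E}[e^{-\varepsilon\Ham(X,Y)}]-\delta D\). The remaining task is to turn the expectation of \(e^{-\varepsilon\Ham}\) into something that depends only on \(D\).

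This last step is the main obstacle. Jensen's inequality does give \(\mathbb{E}[e^{-\varepsilon\Ham}]\ge e^{-\varepsilon D}\), but the \(\delta\) term is the problem. The factor \(ke^{k\varepsilon}\delta\) does not average to something linear in \(D\) when \(\Ham\) has heavy tails. I would therefore truncate using Markov's inequality: \(\mathbb{P}(\Ham(X,Y)>10D)\le 0.1\). I would apply the pointwise bound only on the event \(\{\Ham\le 10D\}\), where \(k\le 10D\), which gives \(a(x)+b(y)\ge e^{-10\varepsilon D}-10D\delta\) there. On the complement, I would use only \(a+b\ge 0\). Averaging then gives \(2P_e\ge 0.9\,(e^{-10\varepsilon D}-10De^{10\varepsilon D}\delta)\). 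This is still off by the factor \(e^{10\varepsilon D}\) on \(\delta\).

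To remove that factor, I would use a sharper form of group privacy. Chaining from the \(y\) side, the \(\delta\)-accumulation gives \(\mathbb{P}(\hat\theta(y)=1)\ge e^{-k\varepsilon}\mathbb{P}(\hat\theta(x)=1)-k\delta\), because each step contributes at most \(\delta\) once the multiplier \(e^{-\varepsilon}\le1\) is applied. This yields \(a+b\ge e^{-k\varepsilon}-k\delta\) with no exponential on \(\delta\). Combining this with the truncation gives \(P_e\ge \tfrac12[0.9e^{-10\varepsilon D}-10D\delta]_+\). Together with the first bound, this gives the maximum stated in \eqref{eq:acharya_thm1}.
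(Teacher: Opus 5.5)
Your proof is correct. The paper does not prove this lemma; it imports it from \citet{acharya2021differentially} and uses it as a black box in Step~4 of the lower-bound proof. Your argument is the standard one behind that result. You use the classical Le Cam bound for the first term. For the second term you apply group privacy pointwise along the coupling and then truncate via Markov at level $10D$, which is where the constants $0.9$ and $10$ come from. Your reduction from the mixtures $p_j\in\mathrm{co}(P_j)$ to the worst case over $P_1\cup P_2$ by averaging is also right.

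The ``main obstacle'' you describe is not real, and the write-up contradicts itself there. In your second paragraph you already derived the pointwise bound $a(x)+b(y)\ge e^{-k\varepsilon}-k\delta$, with $k=\Ham(x,y)$. Dividing $1-a(x)\le e^{k\varepsilon}b(y)+ke^{k\varepsilon}\delta$ by $e^{k\varepsilon}$ is exactly what removes the exponential on $\delta$. So the $\delta$-term is linear in $\Ham$, and its expectation under the coupling is exactly $\delta D$. Jensen's inequality for the convex map $h\mapsto e^{-\varepsilon h}$ then gives directly
\[
\mathbb{P}_{p_1}(\hat\theta=2)+\mathbb{P}_{p_2}(\hat\theta=1)\;\ge\; e^{-\varepsilon D}-\delta D\;\ge\;0.9\,e^{-10\varepsilon D}-10D\delta ,
\]
which is stronger than the stated term.

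It follows that three steps are redundant:
\begin{itemize}
\item the Markov truncation;
\item the sentence claiming a leftover factor $e^{10\varepsilon D}$ on $\delta$, which contradicts the line just before it;
\item the ``sharper group privacy'' repair, which only re-derives the bound you already had.
\end{itemize}
The proof can be cut to three steps: classical Le Cam, the pointwise group-privacy bound, and Jensen.
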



\end{document}